\newcommand{\sE}[0]{\mathscr{E}}
\newcommand{\E}[0]{\mathbb{E}}
\newcommand{\N}[0]{\mathbb{N}}
\newcommand{\Pj}[0]{\mathbb{P}}
\newcommand{\R}[0]{\mathbb{R}}
\newcommand{\cR}[0]{\mathcal{R}}
\newcommand{\bS}[0]{\mathbb{S}}
\newcommand{\one}[0]{\mathbbm{1}}
\newcommand{\al}[0]{\alpha}
\newcommand{\be}[0]{\beta}
\newcommand{\ga}[0]{\gamma}
\newcommand{\Ga}[0]{\Gamma}
\newcommand{\de}[0]{\delta}
\newcommand{\De}[0]{\Delta}
\newcommand{\ep}[0]{\varepsilon}
\newcommand{\la}[0]{\lambda}
\newcommand{\ph}[0]{\varphi}
\newcommand{\Te}[0]{\Theta}
\newcommand{\Om}[0]{\Omega}
\newcommand{\si}[0]{\sigma}
\newcommand{\Si}[0]{\Sigma}
\newcommand{\nin}[0]{\not\in}
\newcommand{\sub}[0]{\subset}
\newcommand{\subeq}[0]{\subseteq}
\newcommand{\iy}[0]{\infty}
\newcommand{\rc}[1]{\frac{1}{#1}}
\newcommand{\prc}[1]{\pa{\rc{#1}}}
\newcommand{\fc}[2]{\frac{#1}{#2}}
\newcommand{\sfc}[2]{\sqrt{\frac{#1}{#2}}}
\newcommand{\pf}[2]{\pa{\frac{#1}{#2}}}%Shortcut for fraction with parentheses
\newcommand{\dd}[2]{\frac{d #1}{d #2}}
\newcommand{\ddd}[1]{\frac{d}{d #1}}
\newcommand{\nb}[0]{\nabla}
\newcommand{\dy}{\,dy}
\newcommand{\dx}{\,dx}
\newcommand{\ab}[1]{\left| {#1} \right|}
\newcommand{\an}[1]{\left\langle {#1}\right\rangle}
\newcommand{\ba}[1]{\left[ {#1} \right]}
\newcommand{\bc}[1]{\left\{ {#1} \right\}}
\newcommand{\pa}[1]{\left( {#1} \right)}
\newcommand{\ve}[1]{\left\Vert {#1}\right\Vert}
\newcommand{\set}[2]{\left\{{#1}:{#2}\right\}}
\newcommand{\ol}[1]{\overline{#1}}
\newcommand{\ub}[2]{\underbrace{#1}_{#2}}
\newcommand{\wt}[1]{\widetilde{#1}}
\newcommand{\wh}[1]{\widehat{#1}}
\newcommand{\Cov}{\operatorname{Cov}}
\newcommand{\diam}{\operatorname{diam}}
\newcommand{\Ent}{\operatorname{Ent}}
\newcommand{\KL}[0]{\operatorname{KL}}
\newcommand{\poly}{\operatorname{poly}}
\newcommand{\TV}[0]{\operatorname{TV}}
\providecommand{\cal}[1]{\mathcal{#1}}
\renewcommand{\cal}[1]{\mathcal{#1}}
\newcommand{\pull}[9]{
#1\ar@/_/[ddr]_{#2} \ar@{.>}[rd]^{#3} \ar@/^/[rrd]^{#4} & &\\
& #5\ar[r]^{#6}\ar[d]^{#8} &#7\ar[d]^{#9} \\}
\newcommand{\cmp}[9]{
\xymatrix{
#1 \ar[r]^{#4}{#5} \ar@/_2pc/[rr]^{#8}_{#9} & #2 \ar[r]^{#6}_{#7} & #3
}
}
\newcommand{\ha}[1]{\ar@{^(->}[#1]}
\newcommand{\ls}[1]{\ar@{-}[#1]}
\newcommand{\sj}[1]{\ar@{->>}[#1]}
\newcommand{\aq}[1]{\ar@{=}[#1]}
\newcommand{\acir}[1]{\ar@{}[#1]|-{\textstyle{\circlearrowright}}}
\newcommand{\acil}[1]{\ar@{}[#1]|-{\textstyle{\circlearrowleft}}}
\newcommand{\ard}[1]{\ar@{.>}[#1]}
\newcommand{\mt}[1]{\ar@{|->}[#1]}
\newcommand{\inm}[1]{\ar@{}[#1]|-{\in}}
\newcommand{\inr}{\ar@{}[d]|-{\rotatebox[origin=c]{-90}{$\in$}}}
\newcommand{\inl}{\ar@{}[u]|-{\rotatebox[origin=c]{90}{$\in$}}}
\newcommand{\sumo}[2]{\sum_{#1=1}^{#2}}
\newcommand{\sumz}[2]{\sum_{#1=0}^{#2}}
\newcommand{\beq}[1]{\begin{equation}\llabel{#1}}
\newcommand{\eeq}[0]{\end{equation}}
\newcommand{\bal}[0]{\begin{align*}}
\newcommand{\eal}[0]{\end{align*}}%this doesn't work; i don't know why
\newcommand{\ban}[0]{\begin{align}}
\newcommand{\ean}[0]{\end{align}}
\newcommand{\fixme}[1]{{\color{red}#1}}
\newcommand{\llabel}[1]{\label{#1}\text{\fixme{\tiny#1}}}
\newcommand{\arxiv}[1]{\url{http://www.arxiv.org/abs/#1}}
\DeclareFontFamily{U}{wncy}{}
    \DeclareFontShape{U}{wncy}{m}{n}{<->wncyr10}{}
    \DeclareSymbolFont{mcy}{U}{wncy}{m}{n}
    \DeclareMathSymbol{\Sh}{\mathord}{mcy}{"58} 
\newtheorem{thm}{Theorem}[section]
\newtheorem*{thm*}{Theorem}
\newtheorem*{clm*}{Claim}
\newtheorem*{conj*}{Conjecture}
\newtheorem{cor}[thm]{Corollary}
\newtheorem{lem}[thm]{Lemma}
\newtheorem*{lem*}{Lemma}
\newtheorem*{prb*}{Problem}
\newtheorem{asm}{Assumption}
\newtheorem*{ax*}{Axiom}
\newtheorem{df}[thm]{Definition}
\newtheorem*{df*}{Definition}
\newtheorem*{ex*}{Example}
\newtheorem*{pos*}{Postulate}
\newtheorem*{pr*}{Proposition}
\newtheorem*{qu*}{Question}
\newtheorem*{rem*}{Remark}
\newcommand{\citep}[1]{\cite{#1}}
\title{Convergence of score-based generative modeling\\ 
for general data distributions}
\author[1]{Holden Lee}
\author[2]{Jianfeng Lu}
\author[2]{Yixin Tan}
\affil[1]{Johns Hopkins University}
\affil[2]{Duke University}
\begin{document}

\newcommand{\CP}[0]{C_{\textup P}}
\newcommand{\CLS}[0]{C_{\textup{LS}}}
\newcommand{\tref}[1]{\text{\ref{#1}}}
\newcommand{\pdata}[0]{P_{\textup{data}}}
\newcommand{\ppr}[0]{p_{\textup{prior}}}
\newcommand{\etv}[0]{\ep_{\TV}}
\newcommand{\ew}[0]{\ep_{\textup{W}}}
\newcommand{\echi}[0]{\ep_{\chi}}
\newcommand{\eik}[0]{\ep_{\iy,T-t_k}}
\newcommand{\ek}[0]{\ep_{T-t_k}}

\definecolor{purple}{rgb}{0.5, 0.0, 0.5}
\newcommand{\hlnote}[1]{\textcolor{purple}{\textbf{HL:} #1}}

\newcommand{\yt}[1]{\textcolor{purple}{\textbf{YT:} #1}}

\newcommand{\jl}[1]{\textcolor{blue}{\textbf{JL:} #1}}

\renewcommand{\cR}[0]{\mathcal{R}}

\newcommand{\Bad}[0]{B}
\newcommand{\Good}[0]{A}
\newcommand{\smf}[0]{L}
%{\beta_f}
\newcommand{\smg}[0]{\beta_g} %DELETE
\newcommand{\sms}[0]{L_s}
%{\beta_s}
\newcommand{\score}[0]{s}

\newcommand{\Mkh}[0]{\varepsilon_{kh}}
\newcommand{\Gkht}[0]{G_{t_k,t}}
\newcommand{\FIqp}[0]{\mathscr{E}_{p_t}\pf{q_t}{p_t}}
\newcommand{\chiqpo}[0]{\chi^2(q_t||p_t)+1}
\newcommand{\Edz}[0]{\E\ba{\ve{z_t-z_{kh}}^2\psi_t(z_t)}}
\newcommand{\Edlnp}[0]{\E\ba{\ve{\nb \ln p_{kh}(z_t) - \nb \ln p_t(z_t)}^2\psi_t(z_t)}}
\newcommand{\KLqp}[0]{\KL(q_t||p_t)}

\newcommand{\Etkh}[0]{E}

\newcommand{\ncrs}[0]{n^{\textup{coarse}}}
\newcommand{\nfn}[0]{n^{\textup{fine}}}

\maketitle

\begin{abstract}
Score-based generative modeling (SGM) has grown to be a hugely successful method for learning to generate samples from complex data distributions such as that of images and audio. It is based on evolving an SDE that transforms white noise into a sample from the learned distribution, using estimates of the \emph{score function}, or gradient log-pdf. Previous convergence analyses for these methods have suffered either from strong assumptions on the data distribution or exponential dependencies, and hence fail to give efficient guarantees for the multimodal and non-smooth distributions that arise in practice and for which good empirical performance is observed.
We consider a popular kind of SGM---denoising diffusion models---and give polynomial convergence guarantees for general data distributions, with no assumptions related to functional inequalities or smoothness. Assuming $L^2$-accurate score estimates, we obtain Wasserstein distance guarantees for \emph{any} distribution of bounded support or sufficiently decaying tails, as well as TV guarantees for distributions with further smoothness assumptions.
\end{abstract}

\section{Introduction}
%\fixme{[General notes about SBM's]}

%As a method to learn and generate new samples from a probability distribution
Diffusion models have gained huge popularity in recent years in machine learning, as a method to learn and generate new samples from a data distribution. Score-based generative modeling (SGM), as a particular kind of diffusion model, uses learned score functions (gradients of the log-pdf) to transform white noise to the data distribution through following a stochatic differential equation. While SGM has achieved state-of-the-art performance for artificial image and audio generation \citep{song2019generative, dathathri2019plug, grathwohl2019your, song2020improved, song2020score, meng2021sdedit, song2021solving, song2021maximum,jing2022subspace},
including being a key component of text-to-image systems~\citep{ramesh2022hierarchical}, 
our theoretical understanding of these models is still nascent. 

In particular, basic questions on the convergence of the generated distribution to the data distribution remain unanswered.
Recent theoretical work on SGM has attempted to answer these questions \citep{ de2021diffusion,lee2022convergence,de2022convergence}, but they either suffer from exponential dependence on parameters or rely on strong assumptions on the data distribution such as %log-concavity
functional inequalities or smoothness, which are rarely satisfied in practical situations. 
For example, considering the hallmark application of generating images from text, we expect the distribution of images to be (a) multimodal, and hence not satisfying functional inequalities with reasonable constants, and (b) supported on lower-dimensional manifolds, and hence not smooth.
%Indeed, data distributions---such as those arising in the hallmark application of generating images from text---are often multimodal and hence do not satisfy functional inequalities with reasonable constants, and supported on lower-dimensional manifolds and hence not smooth. 
However, SGM still performs remarkably well in these settings. 
%This is in contrast to 
Indeed, this is one relative advantage to other approaches to generative modeling such as generative adversarial networks, which can struggle to learn multimodal distributions~\citep{arora2018gans}.
%requiring the data distribution to satisfy functional inequalities precludes efficient guarantees for multimodal distributions 
%Indeed, one of the most successful applications of diffusion model is conditional generation of images based on given text, for which the data distribution is multimodal and hence does not fit into the existing theory. 

In this work, we aim to develop theoretical convergence guarantees with polynomial complexity for SGM under minimal data assumptions.
%a more general class of distributions. 

\subsection{Problem setting}

%[Introduce DDPM and exponential integrator]
Given samples from a data distribution $\pdata$, the problem of generative modeling is to learn the distribution in a way that allows generation of new samples. 
A general framework for many score-based generative models is where noise is injected into %a data distribution $\pdata$ 
$\pdata$
via a forward SDE \citep{song2020score} 
\begin{align}\label{eq:forward_sde}
    d\wt x_t &= f(\wt x_t, t)\,dt + g(t)\, dw_t, \quad t\in [0,T],
\end{align}
where $\wt x_0\sim \wt P_0:=\pdata$. Let $\wt p_t$ denote the density of $\wt x_t$. Remarkably, $\wt x_t$ also satisfies a reverse-time SDE,
\begin{align}\label{eq:revSDE}
    d\wt x_t &= [f(\wt x_t, t) - g(t)^2 \nb \ln \wt p_t(\wt x_t)]\,dt + g(t) \,d\wt w_t, \quad t\in [0,T],
\end{align}
where $\wt w_t$ is a backward Brownian motion~\citep{anderson1982reverse}. Because the forward process transforms the data distribution to noise, the hope is to use the backwards process to transform noise into samples.

In practice, when we only have sample access to $\pdata$, the score function $\nb \ln \wt p_t$ is not available. A key mechanism behind SGM is that the score function is learnable from data, through empirically minimizing a de-noising objective evaluated at noisy samples $\wt x_t$~\citep{vincent2011connection}. The samples $\wt x_t$ are obtained by evolving the forward SDE starting from the data samples $\wt x_0$, and the optimization is done within an expressive function class such as neural networks.
%Of course, in practice the score function $\nb \ln \wt p_t$ is not available, and and has to be learned from empirical samples $\wt x_t$ from the forward SDE starting from the data samples. Denoising auto-encoders are often used for the score function estimate, which will 
% This will result in a guarantee that the $L^2$ error $\E_{\wt p_t}
% [\ve{\nb \ln \wt p_t(x) - s(x,t)}^2]$ is small \citep{vincent2011connection}.
If the score function is successfully approximated in this way, then the $L^2$-error $\E_{\wt p_t}
[\ve{\nb \ln \wt p_t(x) - s(x,t)}^2]$ will be small.
The natural question is then the following:
\begin{quote}
   Given $L^2$-error bounds of the score function, how close is the distribution generated by \eqref{eq:revSDE} (with score estimate $s(x,t)$ in place of $\nb \ln \wt p_t$, and appropriate discretization) to the data distribution $\pdata$?
\end{quote}
We note it is more realistic to consider $L^2$ rather than $L^\iy$-error, and this makes the analysis more challenging. 
Indeed, prior work on %the extensive literature on 
%durmus2017nonasymptotic, cheng2018convergence, cheng2018underdamped, dalalyan2017theoretical, dalalyan2019user, majka2020nonasymptotic,
Langevin Monte Carlo~\citep{ erdogdu2021convergence} and related sampling algorithms only apply when the score function is known exactly, or with suitable modification, known up to $L^\iy$-error. $L^2$-error has a genuinely different effect from $L^\iy$-error, as it can cause the stationary distribution for Langevin Monte Carlo to be arbitrarily diffferent~\citep{lee2022convergence}, necessitating a ``medium-time" analysis. 
%can be a starting point for the analysis, we note that a key difference in the SGM setting is that the score function is only assumed to be accurate in $L^2$, rather than $L^\iy$ error. 

In addition, we hope to obtain a result with as few structural assumptions as possible on $\pdata$, so that it can be useful in realistic scenarios where SGM is applied.

\subsection{Prior work on convergence guarantees}

%SGM's have recently attracted theoretical attention.
%The key problem we consider is convergence of the generated distribution to the data distribution under a $L^2$-accurate score function. 
We highlight two recent works which make progress on this problem.  
\cite{lee2022convergence} are the first to give polynomial convergence guarantees in TV distance under $L^2$-accurate score for a reasonable family of distributions. They introduce a framework to reduce the analysis under $L^2$-accurate score to $L^\iy$-accurate score. However, they rely on the data distribution satisfying smoothness conditions and a log-Sobolev inequality---a strong assumption which essentially limits the guarantees to unimodal distributions. %---and give convergence in TV distance. %work under the assumption of an $\ep$-accurate 

\cite{de2022convergence} instead make minimal data assumptions, giving convergence in Wasserstein distance for distributions with bounded support $\cal M$. In particular, this covers the case of distributions supported on lower-dimensional manifolds, where guarantees in TV distance are unattainable.
However, for general distributions, their guarantees have exponential dependence on the diameter of $\cal M$ and the inverse of the desired error ($\exp(O(\diam(\cal M)^2/\ep))$), and for smooth distributions, an improved, but still exponential dependence on the growth rate of the Hessian $\nb^2\ln \wt p_t$ as the noise approaches~0 ($\exp(\wt O(\Ga))$ for distributions with $\ve{\nb^2\ln \wt p_t}\le \Ga/\si_t^2$).
%However, their guarantees have dependence $\exp(O(\diam(\cal M)^2/\ep))$ for general distributions, and an improved, but still exponential dependence $\exp(\wt O(\Ga))$ for distributions with Hessian bounded by $\Ga/\si_t^2$.
%unclear...
%\hlnote{One strength of their analysis is that they can deal with a score estimate that's blowing up as $t\to 0$, by decomposing into tangent and normal directions to the manifold, while our bounds get worse as $t\to 0$.}

We note that other works also analyze the generalization error  of a learned score estimate \citep{block2020generative,de2022convergence}. This is an important question because without further assumptions, learning an $L^2$-accurate score estimate requires a number of samples exponential in the dimension. As this is beyond the scope of our paper, we assume that an $L^2$-accurate score estimate is obtainable.

%\hlnote{About concurrent and independent work...}\jl{added}

\subsection{Our contributions}

In this work, we analyze convergence in the most general setting of distributions of bounded support, as in~\cite{de2022convergence}. We give Wasserstein bounds for \emph{any} distribution of bounded support (or sufficiently decaying tails), and TV bounds for distributions under smoothness assumptions, that are polynomial in all parameters, and do not rely on the data distribution satisfying any functional inequality. This gives theoretical grounding to the empirical success of SGM on data distributions that are often multimodal and non-smooth.

We streamline the $\chi^2$-based analysis of~\cite{lee2022convergence}, with significant changes as to completely remove the use of functional inequalities. In particular, the biggest challenge---and our key improvement---is to bound a certain KL-divergence without reliance on a global functional inequality. For this, we prove a key lemma that %under mild smoothness and tail assumptions, distributions that are close in TV distance have score functions that are close in $L^2$. %which allows us to massage the distribution into a nicer one.
distributions which are close in $\chi^2$-divergence have score functions that are close in $L^2$ (which may be of independent interest), and then a structural result that the distributions arising from the diffusion process can be slightly modified as to satisfy the desired inequality, through decomposition into distributions that do satisfy a log-Sobolev inequality.
%(as mixtures of distributions that do satisfy a log-Sobolev inequality).

Upon finishing our paper, we learned of a concurrent and independent work \citep{chen2022sampling} which obtained theoretical guarantees for score-based generative modeling under similarly general assumptions on the data distribution. 
We note that although our bounds are obtained under similar assumptions %with \cite{chen2022sampling} 
(with our assumption of the score estimate accuracy slightly weaker than theirs), our proof techniques are quite different. Following the ``bad set'' idea from \cite{lee2022convergence}, we derived a change-of-measure inequality with Theorem~\ref{t:framework}, while the analysis in \cite{chen2022sampling} is based on the Girsanov approach.

\section{Main results}

To state %the problem more precisely
our results, we will consider a specific type of SGM called denoising diffusion probabilistic modeling (DDPM) \citep{ho2020denoising}, where in the forward SDE~\eqref{eq:forward_sde}, $f(x,t) = -\rc 2 g(t)^2x$ for some non-decreasing function $g$ to be chosen. The forward process is an Ornstein-Uhlenbeck process with time rescaling: $\wt x_t$ has the same distribution as 
\begin{align}
\label{e:OU}
& m_t \wt x_0 + \si_t z, 
\text{ where } \nonumber \\
& m_t = \exp\ba{-\rc 2\int_0^t g(s)^2\,ds}
, \, 
\si_t^2 = 1-\exp\ba{-\int_0^t g(s)^2\,ds},
\text{ and }z\sim N(0,I).
\end{align}
%\hlnote{$\si_t^2\le \max\{t,1\}$ when $g\equiv 1$}
Given an estimate score function $s(x,t)$ approximating $\nb \ln \wt p_t(x)$, we can simulate the reverse process (reparameterizing $t\mapsfrom T-t$ and denoting $p_{t}:=\wt p_{T-t}$) 
\begin{align}
\label{e:reverse}
    dx_t &= \rc 2 g(T-t)^2\pa{x_t + 2\nb \ln p_t(x_t)}\,dt + g(T-t)\,dw_t
\end{align}
with the exponential integrator discretization~\citep{zhang2022fast}, where $h_k = t_{k+1}-t_k$ and $\eta_{k+1}\sim N(0,I_d)$:
\begin{align}
\label{e:ei1}
    &z_{t_{k+1}} =
    z_{t_k} + \gamma_{1,k} (z_{t_k} + 2s(T-t_k, z_{t_k})) + \sqrt{\ga_{2,k}}\cdot  \eta_{k+1},\\
    &\text{where }
    \ga_{1,k} = \exp\ba{\rc2 G_{t_k,t_{k+1}}} - 1, \,
    \ga_{2,k} = \exp\ba{G_{t_k,t_{k+1}}}- 1, \text{ and}\,
    G_{t',t} := \int_{t'}^t g(T-s)^2 \,ds.
    \label{e:ei2}
\end{align}
We initialize $z_0$ with a prior distribution that approximates $p_0=\wt p_T$ for sufficiently large $T$: 
\begin{align}
\label{e:ppr}
z_0\sim q_0=\ppr :&=  N(0, %(1-e^{-\int_0^tg(s)^2\,ds})
\si_T^2 I_d) \approx N(0,I_d).
\end{align}
While we focus on DDPM, we note that the continuous process underlying DDPM is equivalent to that of score-matching Langevin diffusion (SMLD) under reparameterization in time and space (see \cite[\S C.2]{lee2022convergence}). We will further take $g\equiv 1$ for convenience in stating our results. %\hlnote{Alternatively, we can take the step size to be constant and $g$ to vary. In~\cite{song2020score}, $g(t) = \sqrt{\ol\be_{\min} + t(\ol\be_{\max} - \ol\be_{\min})}$. $g(t)=\sqrt t$ with constant step size corresponds to step size $h_k = \Te(1/t_k^2)$.}

Our goal is to obtain a
quantitative guarantee for the distance between the distribution $q_{t_K}$ for $z_{t_K}$ (for appropriate $t_K\approx T$) and $\pdata$, under a $L^2$-score error guarantee. In the following, we assume a sequence of discretization points $0=t_0<t_1<\cdots <t_K\le T$ has been chosen. 
%[Remark on equivalence with SMLD under reparameterization]

\begin{asm}[$L^2$ score error]\label{a:score}
%Let $\pdata$ be a given probability measure on $\R^d$ and let $p_t$ be the density [after time $t$]
%Let $p_t := \wt p_{T-t}$ for $t\in [0,T)$. 
For any $t\in \{T-t_0,\ldots, T-t_K\}$, the error in the score estimate is bounded in $L^2(\wt p_{t})$:
    \[
    \ve{\nb \ln \wt p_t-s(\cdot, t)}_{L^2(\wt p_t)}^2=
    \E_{\wt p_t}[\ve{\nb \ln \wt p_t(x) - s(x,t)}^2]\le \ep_t^2:= \fc{\ep_\si^2}{\si_t^4}.
    \]
    %where $\ep_t = \fc{\ep^2}{\si_t^2}$. 
%\hlnote{How should we let $\ep$ depend on $t$?}
\end{asm}
We note that the gradient $\nb \ln \wt p_t$ grows as $\rc{\si_t^2}$ as $t\to 0$, so this is a reasonable assumption, and quantitatively weaker than a uniform bound over $t$.
% \jl{move the sentence to a better position}
%A reasonable assumption is that the gradient $\nb \ln p_t$ grows as $\rc{\si_t^2}$ as $t\to 0$, in which case we can expect that $\ep_t^2 = \fc{\ep_0^2}{\si_t^4}$.

% \begin{asm}[Tail bound]
% \label{a:tail}
% $R:[0,1]\to [0,\iy)$ is a function such that $\pdata(B_{R(\ep)}(0))\ge 1-\ep$.
% \end{asm}
% By taking $R$ to be a constant function, this contains the assumption of bounded support as a special case.
\begin{asm}[Bounded support]
\label{a:bd}
$\pdata$ is supported on $B_R(0):=\set{x\in \R^d}{\ve{x}\le R}$.
\end{asm}
For simplicity, we assume bounded support when stating our main theorems, but note that our results generalize to distributions with sufficiently fast power decay.
In the application of image generation, pixel values are bounded, so Assumption~\ref{a:bd} is satisfied with $R$ typically on the order of $\sqrt d$.

These are the only assumptions we need to obtain a polynomial complexity guarantee. % We also introduce some refined assumptions that can give better dependences and further results. %\hlnote{Also need bound on $L_s$}
%We consider the following assumption, which is Assumption A.6 in \cite{de2022convergence}. 
We also consider the following stronger smoothness assumption, which is Assumption A.6 in \cite{de2022convergence} and will give better dependencies. Note that \cite[Theorem I.8]{de2022convergence} shows a (nonuniform) version of Assumption~\ref{a:smoothness} holds when $p_0$ is a smooth density on a convex submanifold. 
\begin{asm}\label{a:smoothness}
The following bound of the Hessian of the log-pdf holds for any $t>0$ and $x$:
\begin{align*}
    \ve{\nb^2 \ln p_t(x)}
    &\le \fc{C}{\si_t^2},
\end{align*}
for some constant $C>0$.
\end{asm}
%Note \cite[Theorem I.8]{de2022convergence} 
%shows that if $p_0$ is a smooth density with respect to the Hausdorff measure on a convex submanifold $\mathcal M\subeq \R^d$, then $\ve{\nb\ln p_t(x)} \le \fc{C_x}{\si_t^2}$ with a constant possibly depending on $x$, though Assumption~\ref{a:smoothness} requires a uniform-in-space bound. 
Finally, the following smoothness assumption on $\wt p_0$ will allow us to obtain TV guarantees.
\begin{asm}\label{a:smooth0}
$\pdata$ admits a density $\wt p_0\propto e^{-V(x)}$ where $V(x)$ is $L$-smooth. % on the interior of $B_R(0)$.
\end{asm}
We are now ready to state our main theorems.
\begin{algorithm}[h!]
\begin{algorithmic}
\State INPUT: Time $T$; discretization points $0=t_0<t_1<\cdots<t_N\le T$; score estimates $s(\cdot, T-t_k)$; radius $R$; function $g$ (default: $g\equiv 1$)
\State Draw $z_0\sim \ppr$ from the prior distribution $\ppr$ given by~\eqref{e:ppr}. %\fixme{... for SMLD, ... for DDPM}
\For{$k$ from $1$ to $N$} %\jl{I guess $m$ should be $k$ here?}
    \State Compute $z_{t_k}$ from $z_{t_{k-1}}$ using~\eqref{e:ei1}. %, with $g\equiv 1$.
\EndFor
\State Let $\wh z_{t_N} = z_{t_N}$ if $z_{t_N}\in B_R(0)$; otherwise, let $\wh z_{t_N}=0$.
%OUTPUT: If $z_{t_N}\in B_R(0)$, then return $z_{t_N}$; otherwise, return 0.
\end{algorithmic}
 \caption{DDPM with exponential integrator~\citep{song2020score,zhang2022fast}}
 \label{a:ddpm}
\end{algorithm}

%\fixme{Talk about ``surprising" nature of this question.}

\begin{thm}[Wasserstein+TV error for distributions with bounded support]
\label{t:main-tv-w2}
Suppose that Assumption~\ref{a:score} and~\ref{a:bd} hold with $R\ge \sqrt d$. Then there is a sequence of discretization points $0=t_0<t_1<\cdots <t_N<T$ with $N=O(\poly(d,R,1/\etv,1/\ew))$ such that if $\ep_\si = \wt O\pf{\etv^{6.5} \ew^5}{R^9d^{2.25}}$, then the distribution $q_{t_N}$ of the output $z_{t_N}$ of DDPM is $\etv$-close in TV distance to a distribution that is $\ep_W$ in $W_2$-distance from $\pdata$.
If in addition Assumption~\ref{a:smoothness} holds with $C\ge R^2$, it suffices for $\ep_\si = \wt O\pf{\etv^4}{C^2d}$ (note that the $\wt O(\cdot)$ hides logarithmic dependence on $\ew$).
\end{thm}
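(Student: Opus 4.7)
The plan is to invoke the framework theorem (Theorem~\ref{t:framework}) that the authors obtain via the ``bad set'' change-of-measure strategy, then bound three sources of error: the initialization gap at time $T$, the cumulative per-step discretization-plus-score error over $[0,t_N]$, and the early-termination gap at time $t_N$. Writing $p_t:=\wt p_{T-t}$ for the target reverse marginals and $q_t$ for the law of $z_t$, the final guarantee will follow by bounding $\TV(q_{t_N},p_{t_N}) \le \etv$ and $W_2(p_{t_N},\pdata)\le \ew$; the latter is immediate from the explicit form $\wt x_s = m_s\wt x_0 + \si_s z$ together with bounded support, while $\TV(\ppr,p_0) = \wt O(e^{-T/2})$ is standard OU contraction. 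Thus both boundary contributions can be absorbed by taking $T = O(\log(Rd/\ew))$ and $T-t_N$ appropriately small.

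Within each step $[t_k,t_{k+1}]$, I would apply a Girsanov-style comparison between the true reverse SDE and the exponential-integrator scheme with estimated score. The integrand $\ve{x_t + 2\nb\ln p_t(x_t) - z_{t_k} - 2s(z_{t_k},T-t_k)}^2$ splits into a spatial-discretization piece $\ve{x_t-z_{t_k}}^2$, a score-time-discretization piece $\ve{\nb\ln p_t(x_t) - \nb\ln p_{t_k}(x_t)}^2$, and a score-estimation piece $\ve{\nb\ln p_{t_k}(z_{t_k}) - s(z_{t_k},T-t_k)}^2$, plus cross terms. The first two are handled by regularity of the OU semigroup and give factors of order $\si_{T-t}^{-2}$ that blow up near the terminal time, which dictates that the step schedule must shrink as $t_k\to T$. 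The third piece is the subtle one: the assumed $L^2$ bound is under $p_t$ (with the $\fc{\ep_\si^2}{\si_t^4}$ scaling of Assumption~\ref{a:score}), yet the expectation here is taken under the simulated law $q_t$; the bad-set framework encapsulates the pointwise-good/low-probability-bad split, and reduces matters to bounding $q_t(\text{bad})$ in terms of $p_t(\text{bad})$.

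This last bound is the main obstacle and the technical heart of the proof: classically one would invoke a log-Sobolev inequality on $p_t$ to translate between $p_t$ and $q_t$, but $\pdata$ is allowed to be multimodal so no global LSI with reasonable constant is available. I would follow the authors' two structural inputs: first, the key lemma that distributions close in $\chi^2$-divergence have $L^2$-close scores, used to propagate $\chi^2$-closeness along the reverse process; second, the decomposition of $p_t$ (as a Gaussian mixture $\EE_{\wt x_0\sim\pdata}\ba{N(m_t\wt x_0,\si_t^2 I)}$, possibly after a small modification) into components that individually satisfy a log-Sobolev inequality with constant $\si_t^2$. Applying LSI piece-by-piece over this decomposition and recombining the contributions via the $\chi^2$-score inequality yields the required KL control globally, without any functional inequality on $\pdata$ itself.

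Finally, iterating the per-step bound across $k=0,\ldots,N-1$ yields a cumulative $\TV(q_{t_N},p_{t_N}) \le \etv$ that is polynomial in $d$, $R$, $1/\etv$, and $1/\ew$, provided $\ep_\si$ scales as stated. Combining with the boundary estimates above and the truncation to $B_R(0)$ (which upgrades TV to $W_2$ via $W_2 \le 2R\sqrt{\TV}$ for bounded-support distributions) gives the main $(\etv,\ew)$ guarantee. Under the additional Assumption~\ref{a:smoothness}, the spatial variation of the score is controlled directly by $\ve{\nb\ln p_t(x)-\nb\ln p_t(y)} \le \fc{C}{\si_t^2}\ve{x-y}$; this allows one to forgo the most expensive invocation of the change-of-measure step and tighten the required score accuracy to the stated $\wt O\pf{\etv^4}{C^2 d}$.
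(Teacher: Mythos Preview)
Your high-level outline---the bad-set framework (Theorem~\ref{t:framework}), three error sources, early termination via $W_2(\wt p_\de,\pdata)\le\si_\de$, and the LSI-mixture decomposition for KL control---matches the paper's architecture. But several of the moving parts are misassigned in ways that would cause trouble if you tried to execute the argument.

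First, the per-step comparison is not Girsanov. The paper tracks the differential inequality~\eqref{e:d-chi2} for $\chi^2(q_t\|p_t)$ and bounds the drift via Young's inequality (Lemma~\ref{l:inp-young}), producing the four error quantities $K_z,K_V,K_{\De V},K$ of~\eqref{e:Kz}--\eqref{e:K}. The authors explicitly contrast this with the Girsanov route; the $\chi^2$ approach is what makes Theorem~\ref{t:framework} applicable, since it requires $\chi^2(\ol q_n\|p_n)$ bounds along the coupled process, not a path-space KL. Relatedly, the initialization needs $\chi^2(\ppr\|p_0)$ small (Lemma~\ref{l:K-chi}), not merely TV.

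Second, you have the role of the $\chi^2$-to-score perturbation lemma (Lemma~\ref{lem:l2_score_error-chi}) backwards. It is not used to ``propagate $\chi^2$-closeness along the reverse process.'' The KL-covering step (Lemma~\ref{l:KL-covering}) replaces $\pdata$ by a slightly modified $\wt P_0$ so that the KL bound~\eqref{e:good-KL-ineq} needed to control $K=\KL(\psi_tq_t\|p_t)$ holds; but the score estimate $s$ was assumed accurate against $\pdata$, not $\wt P_0$. The perturbation lemma is what certifies that $s$ is still $L^2$-accurate for the \emph{perturbed} score, so that (via Lemma~\ref{lem:truncate_target}) the bad-set probabilities stay small. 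The propagation of $\chi^2$ along the reverse process is done separately, by the $L^\iy$ analysis of Theorem~\ref{t:ddpm-liy} and Corollaries~\ref{c:liy-bdd}/\ref{c:liy-smooth}. Finally, there is no truncation to $B_R(0)$ and no $W_2\le 2R\sqrt{\TV}$ conversion in this theorem: the statement is simply that $q_{t_N}$ is $\etv$-close in TV to $p_{t_N}=\wt p_\de$, and $\wt p_\de$ is $\ew$-close in $W_2$ to $\pdata$ because $\si_\de\le\sqrt\de$ with $\de=\ew^2$. The truncation step you describe belongs to Theorem~\ref{t:main-w2}.
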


This result is perhaps surprising at first glance, as it is well known that for sampling algorithms such as Langevin Monte Carlo, structural assumptions on the target distribution---such as a log-Sobolev inequality---are required to obtain similar theoretical guarantees, even with the knowledge of the exact score function. The key reason that we can do better is that we utilize a \emph{sequence} of score functions $s_t$ along the reverse SDE, which is not available in standard sampling settings. Moreover, we choose $T$ large enough so that $q_0=\ppr$ is close to $p_0$, and it suffices to track the evolution of the true process~\eqref{eq:revSDE}, that is, maintain rather than decrease the error.
To some extent, this result shows the power of DDPM and other reverse SDE-based methods compared with generative modeling based on standard Langevin Monte Carlo.

A statement with more precise dependencies, and which works for unbounded distributions with sufficiently decaying tails, can be found as Theorem~\ref{t:ddpm-l2}. 
We note that under the Hessian bound (Assumption~\ref{a:smoothness}), up to logarithmic factors, the same score error bound suffices to obtain a fixed TV distance to a distribution arbitrarily close in $W_2$ distance.
By truncating the resulting distribution, we can also obtain purely Wasserstein error bounds. 
\begin{thm}[Wasserstein error for distributions with bounded support]
\label{t:main-w2}
In the same setting as Theorem~\ref{t:main-tv-w2}, consider the distribution $\wh q_{t_N}$ of the truncated output $\wh x_{t_N}$ of DDPM. 
If Assumptions~\ref{a:score} and~\ref{a:bd} hold with $R\ge \sqrt d$ and $\ep_\si  =\wt O\pf{\ew^{18}}{R^{22}d^{2.25}}$, then with appropriate (polynomial) choice of parameters,
$W_2(\wh q_{t_K}, \pdata)\le \ew$. 
If in addition Assumption~\ref{a:smoothness} holds with $C\ge R^2$, then $ \ep_\si = \wt O\pf{\ew^8}{C^2R^8 d}$ suffices.
% There is a sequence of discretization points $0=t_0<t_1<\cdots <t_N<T$ with $N=O(\poly(d,R,1/\ep_W))$ such that if Assumption~\ref{a:score} holds with score error %$\ep_0 = \wt O\pf{\ep_W^8}{R^{10}d^{4/3}}$ 
% $\ep_\si = \wt O\pf{\ep_W^{18}}{R^{22}d^{2.25}}$ 
% and Assumption~\ref{a:bd} holds with radius $R$ ($R\ge \sqrt d$), %$\ep_t = \Om(1/\poly(d,R,1/\ep_W))$
% then the output $\wh q_{t_K}$ of DDPM (Algorithm~\ref{a:ddpm}) satisfies
% \[
% W_2(\wh q_{t_K}, \pdata)\le \ep_W.
% \]
% If in addition Assumption~\ref{a:smoothness} holds with $C\ge R^2$, then it suffices to have $L^2$ score error $\wt O\pf{\ep_W^8}{C^2R^8 d}$.
\end{thm}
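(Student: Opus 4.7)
The plan is to convert the mixed TV-plus-$W_2$ guarantee of Theorem~\ref{t:main-tv-w2} into a pure $W_2$ guarantee by exploiting the truncation step of Algorithm~\ref{a:ddpm}: because $\pdata \subseteq B_R(0)$, any TV error $\etv'$ between $q_{t_N}$ and an intermediate distribution costs at most $O(R^2\etv')$ in squared $W_2$ after truncation, which is negligible once $\etv' \ll \ew^2/R^2$.

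Concretely, I would first apply Theorem~\ref{t:main-tv-w2} with TV tolerance $\etv' := c\ew^2/R^2$ and Wasserstein tolerance $\ew' := \ew/3$ for a small constant $c>0$, obtaining an auxiliary distribution $q^\ast$ with $\TV(q_{t_N}, q^\ast) \le \etv'$ and $W_2(q^\ast, \pdata) \le \ew'$. Then I would build a three-way coupling $(Z, X, Y)$ of $(q_{t_N}, q^\ast, \pdata)$, using the TV-maximal coupling for $(Z, X)$ (so $\Pr(Z \neq X) \le \etv'$) and the $W_2$-optimal coupling for $(X, Y)$ (so $\E\ve{X - Y}^2 \le \ew'^2$). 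On $\{Z \in B_R\}$ we have $\wh Z = Z$, and $\ve{Z - Y}^2 \le 2\ve{X-Y}^2 + 2\ve{Z-X}^2$ with $\ve{Z-X} \le 2R$ on the set where $Z \neq X$; on $\{Z \notin B_R\}$ we have $\wh Z = 0$ and $\ve{Y} \le R$. Hence
\[
W_2^2(\wh q_{t_N}, \pdata) \le \E\ve{\wh Z - Y}^2 \lesssim \ew'^2 + R^2 \etv' + R^2 \Pr(Z \notin B_R).
\]

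The main obstacle is bounding $\Pr(Z \notin B_R) \le \etv' + q^\ast(B_R^c)$. Markov applied to $(\ve{X} - R)_+^2$ only yields $q^\ast(\ve{X} > R + \delta) \le \ew'^2/\delta^2$ for $\delta > 0$, so direct control of the boundary shell is delicate. I would resolve this by running Algorithm~\ref{a:ddpm} with an enlarged truncation radius $R' = 2R$ (permitted since $\pdata \subseteq B_R \subseteq B_{R'}$), so that Markov with $\delta = R$ gives $R'^2 q^\ast(B_{R'}^c) \le 4\ew'^2$, and the preceding analysis goes through with $R$ replaced by $R'$, losing only constants absorbed into $\wt O$. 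Alternatively, one can appeal to the fact that the $q^\ast$ produced in the proof of Theorem~\ref{t:main-tv-w2} is a lightly smoothed version of $\pdata$ (coming from the reverse OU process), which inherits sub-Gaussian concentration near $\partial B_R$.

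Putting everything together, $W_2^2(\wh q_{t_N}, \pdata) \lesssim \ew'^2 + R^2 \etv' \le \ew^2$, confirming the choices $\etv' = \Theta(\ew^2/R^2)$ and $\ew' = \Theta(\ew)$. Substituting into the score-error bound of Theorem~\ref{t:main-tv-w2} yields, in the general case,
\[
\ep_\si = \wt O\pf{\etv'^{6.5}\ew'^5}{R^9 d^{2.25}} = \wt O\pf{\ew^{18}}{R^{22} d^{2.25}},
\]
and, under Assumption~\ref{a:smoothness} (for which $\ew$ enters only logarithmically),
\[
\ep_\si = \wt O\pf{\etv'^4}{C^2 d} = \wt O\pf{\ew^8}{C^2 R^8 d},
\]
matching the claimed exponents. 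The polynomial number of steps $N$, the time horizon $T$, and the discretization schedule are inherited from Theorem~\ref{t:main-tv-w2} applied with parameters $(\etv', \ew')$, which remain polynomial in $d, R, 1/\ew$.
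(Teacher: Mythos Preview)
Your approach matches the paper's: set $\etv \asymp \ew^2/R^2$ and $\de \asymp \ew^2$ in Theorem~\ref{t:ddpm-l2}/\ref{t:main-tv-w2}, then convert the TV error into $W_2$ via the boundedness coming from truncation, and substitute back to read off the $\ep_\si$ exponents. The paper works directly with $q^\ast = \wt p_\de$ and the decomposition $\wt x_\de = m_\de \wt x_0 + \si_\de \xi$, bounding the Gaussian tail contribution via Lemma~\ref{l:gtail} rather than Markov.

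Two small remarks. First, your observation that controlling $\Pr(Z\notin B_R)$ (equivalently $q^\ast(B_R^c)$) is the delicate point is well taken---the paper's own proof asserts a coupling with $\Pr(\wt x_\de \ne \wh z_{t_N}) \le \etv$ without really justifying why the truncation step does not spoil this, and your fix of enlarging the truncation radius to $2R$ is a clean way to close this. Second, your claim that $\ve{Z-X}\le 2R$ on $\{Z\ne X,\,Z\in B_R\}$ is not quite right, since $X\sim q^\ast$ need not lie in $B_R$; the correct bound is $\ve{Z-X}\le 2R + \ve{X-Y}$ (using $\ve{X}\le \ve{Y}+\ve{X-Y}\le R+\ve{X-Y}$), which only adds a harmless $O(\ew'^2)$ to your estimate and does not affect the conclusion.
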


% \hlnote{Alternatively, can state it the following way (with smaller exponents).
% If $\ep_\si = \wt O\pf{\etv^{6.5} \ep_W^5}{R^9d^{2.25}}$, then the output $q_{t_K}$ of DDPM is $\etv$ close to a distribution that is $\ep_W$ in Wasserstein-2 distance from $\pdata$.
% Under Assumption~\ref{a:smoothness}, it suffices for $\ep_\si = \wt O\pf{\etv^4}{C^2d}$ (with only logarithmic dependence on $\ep_W$!)
% }

%\fixme{smoothness/tail assumptions}
%Note that 
With an extra assumption on the smoothness of $P_{\text{data}}$, we can also obtain purely TV error bounds:
\begin{thm}[TV error for distributions under smoothness assumption]\label{t:tv-tail}
%In the same setting of Theorem~\ref{t:main-tv-w2}, if
Suppose that Assumptions~\ref{a:score}  and~\ref{a:smooth0} hold, $\pdata$ is subexponential (with a fixed constant), and denote $R= \max\bc{\sqrt d, \E_{P_{\text{data}}}\ve{X}}$. Then there is a sequence of discretization points $0=t_0<t_1<\cdots <t_N<T$ with $N=O(\poly(d,R,1/\etv))$ such that if $\ep_\si  =\wt O\pf{\etv^{11.5}}{R^{14}d^{2.25}L^5}$, then
$\TV( q_{t_N}, \pdata)\le \ep_{\TV}$. 
If in addition Assumption~\ref{a:smoothness} holds with $C\ge R^2$, then $ \ep_\si = \wt O\pf{\etv^4}{C^2 d}$ suffices. %(note that the $\wt O(\cdot)$ hides logarithmic dependence on $\etv$ and $R$).
\end{thm}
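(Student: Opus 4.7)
The proof strategy is to combine the core analysis underlying Theorem~\ref{t:main-tv-w2} with two additional ingredients: an extension from bounded support to subexponential tails, and a smoothness-based bound on $\TV(\wt p_s,\pdata)$ for small early-stopping gap $s=T-t_N$. The overall bound will follow from the triangle inequality
\[
    \TV(q_{t_N},\pdata) \le \TV(q_{t_N},\wt p_{T-t_N}) + \TV(\wt p_{T-t_N},\pdata),
\]
and I plan to control each summand at level $\etv/2$.

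For the first term, I would re-run the proof of Theorem~\ref{t:main-tv-w2}, replacing the bounded-support ball $B_R(0)$ with a larger ball $B_{R'}(0)$ of radius $R' = O(R\log(1/\etv))$. Since $\pdata$ is subexponential, the mass outside $B_{R'}(0)$ is at most $\etv/\poly(R,d)$, so the ``bad set'' contribution to the change-of-measure inequality of Theorem~\ref{t:framework} remains negligible. The remainder is unchanged: decompose the intermediate distributions $\wt p_t$ into mixtures localized enough to satisfy a log-Sobolev inequality (as outlined in the description of our contributions), run the $\chi^2$-based bound along the reverse SDE with $L^2$-accurate score, and convert to TV via Pinsker.

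For the second term, I would prove a smoothness lemma of the form: if $\pdata\propto e^{-V}$ with $V$ being $L$-smooth, and $\pdata$ has subexponential tails with $\E_{\pdata}\ve{X}\le R$, then up to logarithmic factors
\[
    \TV(\wt p_s,\pdata) \le O\pa{\si_s\cdot L\sqrt{R^2+d}}.
\]
One approach is to write $\wt p_s$ as a deterministic rescaling by $m_s$ followed by convolution with $N(0,\si_s^2 I)$, and to bound each contribution to TV using that the total variation of $\nb\pdata$ equals $\E_{\pdata}\ve{\nb V(X)}\le L\,\E_{\pdata}\ve{X-x^\ast}$ (after centering at a mode $x^\ast$). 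Alternatively, one can bound $\KL(\pdata\|\wt p_s)$ by integrating the Fisher information $\int\ve{\nb\ln\pdata}^2\pdata\le L^2\poly(R,d)$ along the OU flow and apply Pinsker. Choosing $s=T-t_N$ so that $\si_s=\wt O(\etv/(L\sqrt{R^2+d}))$ then yields $\TV(\wt p_{T-t_N},\pdata)\le\etv/2$.

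The main obstacle is propagating the combined dependence on $L$, $R$, $d$, and $\etv$ through both the reverse-SDE analysis (which now uses the enlarged radius $R'$ and a shorter early-stopping time $s$) and the smoothness-to-TV conversion. The final exponents in $\ep_\si$ (for instance, $\etv^{11.5}$, $L^5$, $R^{14}$) arise from cascading these choices: one instantiates Theorem~\ref{t:main-tv-w2} with $\ew$ of order a small polynomial in $\etv$ and inverse polynomial in $L,R,d$, and then the $\ew^5/R^9$-type score-accuracy requirement from that theorem propagates to give the claimed bound. Under the additional Hessian bound (Assumption~\ref{a:smoothness}), both the discretization error and the change-of-measure estimate become much sharper, as in the improved half of Theorem~\ref{t:main-tv-w2}, recovering the cleaner requirement $\ep_\si=\wt O(\etv^4/(C^2d))$.
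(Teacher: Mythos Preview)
Your proposal is correct and follows essentially the same route as the paper: the paper's proof invokes the more detailed Theorem~\ref{t:ddpm-l2-TV}, which is exactly your triangle-inequality decomposition, combining the $L^2$-score TV bound of Theorem~\ref{t:ddpm-l2} (the engine behind Theorem~\ref{t:main-tv-w2}) with the smoothness lemma $\TV(\wt p_\de,\pdata)\lesssim dL\si_\de + \de(LR+d)$ (Lemma~\ref{l:bdd_spt}), choosing $\de\asymp \etv^2/(R^2L^2)$ and noting that subexponential tails make the effective radius in Assumption~\ref{a:tail} logarithmic. A minor point: the conversion to TV is via $\TV\le\sqrt{\chi^2}$ in the paper's framework rather than Pinsker, and your phrasing ``instantiate Theorem~\ref{t:main-tv-w2} with $\ew$'' should be read as choosing the early-stopping gap $\de$ (which equals $\ew^2$ in that proof), but your exponent tracking is exactly right.
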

A more precise statement %version of this theorem 
can be found as Theorem~\ref{t:ddpm-l2-TV}, which also works %for unbounded distributions with proper decaying tails. 
more generally with sufficient tail decay. We note that this result can be derived directly by combining Theorem~\ref{t:ddpm-l2} and a TV error bound between $P_{\text{data}}$ and $p_{t_N}$ %, which is presented in 
(Lemma~\ref{l:bdd_spt}) depending on the smoothness of $P_{\text{data}}$.

\section{Proof overview}

Our proof uses the framework by~\cite{lee2022convergence} to convert guarantees under $L^\iy$-accurate score function to under $L^2$-accurate score function.
For the analysis under $L^\iy$-accurate score function, we interpolate the discrete process with estimated score, $z_t\sim q_t$, and derive a differential inequality
\begin{align*}
%\label{e:d-chi2}
    \ddd t \chi^2(q_t||p_t) &= -g(T-t)^2 \FIqp %cE_{p_t}\pf{q_t}{p_t} 
    + 
    2 \E\ba{\an{g(T-t)^2(s(z_{t_k}, T-t_k) - \nb \ln %\tilde p_{T-t}(z_t))
    p_t(z_t), \nb \fc{q_t(x)}{p_t(x)}}}.
\end{align*}
We bound resulting error terms, making ample use of the Donsker-Varadhan variational principle to convert expectations to be under $p_t$. Under small enough step sizes, this shows that $\chi^2(q_t||p_t)$ grows slowly (Theorem~\ref{t:ddpm-liy}), which suffices as $\chi^2$-divergence decays exponentially in the forward process.
%, which is sufficient as $\chi^2(q_0||p_0)$ can be made 

The most challenging error term to deal with is the KL divergence term $\KL(q_t\psi_t||p_t)$.
Our main innovation over the analysis of~\cite{lee2022convergence} is bounding this term without a global log-Sobolev inequality for $p_t$. We note that it suffices for $p_t$ to be a mixture of distributions each satisfying a log-Sobolev inequality, with the logarithm of the minimum mixture weight bounded below, and in Lemma~\ref{l:KL-covering}, we show that we can decompose any distributed of bounded support in this manner if we move a small amount of its mass. 

In Section~\ref{s:perturb}, we show that this does not significantly affect the estimate of the score function, by interpreting the score function as solving a Bayesian inference problem: that of de-noising a noised data point. More precisely, we show in Lemma~\ref{lem:mismatched_prior} that the difference between the score functions of two different distributions can be bounded in $L^2$ in terms of their $\chi^2$-divergence, which may be of independent interest.

Finally, we reduce from the $L^2$ to $L^\iy$ setting by bounding the probabilities of hitting a bad set where the score error is large, and carefully choose parameters (Section~\ref{s:l2}). 
This gives a TV error bound to $\wt p_\de$---the forward distribution at small positive time. Finally, we can bound the Wasserstein distance of $\wt p_\de$ to $\wt P_0$ (in the general case) or the TV distance (under additional smoothness of $\wt P_0$.)

In Section~\ref{s:hess-whp} we show that the Hessian is always bounded by $O\pf{d}{\si_t^2}$ with high probability (cf. Assumption~\ref{a:smoothness}). We speculate that a high-probability rather than uniform bound on the Hessian (as in Lemma~\ref{l:Hess-bd}) can be used to obtain better dependencies, and leave this as an open problem.

%though we leave as an open problem how to use a high-probability rather than uniform bound to obtain better dependencies. \hlnote{possibly move this statement}

\subsection*{Notation and definitions}

%For a probability density $p$, we will denote the distribution by $p$ as well.

We let $\wt p_t$ denote the density of $\wt x_t$ under the forward process~\eqref{eq:forward_sde}. Note that $x_0\sim \wt P_0$ may not admit a density, but $\wt x_t$ will for $t>0$. For the reverse process, we use the notation $p_t = \wt p_{T-t}$, $x_t = \wt x_{T-t}$.
We defined $m_t$ and $\si_t$ in~\eqref{e:OU},
\begin{align*}
m_t &= \exp\ba{-\rc 2\int_0^t g(s)^2\,ds}
, &
\si_t^2 &= 1-\exp\ba{-\int_0^t g(s)^2\,ds},
\end{align*}
and note that $\wt p_t = (M_{m_t\sharp} \wt P_0)*\ph_{\si_t^2}$, where $M_m(x)=mx$ denotes multiplication by $m$, $F_\sharp P$ denotes the pushforward of the measure $P$ by $F$, and $\ph_{\si^2}$ is the density of $N(0,\si^2I_d)$. When $g\equiv 1$, we note the bound $\si_t^2 \le \min\{1,t\}$ and $\si_t^2 =\Te(\min\{1,t\})$. 

We will let $z_t$ denote the (interpolated) discrete process (see~\eqref{e:cts-time}) and let $q_t$ be the density of $z_t$.
We define
\begin{align}
\label{e:psi}
    \phi_t(x)& = \fc{q_t(x)}{p_t(x)},&
    \psi_t(x) &= \fc{\phi_t(x)}{\E_{p_t}\phi_t^2},
\end{align}
and note that $q_t\psi_t$ is a probability density. We defined $G_{t',t} = \int_{t'}^t g(T-s)^2 \,ds$ in~\eqref{e:ei2}.

We denote the estimated score function by either $s(x,t)$ and $s_t(x)$ interchangeably.

A random variable $X$ is subgaussian with constant $C$ if 
\begin{align*}
    C=\ve{X}_{\psi_2} :&=\inf\set{t>0}{\E\exp(X^2/t^2)\le 2} <\iy.
\end{align*}
A $\R^d$-valued random variable $X$ is subgaussian with constant $C$ if for all $v\in \bS^{d-1}$, $\an{X,v}$ is subgaussian. We also define 
\begin{align*}
    \ve{X}_{2,\psi_2}:&=\ve{\ve{X}_2}_{\psi_2}. 
\end{align*}

Given a probability measure $P$ on $\R^d$ with density $p$, the associated Dirichlet form is $\sE_p(f,g):=\int_{\R^d} \an{\nb f, \nb g}\, P(dx) = \int_{\R^d} \an{\nb f, \nb g} p(x)\,dx$; denote $\sE_p(f)=\sE_p(f,f)$. %If $q$ is another probability measure, the quantity $\sE_p(q/p)$ is known as the Fisher information of $q$ with respect to $p$. 
we say that a log-Sobolev inequality (LSI) holds with constant $\CLS$ if for any probability density $q$,
\begin{align}
    \KL(q||p)&\leq %\fc\lambda2 J_\mu(\nu) :&= 
    \fc{\CLS}2 \sE_p\pa{\fc qp, \ln \fc qp} =
    \fc{\CLS}2\int_{\R^d}\ve{\nb\ln\fc{q(x)}{p(x)}}^2q(x)\dx.%\tag{LSI}
\end{align}
Note $\int_{\R^d}\ve{\nb\ln\fc{q(x)}{p(x)}}^2q(x)\dx$ is also known as the Fisher information of $q$ with respect to $p$. 
Alternatively, defining the entropy by $\Ent_p(f) = \E_p f(x)\ln f(x) - \E_p f(x) \ln \E_p f(x)$, for any $f\ge 0$,
\begin{align}
    \Ent_p(f)& \le  \fc{\CLS}2 \sE_p\pa{f, \ln f} =
    \fc{\CLS}2\int_{\R^d}\ve{\nb\ln f(x)}^2 f(x) p(x)dx.%\tag{LSI}
\end{align}
%\section{Exponential integrator}

\section{DDPM with $L^\iy$-accurate score estimate}

We consider the error between the exact backwards SDE~\eqref{e:reverse}
and the exponential integrator with estimated score~\eqref{e:ei1}. 
In this section, we bound the error assuming that the score estimate $s$ is accurate in $L^\iy$.
\begin{asm}[$L^\iy$ score error]\label{a:score-liy}
%Let $\pdata$ be a given probability measure on $\R^d$ and let $p_t$ be the density [after time $t$]
%Let $p_t := \wt p_{T-t}$ for $t\in [0,T)$. 
For any $t\in \{T-t_0,\ldots, T-t_K\}$, the error in the score estimate is bounded: %in $L^2(\wt p_{t})$:
    \begin{align}
    \label{e:score-liy}
    \ve{\nb \ln \wt p_t-s(\cdot, t)}_{\iy}=
    \sup_{x\in \R^d} \ve{\nb \ln \wt p_t(x) - s(x , t)} 
    \le \ep_{\iy,t}^2
    \end{align}
    for some non-decreasing function $\ep_{\iy,t}^2$.
    %where $\ep_t = \fc{\ep^2}{\si_t^2}$. 
%\hlnote{How should we let $\ep$ depend on $t$?}
%\hlnote{Change notation to $\ep_{\iy,t}$ to avoid confusion with $L^2$ error.}
\end{asm}
In Section~\ref{s:l2}, we will relax this condition to score function being accurate in $L^2$.

% Using the exponential integrator gives the following discretization step, where $h_k = t_{k+1}-t_k$ and $\eta_{k+1}\sim N(0,I_d)$:
% \begin{align}
%     z_{t_{k+1}} &=
%     z_{t_k} + \gamma_{1,k} (z_{t_k} + 2s(T-t_k, z_{t_k})) + \sqrt{\ga_{2,k}}\cdot  \eta_{k+1},\\
%     \text{where }
%     \ga_{1,k} &= \exp\ba{\rc2 G_{t_k,t_{k+1}}} - 1\\
%     \ga_{2,k} &= \exp\ba{G_{t_k,t_{k+1}}}- 1\\
%     G_{t',t} :&= \int_{t'}^t g(T-s)^2 \,ds.
% \end{align}
First, we construct the following continuous-time process which interpolates the discrete-time process~\eqref{e:ei1}, for $t\in [t_k,t_{k+1}]$:
\begin{align}\label{e:cts-time}
    dz_t &= g(T-t)^2 \pa{\rc 2 z_t + s(z_{t_k}, T-t_k)} \,dt + g(T-t)\,dw_t.
\end{align}
Integration gives that
\begin{align}
    z_t - z_{t_k}
    &= 
    \pa{\exp\pa{\rc2 G_{t_k,t}}-1}(z_{t_k} + 2s(z_{t_k},T-t_k)) \nonumber \\
   \label{e:int-interp}
 & \qquad 
    + \int_{t_k}^t \exp\pa{\rc 2 \int_{t_k}^{t'} g(T-t'')^2\,dt''} g(t')\,  dw_{t'},
\end{align}
where $G_{t',t}$ is defined in~\eqref{e:ei2}.

Letting $q_t$ be the distribution of $z_t$ and $p_t$ be the distribution of $x_t$, we have by \cite[Lemma A.2]{lee2022convergence} that
\begin{align}
\label{e:d-chi2}
    \ddd t \chi^2(q_t||p_t) &= -g(T-t)^2 \FIqp %cE_{p_t}\pf{q_t}{p_t} 
    + 
    2 \E\ba{\an{g(T-t)^2(s(z_{t_k}, T-t_k) - \nb \ln %\tilde p_{T-t}(z_t))
    p_t(z_t), \nb \fc{q_t(x)}{p_t(x)}}}.
\end{align}
(Note that in our case, $\wh f$ also depends on $z_t$ rather than just $z_{t_k}$, but this does not change the calculation.) Define $\phi_t, \psi_t$ as in~\eqref{e:psi}:
$ \phi_t(x) = \fc{q_t(x)}{p_t(x)}$, $\psi_t(x) = \fc{\phi_t(x)}{\E_{p_t}\phi_t^2}$.
% We define
% \begin{align}
% \label{e:psi}
%     \phi_t(x)& = \fc{q_t(x)}{p_t(x)},&
%     \psi_t(x) &= \fc{\phi_t(x)}{\E_{p_t}\phi_t^2},
% \end{align}
% and note that $q_t\psi_t$ is a probability density.

%We will need the following lemma.
%Finally, we will make good use of the following lemma to bound the second term in Lemma~\ref{l:d-chi2}. 
To bound~\eqref{e:d-chi2}, we use the following lemma.
\begin{lem}[cf. {\cite[Lemma 1]{erdogdu2021convergence}, \cite[Lemma A.3]{lee2022convergence}}]
\label{l:inp-young}
%Let $\phi_t(x) = \fc{q_t(x)}{p_t(x)}$ and $\psi_t(x) = \phi_t(x)/\E_{p_t}\phi_t^2$. 
For any $C>0$ and any $\R^d$-valued random variable $u$, we have
\begin{align*}
\E\ba{\an{u ,\nb\fc{q_t(z_t)}{p_t(z_t)}}}
%\le \E\ba{\ve{u}\ve{\nb\fc{q_t(z_t)}{p_t(z_t)}}}
   & \leq 
   C\cdot %\E_{p_t}\phi_t^2
   (\chi^2(q_t||p_t)+1)
   \cdot  \E\ba{\ve{u}^2\psi_t(z_t)} + \fc{1}{4C} \sE_{p_t}\pf{q_t}{p_t}.
\end{align*}
\end{lem}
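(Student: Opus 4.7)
The plan is to apply Cauchy--Schwarz pointwise and then Young's inequality, splitting the density $q_t(x)$ into two factors so that one side produces the $\psi_t$-weighted expectation of $\ve{u}^2$ and the other produces the Dirichlet form. First I would pass to conditional expectations: set $v(x):=\E[u\mid z_t=x]$ and $\phi_t:=q_t/p_t$, so that
\begin{align*}
\E\ba{\an{u,\nb \phi_t(z_t)}}
&= \int_{\R^d} \an{v(x), \nb \phi_t(x)}\, q_t(x)\,dx.
\end{align*}
Jensen's inequality gives $\ve{v(x)}^2\leq \E[\ve{u}^2\mid z_t=x]$; I would invoke this only at the end.

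Next, I would write $q_t(x) = \bigl(q_t(x)/\sqrt{p_t(x)}\bigr)\cdot \sqrt{p_t(x)}$, apply Cauchy--Schwarz in $\R^d$ pointwise in $x$, and then use Young's inequality $ab\leq C a^2 + \tfrac{1}{4C} b^2$ with $a=\ve{v(x)}\, q_t(x)/\sqrt{p_t(x)}$ and $b=\ve{\nb\phi_t(x)}\sqrt{p_t(x)}$. Integrating yields
\begin{align*}
\int_{\R^d} \an{v(x),\nb\phi_t(x)}\, q_t(x)\, dx
&\leq C\int_{\R^d} \ve{v(x)}^2 \frac{q_t(x)^2}{p_t(x)}\,dx + \frac{1}{4C}\int_{\R^d} \ve{\nb \phi_t(x)}^2 p_t(x)\,dx,
\end{align*}
and the second integral is exactly $\sE_{p_t}(q_t/p_t)$, matching the desired second term.

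For the first integral I would rewrite the weight using the definitions in~\eqref{e:psi}: since $\E_{p_t}\phi_t^2 = \chi^2(q_t||p_t)+1$ and $\psi_t = \phi_t/\E_{p_t}\phi_t^2$, one has $q_t(x)^2/p_t(x) = \phi_t(x)\, q_t(x) = (\chi^2(q_t||p_t)+1)\,\psi_t(x)\, q_t(x)$. Substituting, then applying $\ve{v(x)}^2\leq \E[\ve{u}^2\mid z_t=x]$ together with the tower property, gives
\begin{align*}
C\int_{\R^d} \ve{v(x)}^2 \frac{q_t(x)^2}{p_t(x)}\,dx
&\leq C\,(\chi^2(q_t||p_t)+1)\,\E\ba{\ve{u}^2\psi_t(z_t)},
\end{align*}
which is exactly the first term on the right-hand side. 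There is no serious obstacle here; the only subtle choice is the split $q_t = (q_t/\sqrt{p_t})\cdot \sqrt{p_t}$, which is dictated by matching one output of Young's inequality to the Dirichlet form $\sE_{p_t}(q_t/p_t)$ and the other to the $\psi_t$-weighted expectation.
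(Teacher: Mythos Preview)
Your proof is correct and follows essentially the same approach as the paper: both arguments insert the factor $\sqrt{q_t/p_t}$ and its reciprocal, apply Young's inequality, and then identify the two pieces as the $\psi_t$-weighted second moment and the Dirichlet form. The paper's version is slightly leaner in that it works directly with the random variable $u$ throughout, so the conditioning step $v(x)=\E[u\mid z_t=x]$ and the Jensen inequality at the end are unnecessary---but harmless.
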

\begin{proof}
%Note that $\E\psi_t(z_t)=1$ and the normalizing factor is $\E_{p_t}\phi_t^2 = \chi^2(q_t||p_t)+1$.
By Young's inequality,
\begin{align*}
    \E\ba{\an{u ,\nb\fc{q_t(z_t)}{p_t(z_t)}}}
    &= \E\ba{\an{
    u\sfc{q_t(z_t)}{p_t(z_t)}
    ,\sfc{p_t(z_t)}{q_t(z_t)} \nb\fc{q_t(z_t)}{p_t(z_t)}}}\\
    &\le C\E\ba{\ve{u}^2 \fc{q_t(z_t)}{p_t(z_t)}
    } + \rc{4C}\E_{p_t}\ba{\ve{\nb \fc{q_t(x)}{p_t(x)}}^2
    }\\
    % & = C\E\ba{\ve{u}^2 \fc{q_t(z_t)}{p_t(z_t)}
    % } + \fc{1}{4C} \sE_{p_t}\pf{q_t}{p_t} \\
    & = C \E_{p_t}\phi_t^2\cdot  \E\ba{\ve{u}^2\psi_t(z_t)} + \fc{1}{4C} \sE_{p_t}\pf{q_t}{p_t}\\
    & = C(\chi^2(q_t||p_t)+1)\cdot  \E\ba{\ve{u}^2\psi_t(z_t)} + \fc{1}{4C} \sE_{p_t}\pf{q_t}{p_t}.
    \qedhere
\end{align*}
% \begin{align*}
%     \int q_t(x)\an{u(x), \nb \fc{q_t(x)}{p_t(x)}}dx & = \int \an{q_t(x)u(x)\cdot \fc{1}{\sqrt{p_t(x)}}, \nb \fc{q_t(x)}{p_t(x)}\cdot \sqrt{p_t(x)}}dx\\
%     & \leq c\int \fc{q_t(x)}{p_t(x)}\ve{u(x)}^2q_t(x)dx + \fc{1}{4c} \int p_t(x)\ve{\nb\fc{q_t(x)}{p_t(x)}}^2 dx\\
%     & = c\int \fc{q_t(x)}{p_t(x)}\ve{u(x)}^2q_t(x)dx + \fc{1}{4c} \sE\pf{q_t}{p_t}.
% \end{align*}
\end{proof}

\begin{lem}\label{l:chi-ineq}
%Below, let $G_t = G_{t_k,t}$ for short. 
%Define $\psi_t$ as in~\eqref{e:psi}. 
Suppose that %Assumption~\ref{a:score}
\eqref{e:score-liy}
holds for $t=T-t_k$, $\nb \ln p_{t_k}(x)$ is $L_{T-t_k}$-Lipschitz, $g$ is non-decreasing, and that $h_k\le \rc{20L_{T-t_k}g(T-t_k)^2}$. Then we have for $t\in [t_k,t_{k+1}]$ that
\begin{align*}
    \ddd t\chi^2(q_t||p_t) 
    &\le -\rc 2 
    g(T-t)^2 \FIqp 
    +   12 g(T-t)^2(\chiqpo)\cdot \\
    &\qquad 
    % \Big[
    % \sms^2 
    % \E\ba{\ve{z_t-z_{t_k}}^2\psi_t(z_t)}  
    % + \eik^2 + 
    % \E\ba{\ve{\nb \ln p_{t_k}(z_t) - \nb \ln p_t(z_t)}^2\psi_t(z_t)}\Big]
    \Bigg[
         \eik^2 + 16 G_{t_k,t}^2 L_{T-t_k}^2
    \Big[\E[\psi_t(z_t) \ve{z_t}^2]
    + 
    16 \E[\psi_t(z_t) \ve{\nb \ln p_t(x_t)}^2]\Big]\\
    &\quad 
    +64G_{t_k,t} L_{T-t_k}^2
    (8\KL(\psi_tq_t||p_t) + 2d + 16\ln 2)+
    \E\ba{\ve{\nb \ln p_{t_k}(z_t) - \nb \ln p_t(z_t)}^2\psi_t(z_t)}
    \Bigg]
    .
\end{align*}
\end{lem}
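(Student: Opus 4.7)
The plan is to begin from \eqref{e:d-chi2} and bound the cross term using Lemma \ref{l:inp-young} after decomposing the error into its three natural pieces:
$$s(z_{t_k}, T-t_k) - \nb\ln p_t(z_t) = E_1 + E_2 + E_3,$$
with $E_1 := s(z_{t_k}, T-t_k) - \nb\ln p_{t_k}(z_{t_k})$ (score-estimation error), $E_2 := \nb\ln p_{t_k}(z_{t_k}) - \nb\ln p_{t_k}(z_t)$ (spatial discretization), and $E_3 := \nb\ln p_{t_k}(z_t) - \nb\ln p_t(z_t)$ (temporal discretization). Applying Lemma \ref{l:inp-young} to $u = g(T-t)^2(E_1+E_2+E_3)$ with $C = 1/g(T-t)^2$ makes the $\frac{1}{4C}\sE_{p_t}$-contribution equal to $\frac12 g(T-t)^2\FIqp$, which absorbs half of the negative Fisher term in \eqref{e:d-chi2} and leaves $-\frac12 g(T-t)^2\FIqp$ as in the statement. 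Expanding $\ve{E_1+E_2+E_3}^2\le 3\sum_i\ve{E_i}^2$ then reduces the problem to bounding three $\psi_t$-weighted second moments.

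The pointwise bound \eqref{e:score-liy} gives $\ve{E_1}^2\le\eik^2$, and $\E[\psi_t(z_t)\ve{E_3}^2]$ is exactly the last summand in the target inequality. The substantive work is the middle piece: the Lipschitz hypothesis gives $\ve{E_2}^2\le L_{T-t_k}^2\ve{z_t-z_{t_k}}^2$, and the integral representation \eqref{e:int-interp} splits $z_t-z_{t_k}$ into a deterministic drift $D = (\exp(\frac12 G_{t_k,t})-1)(z_{t_k}+2s(z_{t_k},T-t_k))$ and a Brownian piece $B$. For $D$, I would use $\exp(\frac12 G_{t_k,t})-1\le G_{t_k,t}$ (valid since the step-size hypothesis forces $G_{t_k,t}\le 1/20$) and rewrite
$$z_{t_k}+2s(z_{t_k},T-t_k)=z_t+2\nb\ln p_t(z_t)+(z_{t_k}-z_t)+2(E_1+E_2+E_3),$$
using the Lipschitz bound on $\nb\ln p_{t_k}$ to control the last two groups by $O(L_{T-t_k}\ve{z_t-z_{t_k}})$ plus absorbable multiples of $\ve{E_1}^2,\ve{E_3}^2$. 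This produces a self-referential inequality in $\E[\psi_t(z_t)\ve{z_t-z_{t_k}}^2]$ whose self-coefficient is $O(G_{t_k,t}^2L_{T-t_k}^2)<\frac12$, and closing the bootstrap yields the $16 G_{t_k,t}^2 L_{T-t_k}^2[\E[\psi_t(z_t)\ve{z_t}^2]+16\E[\psi_t(z_t)\ve{\nb\ln p_t(z_t)}^2]]$ summand.

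For the Brownian piece $B$, conditional on $z_{t_k}$ it is centered Gaussian with covariance bounded by $G_{t_k,t}I_d$. The weighted expectation $\E[\psi_t(z_t)\ve{B}^2]$ is then controlled by the Donsker--Varadhan variational principle,
$$\lambda\,\E_{q_t\psi_t}[\ve{B}^2]\le \KL(q_t\psi_t\,\Vert\,p_t)+\ln\E_{p_t}\exp(\lambda\ve{B}^2),$$
choosing $\lambda$ a small constant multiple of $1/G_{t_k,t}$ so that the Gaussian chi-square moment generating function is $O(d)$; standard estimates then produce the $64 G_{t_k,t}L_{T-t_k}^2(8\KL(\psi_tq_t\Vert p_t)+2d+16\ln 2)$ summand. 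The main obstacle is the self-referential drift bound in the previous paragraph: the drift $s(z_{t_k},T-t_k)$ can be brought close to $\nb\ln p_t(z_t)$ only at the cost of reintroducing $\ve{z_t-z_{t_k}}$, and it is precisely the step-size hypothesis $h_k\le(20L_{T-t_k}g(T-t_k)^2)^{-1}$ that shrinks the self-coefficient enough for the bootstrap to close; everything else is numerical bookkeeping of the constants $3,4,16,64$, and the overall $12$.
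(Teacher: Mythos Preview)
Your approach is essentially the paper's: the same three-term decomposition $E_1+E_2+E_3$, the same application of Lemma~\ref{l:inp-young} to absorb half the Fisher term, the same self-referential bootstrap on the drift (which the paper packages as the separate Lemma~\ref{l:diff-z}, bootstrapping on $\E[\psi_t\ve{s-\nabla\ln p_t}^2]$ rather than on $\E[\psi_t\ve{z_t-z_{t_k}}^2]$, but the two are equivalent), and Donsker--Varadhan for the Brownian piece. One point to tighten: in your DV inequality you take $p_t$ as the base measure, but $B$ is not a function of the time-$t$ state alone, so $\E_{p_t}\exp(\lambda\ve{B}^2)$ is not well defined as written; the paper instead applies DV on the path-space law $\Pj$ of the $z$-process (under which $B$ is genuinely Gaussian) with tilt $d\tilde\Pj/d\Pj=\psi_t(z_t)$, and then shows $\KL(\tilde\Pj\Vert\Pj)=\E_{\psi_t q_t}\ln\psi_t\le\tfrac12\KL(\psi_tq_t\Vert p_t)$ by a direct computation.
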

\begin{proof}
We bound the second term on the RHS of~\eqref{e:d-chi2}.
By Lemma~\ref{l:inp-young},
\begin{align}
\nonumber
    &\E\ba{\an{\score(z_{t_k}, T-t_k) -  \nb\ln p_t(z_t), \nb\fc{q_t(z_t)}{p_t(z_t)}}}\\ 
    & \le  %\cdot\E_{p_t}\phi_t^2\cdot
    (\chiqpo)
    \E\ba{\ve{\score(z_{t_k}, T-t_k) -  \nb\ln p_t(z_t)}^2\psi_t(z_t)} + \rc{4} \FIqp.
    \label{e:d-chi2-2}
\end{align}
Now
\begin{align*}
    &\ve{\score(z_{t_k}, T-t_k) -  \nb\ln p_{t}(z_t)}^2\\
    % &\le 
    % 3\ba{
    % \ve{\score(z_{t_k}, T-t_k) -   \score(z_{t}, T-t_k)}^2
    % +\ve{\score(z_t, T-t_k) - \nb\ln p_{t_k}(z_t)}^2
    % +\ve{ \nb\ln p_{t_k}(z_t) - \nb\ln p_t(z_t)}^2
    % }\\
    % &\le 
    % 3\ba{L_s^2 \ve{z_{t_k}-z_t}^2
    % +\ve{\score(z_t, T-t_k) - \nb\ln p_{t_k}(z_t)}^2
    % +\ve{ \nb\ln p_{t_k}(z_t) - \nb\ln p_t(z_t)}^2
    % }
    &\le 
    3\ba{
    \ve{\score(z_{t_k}, T-t_k) -   \nb\ln p_{t_k}(z_{t_k})}^2
    +\ve{\nb \ln p_{t_k}(z_{t_k}) - \nb\ln p_{t_k}(z_t)}^2
    +\ve{ \nb\ln p_{t_k}(z_t) - \nb\ln p_t(z_t)}^2
    }\\
    &\le 
    3\ba{\ve{\score(z_{t_k}, T-t_k) -   \nb\ln p_{t_k}(z_{t_k})}^2
    +L_{T-t_k}^2 \ve{z_{t_k}-z_t}^2
    +\ve{ \nb\ln p_{t_k}(z_t) - \nb\ln p_t(z_t)}^2
    }
\end{align*}
%\hlnote{I think we can go through $\nb \ln p_{t_k}(z_{t_k})$ instead of $s(z_t,T-t_k)$ to get dependence on $L$ rather than $L_s$.}
and
\begin{align*}
    \E\ba{\ve{\score(z_{t_k}, T-t_k)-\nb\ln p_{t_k}(z_{t_k})}^2\psi_t(z_t)} &\le \eik^2
\end{align*}
by definition of $\ep_{\iy,t}$, so by Lemma~\ref{l:diff-z},
%the result follows.
\begin{align*}
    &\E\ba{\ve{\score(z_{t_k}, T-t_k) -  \nb\ln p_t(z_t)}^2\psi_t(z_t)}\\
    &\le 
    3\ba{
        \eik^2 +
        L_{T-t_k}^2 
    \E\ba{\ve{z_t-z_{t_k}}^2\psi_t(z_t)}  
    + 
    \E\ba{\ve{\nb \ln p_{t_k}(z_t) - \nb \ln p_t(z_t)}^2\psi_t(z_t)}
    }\\
    &\le 
    3\Bigg[
        \eik^2 + 
        16 G_{t_k,t}^2 L_{T-t_k}^2
    \Big[\E[\psi_t(z_t) \ve{z_t}^2]
    + 4 \E[\psi_t(z_t)\ve{s(z_{t_k}, T-t_k) - \nb \ln p_t(z_t)}^2]\\
    &\quad + 
    16 \E[\psi_t(z_t) \ve{\nb \ln p_t(x_t)}^2]\Big]
    +64G_{t_k,t} L_{T-t_k}^2
    (8\KL(\psi_tq_t||p_t) + 2d + 16\ln 2)\\
    &\quad + 
    \E\ba{\ve{\nb \ln p_{t_k}(z_t) - \nb \ln p_t(z_t)}^2\psi_t(z_t)}
    \Bigg]
\end{align*}
The condition on $h_k$ and the fact that $g$ is non-decreasing  implies $192 G_{t_k,t}^2L_{T-t_k}^2\le \rc 2$. Rearranging gives
\begin{align*}
    &\E\ba{\ve{\score(z_{t_k}, T-t_k) -  \nb\ln p_t(z_t)}^2\psi_t(z_t)}\\
    &\le 
    6\Bigg[
        \eik^2 + 
        16 G_{t_k,t}^2 L_{T-t_k}^2
    \Big[\E[\psi_t(z_t) \ve{z_t}^2]
    + 
    16 \E[\psi_t(z_t) \ve{\nb \ln p_t(x_t)}^2\Big]\\
    &\quad 
    +64G_{t_k,t} L_{T-t_k}^2
    (8\KL(\psi_tq_t||p_t) + 2d + 16\ln 2)+ 
    \E\ba{\ve{\nb \ln p_{t_k}(z_t) - \nb \ln p_t(z_t)}^2\psi_t(z_t)}
    \Bigg]
\end{align*}
Substituting into~\eqref{e:d-chi2-2} and that inequality into~\eqref{e:d-chi2} give the conclusion.
\end{proof}

\begin{lem}\label{l:diff-z}
%Let $G_t=G_{t_k,t}$ and 
Suppose that 
$h_k\le \rc{2g(T-t_k)^2}$. %\fc{1}{3(\sms+1/2)g(T-t_k)^2}$
Then for $t\in[t_{k+1},t_k]$, 
\begin{align*}
    \E\ba{\ve{z_t-z_{t_k}}^2\psi_t(z_t)}
    &\le 
    16 G_{t_k,t}^2 
    \Big[\E[\psi_t(z_t) \ve{z_t}^2]
    + 4 \E[\psi_t(z_t)\ve{s(z_{t_k}, T-t_k) - \nb \ln p_t(z_t)}^2]\\
    &\quad + 
    16 \E[\psi_t(z_t) \ve{\nb \ln p_t(z_t)}^2]\Big]
    +64G_{t_k,t}
    (8\KL(\psi_tq_t||p_t) + 2d + 16\ln 2).
\end{align*}
\end{lem}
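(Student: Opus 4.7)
The plan is to decompose $z_t - z_{t_k}$ into a drift term and an Itô integral using \eqref{e:int-interp}, bound each piece separately, and then combine after weighting by $\psi_t(z_t)$. Specifically, write $z_t - z_{t_k} = D + W$, where
\[
D = (e^{G_{t_k,t}/2}-1)\bigl(z_{t_k} + 2s(z_{t_k},T-t_k)\bigr), \qquad W = \int_{t_k}^t \exp\Bigl(\tfrac12 \int_{t_k}^{t'} g(T-t'')^2\,dt''\Bigr) g(T-t')\,dw_{t'}.
\]
By Itô isometry, $W$ is a centered Gaussian with covariance $(e^{G_{t_k,t}}-1)I_d$, independent of $z_{t_k}$. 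The hypothesis $h_k\le 1/(2g(T-t_k)^2)$ together with monotonicity of $g$ gives $G_{t_k,t}\le 1/2$, whence by Taylor's inequality $a:=e^{G_{t_k,t}/2}-1\le G_{t_k,t}$ and $\sigma_W^2:=e^{G_{t_k,t}}-1\le 2G_{t_k,t}$. Applying $\|u+v\|^2\le 2\|u\|^2+2\|v\|^2$ reduces the task to bounding $\E[\psi_t(z_t)\|D\|^2]$ and $\E[\psi_t(z_t)\|W\|^2]$.

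For the drift, I would substitute $z_{t_k} = z_t - (z_t-z_{t_k})$ and $s(z_{t_k},T-t_k) = \nabla \ln p_t(z_t) + [s(z_{t_k},T-t_k) - \nabla \ln p_t(z_t)]$ inside $D$, then apply $\|\sum_{i=1}^{m} u_i\|^2 \le m \sum_i \|u_i\|^2$. This yields a bound on $\|D\|^2$ whose contributions match the four terms inside the first bracket of the lemma (with $\|z_t\|^2$, $\|z_t-z_{t_k}\|^2$, $\|s - \nabla \ln p_t(z_t)\|^2$, and $\|\nabla \ln p_t(z_t)\|^2$), all carrying the prefactor $O(a^2) = O(G_{t_k,t}^2)$. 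The residual $\|z_t - z_{t_k}\|^2$ term on the right-hand side has coefficient $O(G_{t_k,t}^2) \le O(1/4)$, which is strictly less than $1/2$ (using $a \le e^{1/4}-1 < 0.3$), so it can be absorbed back into the left-hand side after taking the $\psi_t$-weighted expectation.

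The main obstacle is the Brownian term $\E[\psi_t(z_t)\|W\|^2]$: although $W$ is marginally Gaussian, the tilt by $\psi_t(z_t)$ couples $W$ and $z_t$ nontrivially through the map $(z_{t_k}, W) \mapsto z_t$. My plan here is a Donsker--Varadhan change of measure lifted to the joint law of $(z_t, W)$. Using as reference a product measure whose $z_t$-marginal is $p_t$ and in which $W$ is independent $N(0,\sigma_W^2 I_d)$, the variational inequality gives
\[
\lambda\,\E[\psi_t(z_t)\|W\|^2] \le \KL(\psi_t q_t\,\|\,p_t) + \ln \E_{N(0,\sigma_W^2 I_d)}\bigl[e^{\lambda \|W\|^2}\bigr],
\]
where the joint KL collapses to $\KL(\psi_t q_t\|p_t)$ (up to a nonnegative conditional-KL that is dropped) by the chain rule for relative entropy. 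Taking $\lambda = 1/(4\sigma_W^2)$ makes the Gaussian moment-generating function equal $(d/2)\ln 2$, producing $\E[\psi_t(z_t)\|W\|^2] \lesssim \sigma_W^2\bigl(\KL(\psi_t q_t\|p_t)+d\ln 2\bigr)\le 8G_{t_k,t}\bigl(\KL(\psi_t q_t\|p_t) + (d/2)\ln 2\bigr)$, matching the second line of the lemma up to the stated constants.

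Combining the two bounds, taking $\psi_t$-weighted expectation of $\|z_t - z_{t_k}\|^2 \le 2\|D\|^2 + 2\|W\|^2$, and performing the absorption step gives the stated inequality. The step I expect to be most delicate is the precise setup of the Donsker--Varadhan argument on the joint $(z_t, W)$-space: one must fix a reference measure that preserves the Gaussian law of $W$ while ensuring that the resulting joint KL is controlled solely by $\KL(\psi_t q_t\|p_t)$, which requires a careful application of the disintegration theorem and the chain rule for relative entropy.
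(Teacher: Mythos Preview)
Your overall plan mirrors the paper's: decompose via \eqref{e:int-interp}, handle the drift by the same substitution-and-absorption trick (the paper does this at the level of norms rather than squared norms, but the difference is cosmetic), and treat the stochastic integral by Donsker--Varadhan. The gap is in your DV step. With your product reference $Q = p_t \otimes N(0,\sigma_W^2 I_d)$, the chain rule gives
\[
\KL\bigl(\tilde P_{(z_t,W)}\,\|\,Q\bigr) = \KL(\psi_t q_t\,\|\,p_t) + \E_{\tilde P}\bigl[\KL(\tilde P_{W\mid z_t}\,\|\,N(0,\sigma_W^2 I_d))\bigr],
\]
but the conditional term sits on the \emph{upper-bound} side of the variational inequality $\lambda\,\tilde\E\|W\|^2 \le \KL(\tilde P\|Q)+\ln\E_Q e^{\lambda\|W\|^2}$. ``Dropping'' it therefore goes the wrong way: you would need to bound it from above, not discard it, and there is no a priori control on $\tilde P_{W\mid z_t}$.

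The paper's Lemma~\ref{second_moment_diffusion} avoids this by taking the \emph{original} law $P$ on the underlying probability space as the reference. Under $P$, the integral $Y=W$ is already $N(0,\sigma_W^2 I_d)$ because the integrand is deterministic, so $\ln\E_P e^{cX}$ is a pure Gaussian computation with no coupling issue. The KL that appears is then $\KL(\tilde P\|P)=\E_{\psi_t q_t}[\ln\psi_t]$, and the paper shows this equals $\tfrac12\bigl[\KL(\psi_t q_t\|p_t)-\ln(\chi^2(q_t\|p_t)+1)\bigr]\le \tfrac12\KL(\psi_t q_t\|p_t)$ via the identity $\psi_t q_t/p_t = \phi_t^2/\E_{p_t}\phi_t^2$. (A minor further difference: the paper applies DV to $c(\|Y\|-\E\|Y\|)^2$ rather than $c\|Y\|^2$, using the dimension-free sub-Gaussian concentration of $\|Y\|$ around its mean; your direct choice $\lambda\|W\|^2$ would also work once the reference measure is corrected, with slightly different constants.)
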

\begin{proof}
Consider~\eqref{e:int-interp}.
The assumption on $h_k$ implies  $G_{t_k,t}\le G_{t_k,t_{k+1}}\le \rc 2$, so $\exp\pa{\rc 2 G_{t_k,t}}-1\le G_{t_k,t}$.
Let $Y$ denote the last term of~\eqref{e:int-interp}. 
Then 
\begin{align*}
    \ve{z_t-z_{t_k}}
    &\le G_{t_k,t}\ba{\ve{z_{t_k}} + 2\ve{s(z_{t_k}, T-t_k)}} + \ve{Y}\\
    &\le 
    G_{t_k,t}\ba{\ve{z_{t}} + \ve{z_{t_k} - z_{t}} + 2\ve{s(z_{t_k}, T-t_k) - \nb \ln p_t(z_t)}  + 2\ve{\nb \ln p_t(z_t)}} + \ve{Y}.
\end{align*}
%so 
Again using $G_{t_k,t}\le \rc 2$, rearranging gives
\begin{align*}
    \ve{z_t-z_{t_k}}
    &\le  
    2G_{t_k,t}\ba{\ve{z_{t}}  + 2\ve{s(z_{t_k}, T-t_k) - \nb \ln p_t(z_t)}  + 4\ve{\nb \ln p_t(z_t)}} + 2\ve{Y},
\end{align*}
and
\begin{align*}
     \E\ba{\ve{z_t-z_{t_k}}^2\psi_t(z_t)}
    &\le 
    16 G_{t_k,t}^2 
    \Big[\E[\psi_t(z_t) \ve{z_t}^2]
    + 4 \E[\psi_t(z_t)\ve{s(z_{t_k}, T-t_k) - \nb \ln p_t(z_t)}^2]\\
    &\quad + 
    16 \E[\psi_t(z_t) \ve{\nb \ln p_t(x_t)}^2]\Big]
    +16\E[\psi_t(z_t)\ve{Y}^2].
\end{align*}
By Lemma~\ref{second_moment_diffusion}, 
\begin{align*}
    \E[\psi_t(z_t)\ve{Y}^2]
    &\le 4G_{t_k,t}(8\KL(\psi_tq_t||p_t) + 2d + 16\ln 2).
\end{align*}
The lemma follows.
\end{proof}

\begin{lem}\label{second_moment_diffusion}
For $t\in [t_k,t_{k+1}]$,
%With the setting of Lemma~\ref{d_chi2},
\begin{align*}
    &\E\ba{\psi_t(z_t)\ve{%\int_{kh}^t g(T-s)dw_s
    \int_{t_k}^t \exp\pa{\rc 2 \int_{t_k}^{t'} g(T-t'')^2\,dt''} g(t')\,  dw_{t'}
    }^2
    } 
    \\
    & \qquad \leq 2%(t-kh)g(T-kh)^2
    (\exp(G_{t_k,t})-1)
    \bigl(8\KL(\psi_tq_t||p_t) + 2d + 16\ln 2\bigr).
\end{align*}
%where $C_t$ is the LSI constant of $p_t$.
\end{lem}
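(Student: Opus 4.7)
The plan is to apply the Gibbs variational principle to the weight $\psi_t(z_t)$, using that $Y$ is Gaussian with explicit variance. Explicitly, $Y$ is a deterministic linear functional of Brownian motion, hence a centered Gaussian vector; by It\^o isometry each component has variance
\begin{align*}
\si_Y^2 = \int_{t_k}^t \exp\pa{\int_{t_k}^{t'} g(T-t'')^2\,dt''}\, g(T-t')^2\,dt' = \exp(G_{t_k,t}) - 1
\end{align*}
(via the substitution $u = \int_{t_k}^{t'}g(T-t'')^2\,dt''$). Consequently $\E[\exp(\la\ve{Y}^2)] = (1 - 2\la\si_Y^2)^{-d/2}$ for $\la < 1/(2\si_Y^2)$.

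Since $\int \psi_t q_t\,dz = 1$ by the definition of $\psi_t$, the nonnegative random variable $\psi_t(z_t)$ has expectation one and induces a probability change of measure on the underlying sample space. The Gibbs variational principle (a form of Donsker--Varadhan) then gives, for any $\la > 0$,
\begin{align*}
\la\,\E[\psi_t(z_t)\ve{Y}^2] &\le \E[\psi_t(z_t) \log \psi_t(z_t)] + \log \E[\exp(\la\ve{Y}^2)] \\
&= \KL(\psi_tq_t||q_t) + \log \E[\exp(\la\ve{Y}^2)].
\end{align*}
Plugging in the Gaussian MGF above and choosing $\la = 1/(4\si_Y^2)$ (so that $\la^{-1} = 4\si_Y^2$ and $-\log(1-2\la\si_Y^2) = \log 2$) yields
\begin{align*}
\E[\psi_t(z_t)\ve{Y}^2] \le 4\si_Y^2\pa{\KL(\psi_tq_t||q_t) + \tfrac d 2 \log 2}.
\end{align*}

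It remains to replace $\KL(\psi_tq_t||q_t)$ by the divergence $\KL(\psi_tq_t||p_t)$ appearing in the statement. Writing $\phi_t = q_t/p_t$, the difference is
\begin{align*}
\KL(\psi_t q_t||p_t) - \KL(\psi_tq_t||q_t) = \E_{\psi_tq_t}[\log \phi_t] = \fc{\E_{p_t}[\phi_t^2 \log\phi_t]}{\E_{p_t}\phi_t^2},
\end{align*}
and the elementary inequality $x\log x \ge x - 1$ together with $\E_{p_t}\phi_t = 1$ gives $\E_{p_t}[\phi_t^2\log\phi_t] \ge \E_{p_t}\phi_t^2 - 1 \ge 0$, the last step by Cauchy--Schwarz. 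Hence $\KL(\psi_tq_t||q_t) \le \KL(\psi_tq_t||p_t)$, and the above bound implies the stated $2(\exp(G_{t_k,t})-1)(8\KL(\psi_tq_t||p_t) + 2d + 16\log 2)$ with room to spare. The proof is essentially mechanical; the only real subtlety is this KL comparison, which is what motivates normalizing $\psi_t$ by $\E_{p_t}\phi_t^2$ (so that the natural KL produced by the Gibbs principle is dominated by the KL divergence appearing in the analysis).
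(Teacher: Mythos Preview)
Your proof is correct and in fact yields a slightly sharper constant than the stated lemma. The route, however, differs from the paper's in two places worth noting.

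First, the paper applies Donsker--Varadhan to the \emph{centered} quantity $c(\ve{Y}-\E\ve{Y})^2$ and then invokes sub-Gaussian concentration of $\ve{Y}$ around its mean (so that $\E\exp[c(\ve{Y}-\E\ve{Y})^2]\le 2$ for $c=1/(8\si_Y^2)$), adding back $2\E\ve{Y}^2=2d\si_Y^2$ at the end. You instead apply the variational principle directly to $\la\ve{Y}^2$ and use the exact chi-squared moment generating function $(1-2\la\si_Y^2)^{-d/2}$. Your approach is more direct and sharper here; the paper's centering trick (following \cite{chewi2021analysis}) is more robust to situations where only sub-Gaussian tails, not an exact MGF, are available.

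Second, both proofs must pass from $\KL(\tilde\Pj\|\Pj)=\E_{\psi_tq_t}\ln\psi_t=\KL(\psi_tq_t\|q_t)$ to $\KL(\psi_tq_t\|p_t)$. The paper computes directly: writing $\psi_t=\phi_t/\E_{p_t}\phi_t^2$ and $\psi_tq_t/p_t=\phi_t^2/\E_{p_t}\phi_t^2$, one gets $\ln\psi_t=\tfrac12\ln(\psi_tq_t/p_t)-\tfrac12\ln\E_{p_t}\phi_t^2$, hence $\KL(\psi_tq_t\|q_t)\le\tfrac12\KL(\psi_tq_t\|p_t)$. You obtain the weaker $\KL(\psi_tq_t\|q_t)\le\KL(\psi_tq_t\|p_t)$ via $x\ln x\ge x-1$. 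Either suffices for the lemma as stated, but the paper's factor of $\tfrac12$ explains why the final constant in the statement has an $8$ rather than a $16$ in front of the KL term.
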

\begin{proof}
Note that 
$Y:= \int_{t_k}^t \exp\pa{\rc 2 \int_{t_k}^{t'} g(T-t'')^2\,dt''} g(t')\,  dw_{t'}$ is a Gaussian random vector with variance 
\begin{align*}
    \int_{t_k}^t\exp \pa{\int_{t_k}^{t'} g(T-t'')^2\,dt''}g(t')^2\,dt' \cdot I_d
    &= \exp\pa{\int_{t_k}^{t'} g(T-t'')^2\,dt''}\Big|^{t'=t}_{t'=t_k} \cdot I_d \\
    &= (\exp(G_{t_k,t})-1)\cdot I_d.
\end{align*}
(Note that this calculation shows that the continuous-time process~\eqref{e:cts-time} does agree with the discrete-time process~\eqref{e:ei1} at $t=t_{k+1}$.)
Using the Donsker-Varadhan variational principle, for any random variable $X$,
\begin{align*}
    \Tilde{\mathbb E} X \leq \KL(\Tilde{\mathbb P}||\mathbb P) + \ln \mathbb E \exp X.
\end{align*}
Applying this to $X=c\pa{\ve{Y}-\E\ve{Y}}^2$ for a constant $c>0$ to be chosen later, and $\wt \Pj$ such that $\dd{\wt \Pj}{\Pj}(z_t) = \psi_t(z_t)$, we can bound
\begin{align}
\nonumber
    \Tilde{\mathbb E}\ve{Y}^2 & \leq
    2\E\ba{\ve{Y}^2} + 2\wt \E \ba{(Y-\E\ve{Y})^2}\\
    &\le 
    2\E\ba{\ve{Y}^2}  + \fc2c\ba{\KL(\Tilde{\mathbb P}||\mathbb P) + \ln \mathbb E \exp\pa{c\pa{\ve{Y} - \E \ve{Y}}^2}}\\
     & \leq 2d(\exp(G_{t_k,t})-1)  + \fc2c\ba{\KL(\Tilde{\mathbb P}||\mathbb P) + \ln \mathbb E \exp\pa{c\pa{\ve{Y} - \E \ve{Y}}^2}}
    .
    \label{e:dv-gaussian}
\end{align}
%where $\fc{d\Tilde{\mathbb P}}{d \mathbb P} = \psi_t(z_t)$. 
Now following \cite[Theorem 4]{chewi2021analysis}, we set $c = \fc{1}{8(\exp(G_{t_k,t})-1)}$, so that 
\begin{align*}
    \E \ba{
        \fc{(\ve{Y}-\E \ve{Y})^2}{8(\exp(G_{t_k,t})-1)}
    }
    \leq 2.
\end{align*}
Next, we have
\begin{align*}
    \KL(\Tilde{\mathbb P}||\mathbb P) & = \E_{\psi_t q_t}\ln \psi_t = \E_{\psi_t q_t}\ln \fc{\phi_t}{\E_{p_t}\phi_t^2} = \fc12\E_{\psi_t q_t}\ln \fc{\phi_t^2}{(\E_{p_t}\phi_t^2)^2}\\
    &= \fc12 \ba{\E_{\psi_t q_t}\ln \fc{\phi_t^2}{\E_{p_t}\phi_t^2} - \ln\E_{p_t}\phi_t^2}
    = \rc 2\ba{\E_{\psi_t q_t}\ln \fc{\psi_tq_t}{p_t} - \ln\E_{p_t}\phi_t^2}.
\end{align*}
Noting that $\E_{p_t}\phi_t^2 = \chi^2(q_t||p_t) + 1 \geq 1$, we have that
\begin{align*}
    \KL (\Tilde{\mathbb P}||\mathbb P) & \leq \fc12 \KL (\psi_t q_t||p_t).
    %\leq \fc{C_t}{\chi^2(q_t||p_t)+1}\cdot\sE_{p_t}\pf{q_t}{p_t},
\end{align*}
Substituting everything into~\eqref{e:dv-gaussian} gives the desired inequality.
% where the last inequality is due to Lemma~\ref{KL_psi_q&p}. We have proved
% \begin{align}
%     &\E\ba{\psi_t(z_t)\ve{\int_{kh}^t g(T-s)\,dw_s}^2}\notag \\
%     & \leq 2d\int_{kh}^t g(T-s)^2\,ds + 16\int_{kh}^t g(T-s)^2 ds\cdot\ba{\fc{C_t}{\chi^2(q_t||p_t)+1}\cdot\sE_{p_t}\pf{q_t}{p_t} + \ln 2}\notag\\
%     & \leq 2%(t-kh)g(T-kh)^2
%     \int_{kh}^t g(T-s)^2\,ds
%     \cdot\ba{\fc{8C_t}{\chi^2(q_t||p_t)+1}\cdot\sE_{p_t}\pf{q_t}{p_t} + d + 8\ln 2}. \qedhere
% \end{align}
% %where the last inequality is due to the fact that $g(t)$ is an increasing function.
\end{proof}

Let 
\begin{align}
\label{e:Kz}
    K_z &= \E\ba{\psi_t(z_t)\ve{z_t}^2}\\
\label{e:KV}
    K_V &= \E\ba{\psi_t(z_t)\ve{\nb\ln p_t(z_t)}^2}\\
\label{e:KDV}
    K_{\De V} &= \E\ba{\psi_t(z_t)\ve{\nb \ln p_{t_k}(z_t) - \nb\ln p_t(z_t)}^2}\\
\label{e:K}
    K&= \KL(\psi_tq_t||p_t).
\end{align}
In order to bound the RHS in Lemma~\ref{l:chi-ineq}, we need to bound all four of these quantities, which we do in Lemma~\ref{second_moment},~\ref{l:DV_nb_ln_p},~\ref{l:K-DeV}, and Section~\ref{s:KL}, respectively. The main innovation in our analysis compared to~\cite{lee2022convergence} is a new way to bound $K$, which we present in a separate section.
%devote Section~\ref{s:KL} to analyzing

%\subsection{Bounding $K_z$}
First we bound $K_z$. 
Recall the norm 
\[
\ve{X}_{2, \psi_2} = \inf\set{L>0}{\E e^{\fc{\ve{X}_2^2}{L^2}}\le 2}.
\]
(In other words, this is the usual Orlicz norm applied to $\ve{X}_2$.)
%\hlnote{usually given for 1-D random variables but should be ok for multi-D?}
%\jl{we could define it on $\lVert X \rVert_2$ which is the standard i think}
\begin{lem}\label{second_moment}
%With the setting of Lemma~\ref{d_chi2}, we have
For $t\in [t_k,t_{k+1}]$,
\begin{align*}
    \E\ba{\psi_t(z_t)\ve{z_t}^2} & \leq    \ve{x_t}_{2,\psi_2}^2\cdot\ba{\KL(\psi_t q_t||p_t) + \ln 2}.
\end{align*}
%where $C_t$ is the LSI constant of $p_t$, which is bounded in Lemma~\ref{l:lsi-conv}, and the second moment of $p_t$ is bounded in Lemma~\ref{2ed_moment}.
\end{lem}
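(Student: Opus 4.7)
The plan is to view $\E[\psi_t(z_t)\|z_t\|^2]$ as an expectation under the probability measure $\widetilde{\mathbb P}$ whose density (with respect to $\Pj_t$, the law of $x_t$) is $\psi_t q_t / p_t$, or equivalently, $\widetilde{\mathbb P}$ has density $\psi_t q_t$ with respect to Lebesgue measure. Since $\int \psi_t(z) q_t(z)\,dz = \E_{p_t}\phi_t^2 / \E_{p_t}\phi_t^2 = 1$, this is indeed a probability measure, and $\E[\psi_t(z_t)\|z_t\|^2] = \widetilde{\E}\|X\|^2$ where $X$ has law $\widetilde{\mathbb P}$.

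The key tool is the Donsker--Varadhan variational principle (exactly as used in the proof of Lemma~\ref{second_moment_diffusion}): for any $c > 0$,
\begin{align*}
\widetilde{\E}[c \|X\|^2] \le \KL(\widetilde{\mathbb P}\,\|\,\Pj_t) + \ln \E_{p_t} \exp(c \|x_t\|^2),
\end{align*}
where the KL divergence is $\KL(\psi_t q_t \| p_t)$. The natural choice is $c = 1/\|x_t\|_{2,\psi_2}^2$, because by the very definition of the Orlicz $\psi_2$ norm recalled just before the lemma statement, this choice gives $\E_{p_t} \exp(c\|x_t\|^2) \le 2$. Plugging in and dividing through by $c$ then yields
\begin{align*}
\E[\psi_t(z_t) \|z_t\|^2] \le \|x_t\|_{2,\psi_2}^2 \bigl(\KL(\psi_t q_t \| p_t) + \ln 2\bigr),
\end{align*}
which is the claim.

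There is essentially no obstacle here; the only substantive observation is that $\psi_t q_t$ is a probability density (which is immediate from the normalization in the definition~\eqref{e:psi} of $\psi_t$), and that the Orlicz norm $\|x_t\|_{2,\psi_2}$ is the right quantity to use so that the cumulant-generating-function term on the right of Donsker--Varadhan is bounded by a universal constant ($\ln 2$). This is the same pattern used to handle the Gaussian tail term $Y$ in Lemma~\ref{second_moment_diffusion}, just applied to the distribution of $x_t$ itself rather than to a Gaussian increment.
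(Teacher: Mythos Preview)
Your proposal is correct and follows essentially the same approach as the paper: apply the Donsker--Varadhan variational principle to $\psi_t q_t$ against $p_t$, and choose the constant $c=1/\ve{x_t}_{2,\psi_2}^2$ so that the moment-generating term is bounded by $2$ by definition of the Orlicz norm. The paper's proof is identical up to the reparameterization $s=2c$.
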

\begin{proof}
By the Donsker-Varadhan variational principle,
\begin{align*}
    \E\ba{\psi_t(z_t)\ve{z_t}^2} & = \fc{2}{s}\E_{\psi_t q_t}\ba{\fc{s}{2}\ve{x}^2}\leq \fc{2}{s}\ba{\KL(\psi_t q_t||p_t) + \ln\E_{p_t}\ba{e^{\fc{s}{2}\ve{x}^2}}}
\end{align*}
for any $s>0$.
Choosing $s = 2\ve{x_t}_{2,\psi_2}^{-2}$, we have
$
    \E_{p_t}\ba{e^{\fc{s}{2}\ve{x}^2}} \leq 2, 
$
which gives the desired inequality.
% so
% \begin{align*}
%     \E\ba{\psi_t(z_t)\ve{z_t}^2} & \leq \ve{x_t}_{\psi_2}^2\cdot\ba{\KL(\psi_t q_t||p_t) + \ln 2}.
% \end{align*}
\end{proof}

The following bounds $K_V$; note that the proof does not depend on the definition of $q_t$, only that it is a probability density.
\begin{lem}[{\cite[Corollary C.7]{lee2022convergence}, \cite[Lemma 16]{chewi2021analysis}}]\label{l:DV_nb_ln_p}
%In the setting of Lemma~\ref{d_chi2}, it holds that
\begin{align*}
    \E\ba{\psi_t(z_t)\ve{\nb \ln p_t(z_t)}^2} &\leq
    \fc{4}{\chi^2(q_t||p_t)+1} \cdot\sE_{p_t}\pf{q_t}{p_t}+2dL.
\end{align*}
\end{lem}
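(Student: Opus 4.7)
\medskip

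\noindent\emph{Proof plan.} The plan is to rewrite the weighted second moment of $\nabla\ln p_t$ as a Dirichlet-form type expression using integration by parts, then close the resulting recursion via Young's inequality. Writing $\psi_t(z_t) = \phi_t(z_t)/\E_{p_t}\phi_t^2$ with $\phi_t = q_t/p_t$, and recalling $\E_{p_t}\phi_t^2 = \chi^2(q_t\|p_t)+1$, the claim is equivalent to the bound
\[
\E_{p_t}\!\ba{\phi_t^2\,\ve{\nabla\ln p_t}^2} \;\leq\; 4\,\sE_{p_t}(\phi_t) + 2dL\cdot(\chi^2(q_t\|p_t)+1),
\]
so I would work directly with the left-hand side.

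The starting point is the identity
\[
\E_{p_t}\!\ba{f\,\ve{\nabla\ln p_t}^2} \;=\; -\E_{p_t}\!\ba{f\,\Delta\ln p_t} \;-\; \E_{p_t}\!\ba{\nabla f\cdot \nabla\ln p_t},
\]
valid for any smooth $f\geq 0$ with adequate decay. This follows by writing $\|\nabla\ln p_t\|^2 p_t = \nabla\ln p_t\cdot\nabla p_t$, integrating by parts, and using $\Delta p_t = p_t\,\ve{\nabla\ln p_t}^2 + p_t\,\Delta\ln p_t$. Taking $f=\phi_t^2$ and using that $L$-smoothness of $\ln p_t$ (in the sense $\nabla^2\ln p_t \succeq -L I$) gives $-\Delta\ln p_t \leq dL$, I bound the first term by $dL\cdot\E_{p_t}\phi_t^2$.

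For the cross term, since $\nabla(\phi_t^2) = 2\phi_t\nabla\phi_t$, Young's inequality with parameter $\tfrac12$ yields
\[
\bigl|2\phi_t\nabla\phi_t\cdot\nabla\ln p_t\bigr| \;\leq\; \tfrac12\,\phi_t^2\,\ve{\nabla\ln p_t}^2 \;+\; 2\ve{\nabla\phi_t}^2.
\]
Substituting back, moving the $\tfrac12$ term to the left and multiplying by $2$ gives
\[
\E_{p_t}\!\ba{\phi_t^2\,\ve{\nabla\ln p_t}^2} \;\leq\; 2dL\cdot\E_{p_t}\phi_t^2 \;+\; 4\,\E_{p_t}\ve{\nabla\phi_t}^2 \;=\; 2dL(\chi^2(q_t\|p_t)+1) + 4\,\sE_{p_t}(\phi_t).
\]
Dividing by $\E_{p_t}\phi_t^2 = \chi^2(q_t\|p_t)+1 \geq 1$ and recognizing the Dirichlet form $\sE_{p_t}(q_t/p_t)$ produces the stated inequality.

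I expect no real obstacle here: the integration-by-parts identity and the Young's inequality step are routine once one thinks to split the gradient of $\phi_t^2$. The only subtlety is justifying the integration by parts (no boundary contributions), which follows from the fact that $p_t$ is a smooth density obtained by convolving with a Gaussian, so $\phi_t^2 \nabla\ln p_t\cdot p_t$ decays rapidly at infinity; this is standard and I would not dwell on it.
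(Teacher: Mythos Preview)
Your proof is correct. The paper does not include its own proof of this lemma; it simply cites \cite[Corollary C.7]{lee2022convergence} and \cite[Lemma 16]{chewi2021analysis}. The argument you give---integration by parts to produce the $-\Delta\ln p_t$ term, the smoothness bound $-\Delta\ln p_t\le dL$, and Young's inequality to absorb the cross term---is precisely the standard proof found in those references, so there is nothing to compare.
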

%%%

We use the following lemma to bound $K_{\De V}$ in Lemma~\ref{second_moment_s_theta}.
\begin{lem}[{\cite[Lemma C.12]{lee2022convergence}}]\label{l:perturb_DDPM}
Suppose that $p(x)\propto e^{-V(x)}$ is a probability density on $\R^d$, where $V(x)$ is $L$-smooth. Let $p_\alpha(x) = \alpha^d p(\alpha x)$ and $\ph_{\si^2}(x)$ denote the density function of $N(0,\si^2 I_d)$. Then for $\si^2\le \rc{2\al^2L}$,  
\[
\ve{\nb \ln \fc{p(x)}{(p_\alpha *\ph_{\si^2})(x)}} \le 6\alpha^2 L\sigma d^{1/2} + (\alpha+2\alpha^3 L\sigma^2)(\alpha -1) L\ve{x} + (\alpha-1 + 2\alpha^3 L\sigma^2)\ve{\nb V(x)}.
\]
\end{lem}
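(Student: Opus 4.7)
The plan is to apply the triangle inequality to split
\[
\nabla \ln \frac{p(x)}{(p_\alpha * \varphi_{\sigma^2})(x)} = \bigl(\nabla \ln p(x) - \nabla \ln p_\alpha(x)\bigr) + \bigl(\nabla \ln p_\alpha(x) - \nabla \ln (p_\alpha * \varphi_{\sigma^2})(x)\bigr)
\]
and bound the two brackets separately. The first bracket captures the effect of the rescaling $x \mapsto \alpha x$: since $\nabla \ln p_\alpha(x) = -\alpha \nabla V(\alpha x)$, the difference equals $(\alpha - 1) \nabla V(x) + \alpha[\nabla V(\alpha x) - \nabla V(x)]$, and by $L$-smoothness of $V$ this has norm at most $(\alpha - 1) \|\nabla V(x)\| + \alpha L (\alpha - 1) \|x\|$. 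This already accounts exactly for the two non-smoothing contributions $(\alpha - 1)\|\nabla V(x)\|$ and $\alpha L(\alpha - 1)\|x\|$ in the target bound.

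For the second bracket, I would invoke Tweedie's formula. Writing $q := p_\alpha * \varphi_{\sigma^2}$ as the law of $X = Y + \sigma Z$ with $Y \sim p_\alpha$ and $Z \sim N(0, I_d)$ independent, integration by parts gives $\nabla \ln q(x) = \mathbb{E}[\nabla \ln p_\alpha(Y) \mid X = x]$, so by the $\alpha^2 L$-Lipschitzness of $\nabla \ln p_\alpha$ (which follows from $\|\nabla^2 V\| \le L$ and the chain rule),
\[
\|\nabla \ln p_\alpha(x) - \nabla \ln q(x)\| \le \alpha^2 L \cdot \mathbb{E}[\|x - Y\| \mid X = x].
\]
By Cauchy--Schwarz followed by the variance decomposition and a second use of Tweedie ($\mathbb{E}[Y \mid X = x] - x = \sigma^2 \nabla \ln q(x)$), this is bounded by $\alpha^2 L \sqrt{\sigma^4 \|\nabla \ln q(x)\|^2 + \operatorname{tr}\operatorname{Cov}(Y \mid X = x)}$.

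The main obstacle is controlling $\operatorname{tr}\operatorname{Cov}(Y \mid X = x)$, since $V$ is only $L$-smooth rather than log-concave. The key observation is that the posterior density $\pi(y \mid x) \propto e^{-V(\alpha y) - \|x - y\|^2/(2\sigma^2)}$ has Hessian $\alpha^2 \nabla^2 V(\alpha y) + \sigma^{-2} I_d \succeq (\sigma^{-2} - \alpha^2 L)\, I_d$, so the hypothesis $\sigma^2 \le 1/(2\alpha^2 L)$ forces $\sigma^{-2} - \alpha^2 L \ge \sigma^{-2}/2$, and hence $\pi(\cdot \mid x)$ is strongly log-concave with constant at least $1/(2\sigma^2)$. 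The Brascamp--Lieb inequality then yields $\operatorname{Cov}(Y \mid X = x) \preceq 2\sigma^2 I_d$ and hence $\operatorname{tr}\operatorname{Cov}(Y \mid X = x) \le 2 d \sigma^2$.

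Substituting this and using $\sqrt{a + b} \le \sqrt a + \sqrt b$ gives $\|\nabla \ln p_\alpha(x) - \nabla \ln q(x)\| \le \alpha^2 L \sigma^2 \|\nabla \ln q(x)\| + \alpha^2 L \sigma \sqrt{2d}$. A final triangle inequality $\|\nabla \ln q(x)\| \le \|\nabla \ln p_\alpha(x)\| + \|\nabla \ln p_\alpha(x) - \nabla \ln q(x)\|$ together with $\alpha^2 L \sigma^2 \le 1/2$ lets me solve for the unknown quantity on the left, obtaining $\|\nabla \ln p_\alpha(x) - \nabla \ln q(x)\| \le 2\alpha^2 L \sigma^2 \|\nabla \ln p_\alpha(x)\| + 2\alpha^2 L \sigma \sqrt{2d}$. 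Bounding $\|\nabla \ln p_\alpha(x)\| = \alpha \|\nabla V(\alpha x)\| \le \alpha \|\nabla V(x)\| + \alpha L (\alpha - 1) \|x\|$ once more via smoothness, adding the first-bracket bound, and using $2\sqrt 2 < 6$ recovers the claimed inequality.
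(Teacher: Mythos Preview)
Your proof is correct. The paper does not actually prove this lemma---it is quoted without proof from \cite[Lemma~C.12]{lee2022convergence}---so there is no in-paper argument to compare against. Your decomposition into the rescaling contribution $\nabla\ln p - \nabla\ln p_\alpha$ and the smoothing contribution $\nabla\ln p_\alpha - \nabla\ln(p_\alpha*\ph_{\si^2})$, followed by the Tweedie identity $\nabla\ln q(x)=\E[\nabla\ln p_\alpha(Y)\mid X=x]$ and the Brascamp--Lieb bound on the posterior covariance (valid precisely because the hypothesis $\si^2\le 1/(2\al^2 L)$ makes the posterior $\tfrac{1}{2\si^2}$-strongly log-concave), reproduces each of the three terms in the stated bound with matching constants, and the final $2\sqrt2<6$ leaves room to spare.
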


\medskip 

\begin{lem}\label{second_moment_s_theta}
\label{l:K-DeV}
%With the setting of Lemma~\ref{d_chi2}, we have the following bound of the second moment of estimated score function with respect to $\psi_t q_t$:
Suppose that 
$h_k\le \rc{4Lg(T-t_k)^2}$ where $\nb \ln p_t$ is $L_{T-t}$-smooth ($L_{T-t}\ge 1$) and $L= \max_{t\in [t_k,t_{k+1}]} L_{T-t}$.
For $t\in[t_k,t_{k+1}]$, 
\begin{align*}
   & \E\ba{\psi_t(z_t)\ve{\nb\ln p_{t_k}(z_t) - \nb\ln p_{t}(z_t)}^2}\\
   & \leq 
   25L_{T-t}^2 \pa{8G_{t_k,t} d + G_{t_k,t}^2\E\ba{\psi_t(z_t)\ve{z_t}^2}}
   + 100L_{T-t}^2 \Gkht^2\E\ba{\psi_t(z_t)\ve{\nb\ln p_t(z_t)}^2}
    % 44G_t L^2 d + 44 G_t^2L^2 \E\ba{\psi_t(z_t)\ve{z_t}^2}
    % + 100 G_tL^2 \pa{\fc{4}{\chiqpo}\FIqp + 2dL}.
    %\fc{4C_{t,L}\Gkht + 8}{\chi^2(q_t||p_t)+1}\cdot\sE_{p_t}\pf{q_t}{p_t} + 4(\Mkh^2 + C_{d,L} +dL),
\end{align*}
%where $C_{t, L}$ and $C_{d,L}$ are constants defined in Lemma~\ref{l:perturb_error}.
\end{lem}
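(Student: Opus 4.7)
The plan is to produce a pointwise bound on $\ve{\nb\ln p_{t_k}(x)-\nb\ln p_t(x)}$ by identifying $p_{t_k}$ as a short OU perturbation of $p_t$, applying Lemma~\ref{l:perturb_DDPM} pointwise, and then taking the $\psi_t$-weighted expectation. Since $t\in[t_k,t_{k+1}]$ gives $T-t\le T-t_k$, the Markov property of the (stationary-variance-$1$) OU forward process from time $T-t$ to $T-t_k$ yields
\[
\wt x_{T-t_k}=\al\,\wt x_{T-t}+\tau z,\quad z\sim N(0,I_d)\text{ independent,}
\]
with $\al=\exp\pa{-\tfrac12 G_{t_k,t}}\in(0,1]$ and $\tau^2=1-\al^2=1-\exp(-G_{t_k,t})$. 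In the notation of Lemma~\ref{l:perturb_DDPM}, this becomes $p_{t_k}=(p_t)_{\tilde\al}*\ph_{\tau^2}$ with the lemma's $\alpha$ parameter taken to be $\tilde\al:=1/\al\ge 1$ and the lemma's $\sigma$ taken to be $\tau$; note that $(p_t)_{\tilde\al}(y)=\tilde\al^d p_t(\tilde\al y)=\al^{-d}p_t(y/\al)$ is the density of $\al X$ for $X\sim p_t$.

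The step-size assumption $h_k\le 1/(4Lg(T-t_k)^2)$ and monotonicity of $g$ give $G_{t_k,t}\le 1/(4L)$, hence $\tau^2\le G_{t_k,t}\le 1/(4L)$, $\al\ge\sqrt{3/4}$, and $\tilde\al-1=(1-\al)/\al\le G_{t_k,t}$, which is comfortably enough to meet the lemma's hypothesis $\tau^2\le 1/(2\tilde\al^2 L_{T-t})$. Taking $V=-\ln p_t$ (which is $L_{T-t}$-smooth), Lemma~\ref{l:perturb_DDPM} then gives
\[
\ve{\nb\ln p_{t_k}(x)-\nb\ln p_t(x)}\le 6\tilde\al^2 L_{T-t}\tau\sqrt{d}+(\tilde\al+2\tilde\al^3 L_{T-t}\tau^2)(\tilde\al-1)L_{T-t}\ve{x}+(\tilde\al-1+2\tilde\al^3 L_{T-t}\tau^2)\ve{\nb\ln p_t(x)}.
\]

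Squaring via $(a+b+c)^2\le 3(a^2+b^2+c^2)$, absorbing the $O(1)$ factors using $\tilde\al\le 2/\sqrt{3}$ and $L_{T-t}\tau^2\le 1/4$, and then applying the elementary estimates $\tau^2\le G_{t_k,t}$, $(\tilde\al-1)^2\le G_{t_k,t}^2$, and $L_{T-t}^2\tau^4\le L_{T-t}^2 G_{t_k,t}^2$ (using $L_{T-t}\ge 1$) yields the pointwise bound
\[
\ve{\nb\ln p_{t_k}(x)-\nb\ln p_t(x)}^2\le 25 L_{T-t}^2\pa{8 G_{t_k,t}d+G_{t_k,t}^2\ve{x}^2}+100 L_{T-t}^2 G_{t_k,t}^2\ve{\nb\ln p_t(x)}^2.
\]
Multiplying by $\psi_t(z_t)$ and integrating then gives the stated inequality. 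The main obstacle is the bookkeeping between the lemma's convention (where $p_\alpha$ denotes the density of $X/\alpha$) and the natural direction here (noising $\al X+\tau z$ with $\al<1$), which forces us to take the lemma's parameter to be $1/\al>1$; secondarily, extracting the clean constants $25$, $8$, $100$ rather than a nonspecific $O(1)$ requires careful tracking of each of the three terms coming out of Lemma~\ref{l:perturb_DDPM}.
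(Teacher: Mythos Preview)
Your proof is correct and essentially identical to the paper's: both identify $p_{t_k}=(p_t)_\alpha*\ph_{\sigma^2}$ with $\alpha=e^{G_{t_k,t}/2}$ (your $\tilde\alpha$) and $\sigma^2=1-e^{-G_{t_k,t}}$, invoke Lemma~\ref{l:perturb_DDPM} pointwise, and square. The only cosmetic differences are that the paper goes directly to the lemma's $\alpha$ rather than passing through the OU contraction factor $1/\alpha$, and it uses the splitting $(a+b+c)^2\le 2a^2+4b^2+4c^2$ rather than your $3(a^2+b^2+c^2)$ when deriving the same numerical constants.
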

\begin{proof}
%In both SMLD and DDPM models, 
We have the following relationship for $t\in[t_k,t_{k+1}]$:
\begin{align*}
    p_{t_k} = (p_t)_\al* \ph_{\sigma^2}.
\end{align*}
where $p_\al(x)=\al^d p(\al x)$,
%. In SMLD, $\alpha=1$ and $\sigma^2 =\int_{t_k}^t g(T-s)^2\, ds$, while in DDPM, 
$\alpha = e^{\fc12\int_{t_k}^t g(T-s)^2\,ds}$ and $\sigma^2 = 1 - e^{-\int_{t_k}^t g(T-s)^2\,ds}$. %In both cases, if $h\leq \fc{1}{2g(T)^2}$, then $\sigma^2\leq \fc12$.
Observe that since $h_k\leq\fc{1}{4g(T-t_k)^2}$,
\begin{align*}
    \alpha &\leq 1+\int_{t_k}^t g(T-s)^2ds \leq 1+ h_k g(T-t_k)^2\leq 1+\rc 4\\
    \sigma^2 &= 1 - e^{-\int_{t_k}^t g(T-s)^2ds} \leq \int_{t_k}^t g(T-s)^2ds \leq h_k g(T-t_k)^2 \le  \fc14.
\end{align*}
%By Lemma~\ref{l:perturb_DDPM}, using the assumption that $L\geq 1$, we obtain
We note that 
\begin{align*}
    \si^2 \le h_k g(T-t_k)^2 \le \rc{4L_t} \le \rc{2\al^2L_t}
\end{align*}
so the hypothesis of Lemma~\ref{l:perturb_DDPM} is satisfied.
Using Lemma~\ref{l:perturb_DDPM}, we obtain
\begin{align*}
    &\E\ba{\psi_t(z_t)\ve{\nb\ln p_{t_k}(z_t) - \nb\ln p_{t}(z_t)}^2} \\
    & \leq 72\alpha^4L_{T-t}^2\sigma^2d + 4(\alpha + 2\alpha^3L_{T-t}\sigma^2)^2(\al-1)^2L_{T-t}^2\E\ba{\psi(z_t)\ve{z_t}^2} \\
    &\quad 
    + 4(\alpha - 1 + 2\alpha^3L_{T-t}\sigma^2)^2\E\ba{\psi_t(z_t)\ve{\nb\ln p_t(z_t)}^2}\\
    % &\le 
    % 72\alpha^4L^2\sigma^2d + 44L^2 \Gkht^2 \E\ba{\psi(z_t)\ve{z_t}^2}
    % + 
    % 100L^2 \Gkht^2\E\ba{\psi_t(z_t)\ve{\nb\ln p_t(z_t)}^2}
    % \\
    % & \le 44 L^2 d \Gkht
    %  + 44 L^2 \E\ba{\psi_t(z_t)\ve{z_t}^2} \Gkht^2 + 
    %100L^2 \Gkht^2\E\ba{\psi_t(z_t)\ve{\nb\ln p_t(z_t)}^2}
    &\le 72(5/4)^4L_{T-t}^2G_{t_k,t}d
    + 4(2\al)^2 G_{t_k,t}^2 L_{T-t}^2 \E\ba{\psi(z_t)\ve{z_t}^2}\\
    &\quad 
    + 4(G_{t_k,t} + 4L_{T-t}G_{t_k,t})^2 \E\ba{\psi_t(z_t)\ve{\nb\ln p_t(z_t)}^2}\\
    &\le 200L_{T-t}^2 dG_{t_k,t} + 25L_{T-t}^2 G_{t_k,t}^2 \E\ba{\psi(z_t)\ve{z_t}^2} + 100L_{T-t}^2G_{t_k,t}^2 \E\ba{\psi_t(z_t)\ve{\nb\ln p_t(z_t)}^2}.
\end{align*}
\end{proof}

%\hlnote{Putting everything together}

Now we put everything together. Write $G_t=G_{t_k,t}$ for short. Suppose $L_t$ is non-increasing. 
By Lemma~\ref{l:chi-ineq},
\begin{align*}
    \ddd t\chi^2(q_t||p_t) &\le -\rc 2 
    g(T-t)^2 \FIqp 
    +   12 g(T-t)^2(\chiqpo)\cdot E \\
    \text{where }
    E&\le 
        16G_t^2 L_{T-t_k}^2 (K_z+16K_V)+
        64G_tL_{T-t_k}^2(8K + 2d+ 16\ln 2)+
        \eik^2 + K_{\De V}
    .
\end{align*}
By Lemma~\ref{l:K-DeV}, $K_{\De V} \le 25L_{T-t}^2(8G_td+G_t^2K_z) + 100L_{T-t}^2G_t^2 K_V$, so
\begin{align*}
    E&\le 
        41G_t^2 L_{T-t}^2 K_z+
        356G_t^2 L_{T-t}^2 K_V+
        64G_tL_{T-t}^2(8K + 6d+ 16\ln 2)+
        \eik^2 
    .
\end{align*}
By Lemma~\ref{second_moment}, $K_z\le \ve{x_t}_{2,\psi_2}^2(K+\ln 2)$, 
and by Corollary~\ref{l:DV_nb_ln_p}, $K_V\le    \fc{4}{\chi^2(q_t||p_t)+1} \cdot\sE_{p_t}\pf{q_t}{p_t}+2dL$, so
\begin{align*}
    E&\le 
        41G_t^2 L_{T-t}^2 \pa{\ve{x_t}_{2,\psi_2}^2(K+\ln 2)}+
        356G_t^2 L_{T-t}^2 
        \pa{\fc{4}{\chi^2(q_t||p_t)+1} \cdot\sE_{p_t}\pf{q_t}{p_t}+2dL}\\
        &\quad 
        +
        64G_tL_{T-t}^2(8K + 6d+ 16\ln 2)+
        \eik^2 
    .
\end{align*}
Now, if $h_k\le \fc{\ep'_{h_k}}{20g(T-t_k)^2L_{T-t_{k+1}}}$, then 
\begin{align*}
    E &\le 
    {\ep'_{h_k}}^2\ba{\ve{x_t}_{2,\psi_2}^2(K+\ln 2)+
        \pa{\fc{4}{\chi^2(q_t||p_t)+1} \cdot\sE_{p_t}\pf{q_t}{p_t}+2dL_{T-t}}}\\
        &\quad 
        + 4\ep'_{h_k} L_{T-t}(8K + 2d+ 16\ln 2)
        +
        \eik^2 .
\end{align*}
Let $M_{T-t}:=\ve{x_t}_{2,\psi_2}^2$. 
Assume that $K\le \fc{A_{T-t}}{\chiqpo} + B_{T-t}$. Then we obtain
\begin{align*}
    &12 g(T-t)^2 (\chiqpo) \cdot E \\
    &\le 12 g(T-t)^2
    \Big[
        \FIqp({\ep'_{h_k}}^2  \cdot  (A_{T-t} M_{T-t} + 4) + \ep'_{h_k} \cdot 32L_{T-t}A_{T-t})
        \\
        &\hspace{8em} +(\chiqpo)({\ep'_{h_k}}^2 \cdot ((B_{T-t}+\ln 2)M_{T-t} + 2dL) \\
        &\hspace{8em} +
        \ep'_{h_k} \cdot L_{T-t} (8B_{T-t} + 6d + 16\ln 2))+\eik^2
    \Big].
\end{align*}
If $\ep'_{h_k} \le \min\bc{\rc{\sqrt{48(A_{T-t}M_{T-t}+4)}}, \fc{1}{128 L_{T-t}A_{T-t}}}$, then
\begin{align*}
    \ddd t\chi^2(q_t||p_t) 
    & \le 12 g(T-t)^2
    \Big[(\chiqpo)({\ep'_{h_k}}^2  \cdot ((B_{T-t}+\ln 2)M_{T-t} + 2dL_{T-t}) \\
    & \qquad \qquad +
        \ep'_{h_k} \cdot L_{T-t}(8B_{T-t} + 6d+16\ln 2))+\eik^2
    \Big].
\end{align*}
If $\ep'_{h_k} \le \min\bc{\fc{\sqrt{\ep'}}{g(T-t)\sqrt{24(T-t_k)((B_{T-t}+\ln 2)M_{T-t}+2dL_{T-t})}}, \fc{\ep'}{24 g(T-t)^2 (T-t) L_{T-t} (8B_{T-t} + 6d+16\ln 2)}}$, we get
\begin{align*}
    \ddd t\chi^2(q_t||p_t) 
    & \le \fc{\ep'}{T-t} (\chiqpo) + \eik^2 g(T-t)^2.
\end{align*}
Integration gives
\begin{align*}
    \chi^2(q_{t_k}||p_{t_k})
    &\le 
    e^{\ep' \int_0^{t_k} \rc{T-t}\,dt}(\chi^2(q_{0}||p_{0})+1) + \int_0^{t_k} e^{\int_t^{t_k} \fc{\ep'}{T-s}\,ds}\ep_{T-t}^2 g(T-t)^2\,dt\\
    &\le 
    \pf{T}{T-t_k}^{\ep'}\chi^2(q_0||p_0)
    + \pa{\pf{T}{T-t_k}^{\ep'} - 1}
    + %e^\ep 
    \int_0^{t_k} \pf{T-t}{T-t_k}^{\ep'} \ep_{T-t}^2 g(T-t)^2\,dt.
\end{align*}
Taking $\ep' = \fc{\ep}{\ln \pf{T}{T-t_N}}$ then gives the following Theorem~\ref{t:ddpm-liy}. We first introduce a technical assumption.
\begin{df}
Let $f:\R_{>0}\to \R_{>0}$. We say that $f$ has at most power growth and decay (with some constant $c>0$) if $\max_{u\in [\fc t2, t]}f(u) \in \ba{\fc{f(t)}{c}, cf(t)}$.
\end{df}
%\hlnote{Say something about approximating with one endpoint.}
\begin{thm}\label{t:ddpm-liy}
%There is a universal constant $c$ such that the following holds.
Suppose that the following hold.
\begin{enumerate}
    \item 
    Assumption~\ref{a:score-liy} holds for $\ep_{\iy,t}$.
    %\eqref{e:score-liy} holds for a non-increasing function $t\mapsto \ep_t$.
    \item $\ve{\wt x_t}_{2,\psi_2}^2 \le M_t$.
    \item The KL bound $
    \KL(\psi_t q_t||p_t) \le \fc{A_{T-t}}{\chi^2(q_t||p_t)+1} + B_{T-t}$ holds for any density $q_t$ and $t<t_N$, where $\psi_t(x)= \fc{q_t(x)/p_t(x)}{\chi^2(q_t||p_t)+1}$.
    \item $g(t), A_t, B_t, L_t, M_t$ have at most polynomial growth and decay (with some constant $c$).
\end{enumerate}
Then there is some constant $c'$ (depending on $c$) such that if 
% \begin{align*}
%     \ddd t\chi^2(q_t||p_t) 
%     & \le \fc{\ep}T (\chiqpo) + \ep_k^2 g(T-t)^2
% \end{align*}
% and
the step sizes satisfy 
    \begin{align*}
    h_k &\le \min\bc{\fc{T-t_k}2, \fc{c'\ep'_{h_k}}{g(T-t_k)^2 L_{T-t_k}}}, \\
    \text{where }
    \ep'_{h_k} &= 
    \min\Bigg\{
    \rc{\sqrt{A_{T-t_k}M_{T-t_k}+1}}, \fc{1}{L_{T-t_k}A_{T-t_k}},\\
    &\qquad \qquad
    \fc{\sqrt{\ep/\ln \pf{T}{T-t_N}}}{g(T-t_k)\sqrt{(T-t_k)((B_{T-t_k}+1)M_{T-t_k}+dL_{T-t_k})}}, \\
    &\qquad \qquad \fc{\ep/\ln \pf{T}{T-t_N}}{ g(T-t_k)^2 (T-t_k) L_{T-t_k} (B_{T-t_k} + d)}
    \Bigg\},
\end{align*}
then for $0\le k\le N$,
\begin{align*}
    \chi^2(q_{t_k}||p_{t_k})
    &\le e^{\ep}\chi^2(q_0||p_0)
    + (e^\ep - 1)
    + e^\ep 
    \int_0^{t_k}\ep_{\iy,T-t}^2 g(T-t)^2\,dt.
\end{align*}
%where, for $t\in [t_k,t_{k+1})$, we set $\ep_{T-t} = \ep_{T-t_k}$.
% \hlnote{Can try other settings of parameters, rather than shooting for uniform $\ep$ in time.}
\end{thm}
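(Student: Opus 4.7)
The proof is essentially a matter of assembling the pieces already developed in the section and carefully propagating the bounds through integration. The plan is to follow the displayed calculation that precedes the theorem statement, and then justify that the step-size conditions quoted in the theorem match the ones that appear in that calculation after using assumption~(4) on polynomial growth/decay.

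First, I would start from the differential inequality in Lemma~\ref{l:chi-ineq}, which gives
$\frac{d}{dt}\chi^2(q_t\|p_t) \le -\frac{1}{2}g(T-t)^2 \sE_{p_t}(q_t/p_t) + 12 g(T-t)^2(\chi^2(q_t\|p_t)+1)\cdot E$,
and then bound the error $E$ using Lemmas~\ref{second_moment_s_theta}, \ref{second_moment}, and \ref{l:DV_nb_ln_p} to eliminate $K_{\Delta V}$, $K_z$, $K_V$ respectively. Assumption~(3) gives the bound $K\le A_{T-t}/(\chi^2(q_t\|p_t)+1)+B_{T-t}$ needed to express the right-hand side purely in terms of $\chi^2$ and the Fisher information. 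This reproduces the displayed chain of inequalities already written out in the excerpt.

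Next, I would choose the step size in three stages. The first two bounds on $\epsilon'_{h_k}$ in the theorem (namely $(A_{T-t_k}M_{T-t_k}+1)^{-1/2}$ and $(L_{T-t_k}A_{T-t_k})^{-1}$) are chosen precisely to cancel the Fisher-information contribution $\sE_{p_t}(q_t/p_t)$ against the negative Fisher-information term from Lemma~\ref{l:chi-ineq}. The last two bounds are chosen so that the remaining $(\chi^2+1)$-coefficient is at most $\epsilon'/(T-t)$, where $\epsilon' = \epsilon/\ln(T/(T-t_N))$. Here is where assumption~(4) on polynomial growth and decay enters: since $h_k\le (T-t_k)/2$, for every $t\in[t_k,t_{k+1}]$ we have $t\in[t_k, t_k+h_k]\subset[(T-t)/c', c'(T-t)]$ for some constant depending on $c$, so evaluating the coefficients $A_{T-t}, B_{T-t}, L_{T-t}, M_{T-t}, g(T-t)$ at $t=t_k$ only changes them by a constant factor that can be absorbed into the constant $c'$ in the theorem. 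This reduces the per-step requirement to the stated form.

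After these choices, the differential inequality simplifies to
$\frac{d}{dt}\chi^2(q_t\|p_t) \le \frac{\epsilon'}{T-t}(\chi^2(q_t\|p_t)+1) + \epsilon_{\infty,T-t}^2 g(T-t)^2$.
Applying Gr\"onwall's inequality on $[0,t_k]$ with integrating factor $((T-t_k)/(T-t))^{\epsilon'}$, and using the identity $\int_0^{t_k}\frac{dt}{T-t} = \ln(T/(T-t_k))\le \ln(T/(T-t_N))$, yields the stated bound with $\epsilon'\ln(T/(T-t_N))=\epsilon$ in the exponential factor.

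The main obstacle, and what makes this a nontrivial bookkeeping exercise rather than a trivial assembly, is twofold. First, one must verify that all four constants $A, B, L, M$ enter only through the prescribed combinations in the step-size constraint, since otherwise a single missing factor would ruin the final dependencies claimed for the downstream Theorems~\ref{t:main-tv-w2}--\ref{t:tv-tail}. Second, the bound on $K=\KL(\psi_t q_t\|p_t)$ is assumed rather than proved here; the genuinely hard work of establishing such a bound without a global log-Sobolev inequality is deferred to Section~\ref{s:KL}, and this theorem's role is precisely to isolate that one structural ingredient from the rest of the discretization analysis.
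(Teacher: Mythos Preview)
Your proposal is correct and follows essentially the same approach as the paper: the paper's own proof simply states that the result ``follows from the above calculations'' together with the observation that replacing $F(T-t)$ by $F(T-t_k)$ only costs a constant factor because $h_k\le (T-t_k)/2$ and the coefficients have at most polynomial growth and decay. Your sketch spells out exactly this assembly in more detail, including the Gr\"onwall step and the choice $\epsilon'=\epsilon/\ln(T/(T-t_N))$, which is precisely what the paper does in the displayed calculations preceding the theorem.
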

%\hlnote{Note that for $g(t)=\sqrt t$, $p=2$, $g(t)^3 = t^{3/2}$, $L_t^2 = \rc{\si_t^4} = O\prc{t^4}$, }
\begin{proof}
This follows from the above calculations and the observation that if we replace $F(T-t)$ by $F(T-t_k)$, for some $F$ satisfying the power growth and decay assumption, then we change the bound by at most a constant factor,  
because the step size satisfies $h_k=t_{k+1}-t_k\le \fc{T-t_k}2$.
\end{proof}

We specialize this theorem in the case of distributions with bounded support. Note that although not every initial distribution $\wt p_t$ may satisfy a KL inequality as required by condition 3 of Theorem~\ref{e:good-KL-ineq}, Lemma~\ref{l:KL-covering} will give the existence of a distribution that does, and is close in TV-error. Later in Section~\ref{s:perturb}, we show that this will have a small effect on the score function, and hence allow us to prove our main theorems.
\begin{cor}\label{c:liy-bdd}
%\hlnote{Assuming conditions of Lemma~\ref{l:KL-covering}}
Suppose that Assumptions~\ref{a:score-liy} and~\ref{a:bd} hold, $R^2\ge d$, $g\equiv 1$, and that $\wt P_0$ is such that the KL inequality~\eqref{e:good-KL-ineq} holds. Let $\de = T-t_N$. If $0<\de,\ep<\rc 2$,
%$h_k = O\pf{\ep\si_{t_{k+1}}^8}{R^4Td\ln\pa{1+\fc{\si_{t_{k+1}}}R}}$, and $0<\ep<\rc2$. 
$h_k = O\pf{\ep}{\max\{T-t_k, (T-t_k)^{-3}\}R^4d\ln \pf{T}{\de}\ln\pa{\fc R{\de\ep_K}}}$, %, and $0<\ep<\rc2$.
then %\hlnote{letting $\ep$ be the $L^\iy$ error,}
%for small enough $\ep>0$, 
for any $0\le k\le N$,
\begin{align*}
    \chi^2(q_{t_k}||p_{t_k})
    &\le e^{\ep}\chi^2(q_0||p_0)
    + \ep 
    +  e^\ep \int_0^{t_k} \ep_{\iy,T-t}^2 %g(T-t)^2
    \,dt.
\end{align*}
\end{cor}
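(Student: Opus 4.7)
The plan is to deduce Corollary~\ref{c:liy-bdd} from Theorem~\ref{t:ddpm-liy} by instantiating the abstract parameters $M_t$, $L_t$, $A_t$, $B_t$ in the bounded-support setting, verifying the power growth/decay hypotheses, and then simplifying the four-way minimum defining $\ep'_{h_k}$.

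First I would pin down the subgaussian norm: with $g\equiv 1$, we have $\wt x_t = m_t \wt x_0 + \si_t z$ where $\ve{\wt x_0}\le R$ by Assumption~\ref{a:bd} and $z\sim N(0,I_d)$, so $\ve{\wt x_t}_{2,\psi_2}^2 = O(m_t^2 R^2 + \si_t^2 d) = O(R^2)$ using $R^2\ge d$ and $m_t,\si_t\le 1$. Hence $M_t = O(R^2)$. For the score Lipschitz constant, write $\wt p_t = (M_{m_t\sharp}\wt P_0)*\ph_{\si_t^2}$, a convolution of a distribution supported in $B_{m_tR}(0)$ with a Gaussian of variance $\si_t^2$. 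The standard covariance identity for Gaussian smoothing yields $\ve{\nb^2 \ln \wt p_t} = O(m_t^2R^2/\si_t^4 + 1/\si_t^2)$, which under $\si_t^2 = \Te(\min\{1,t\})$ simplifies to $L_t = O(\max\{1,t^{-2}\}R^2)$. In particular $L_{T-t_k} = O(\max\{1,(T-t_k)^{-2}\}R^2)$, which explains the $\max\{T-t_k, (T-t_k)^{-3}\}$ denominator after multiplication by the additional $T-t_k$ or $g(T-t_k)^2(T-t_k)$ factors inside $\ep'_{h_k}$.

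The main obstacle is supplying the KL-hypothesis (condition 3 of Theorem~\ref{t:ddpm-liy}) with tractable constants $A_{T-t}, B_{T-t}$. The corollary assumes the inequality~\eqref{e:good-KL-ineq} holds for $\wt P_0$, and the plan is to feed this, together with Lemma~\ref{l:KL-covering}, into the framework so that $A_{T-t}, B_{T-t}$ become $O(\poly\log)$ in $R/\de$ and $1/\ep_K$. This is precisely the source of the $\ln(R/(\de\ep_K))$ factor in the step-size bound. I would additionally verify the at-most-power growth/decay condition: $g\equiv 1$ is trivial, $M_t \equiv O(R^2)$ is constant in $t$, $L_t$ is a rational function of $t$, and $A_t,B_t$ inherit polylogarithmic behavior, so all four qualify for a fixed constant $c$.

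With these values in hand, each of the four terms in the definition of $\ep'_{h_k}$ in Theorem~\ref{t:ddpm-liy} should reduce as follows (ignoring logarithmic factors that end up in the $\ln(R/(\de\ep_K))$): the first two become $\Omega(1/R)$ and $\Om(1/(R^2 L_{T-t_k}))$; the $\ep$-dependent terms become $\Om(\sqrt{\ep/\ln(T/\de)}/\sqrt{R^2 d L_{T-t_k} (T-t_k)})$ and $\Om(\ep/(\ln(T/\de)\cdot L_{T-t_k}(T-t_k) d))$. Substituting into $h_k\le c'\ep'_{h_k}/L_{T-t_k}$ and using $L_{T-t_k} = O(\max\{1,(T-t_k)^{-2}\}R^2)$, the dominating term is the $\ep$-linear one, producing the stated bound
\[
h_k = O\pf{\ep}{\max\{T-t_k,(T-t_k)^{-3}\}R^4 d\ln(T/\de)\ln(R/(\de\ep_K))}.
\]
The conclusion on $\chi^2(q_{t_k}||p_{t_k})$ is then just the conclusion of Theorem~\ref{t:ddpm-liy} with $g\equiv 1$ and the trivial estimate $e^\ep-1\le \ep$ for $\ep<1/2$, replacing the exponential-in-$\ep$ factors that multiply $\chi^2(q_0||p_0)$ and the score-error integral.
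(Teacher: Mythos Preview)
Your overall plan matches the paper's proof exactly: instantiate $M_t,L_t,A_t,B_t$ from Lemma~\ref{l:psi2}, Lemma~\ref{l:Hess-bd}, and the assumed inequality~\eqref{e:good-KL-ineq}, then argue that the fourth ($\ep$-linear) term in the minimum defining $\ep'_{h_k}$ is the binding one. But your treatment of $A_{T-t}$ has a gap that breaks exactly this last step.

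The inequality~\eqref{e:good-KL-ineq} does \emph{not} give $A_{T-t}=O(\text{polylog})$; it gives $A_{T-t}=6(1+e)\si_{T-t}^2=\Theta(\min\{T-t,1\})$, which \emph{vanishes} as $T-t\to 0$. This is what makes the second constraint harmless: with the correct value,
\[
L_{T-t_k}A_{T-t_k}=O\!\left(\frac{R^2}{\si_{T-t_k}^4}\cdot \si_{T-t_k}^2\right)=O\!\left(\frac{R^2}{\min\{T-t_k,1\}}\right),
\]
so the second constraint is $\ep'_{h_k}=O(\min\{T-t_k,1\}/R^2)$, carrying the \emph{same} power of $(T-t_k)$ as the fourth constraint for small $T-t_k$, and hence dominated by it since $\ep<1$ and $d\cdot\text{logs}\ge 1$. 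If instead you only use $A_{T-t}=O(1)$ (or ``polylog''), the second constraint becomes $\ep'_{h_k}=O(1/L_{T-t_k})=O((T-t_k)^2/R^2)$ for small $T-t_k$, which is \emph{strictly tighter} than the fourth constraint whenever $T-t_k<\ep/(d\cdot\text{logs})$. Since the hypotheses only demand $0<\de,\ep<\tfrac12$, such $t_k$ are allowed, and the step size stated in the corollary would then violate your second constraint. Your expression ``$\Om(1/(R^2 L_{T-t_k}))$'' for this term is not right under either reading.

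Two minor points: you do not need to invoke Lemma~\ref{l:KL-covering} separately, since the corollary already assumes its conclusion~\eqref{e:good-KL-ineq}; and $e^\ep-1\ge\ep$, not $\le$ --- what you want is $e^\ep-1=O(\ep)$ for $\ep<\tfrac12$, which is absorbed by rescaling $\ep$ inside the $O(\cdot)$ on $h_k$.
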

%\hlnote{Need to deal with a conflicting $\ep$ coming from~\eqref{e:good-KL-ineq}.}
\begin{proof}
For $g\equiv 1$, note that $\si_{T-t}^2 = \Te(\min\{T-t,1\})$. From Lemma~\ref{l:Hess-bd}, we can choose 
\begin{align*}
    L_t &= \fc{R^2}{\si_t^4} = O\pf{R^2}{\min\{(T-t)^2, 1\}}. 
\end{align*}
From Lemma~\ref{l:psi2}, we can choose
\begin{align*}
    M_t &= \max\{R^2,d\}.
\end{align*}
The KL inequality~\eqref{e:good-KL-ineq} gives us 
\begin{align*}
    A_{t} &= 6(e+1)\si_t^2  = O(\min\{T-t,1\}) \\
    B_t &= \ln \prc\ep + d\ln \pa{1+O\pf{R}{\sqrt{T-t_N}}}
\end{align*}
We now check the requirements on $h_k$. We need
\begin{align}
\label{e:eh1}
    \ep'_{h_k} & = O\prc{\sqrt{A_{T-t_k}M_{T-t_k}+1}} &\Longleftarrow \ep'_{h_k} &= O\prc{\max\{R,\sqrt d\}}\\
\label{e:eh2}
    \ep'_{h_k} &= O\pf{1}{L_{T-t_k}A_{T-t_k}} &\Longleftarrow \ep'_{h_k}&= 
    O\pf{T-t_k}{R^2}\\
\label{e:eh3}
    \ep'_{h_k} &= O\pf{\sqrt{\ep/\ln \pf{T}{\de}}}{\sqrt{(T-t_k)((B_{T-t_k}+1)M_{T-t_k}+dL_{T-t_k})}}.
    \end{align}
For $T-t_k\le 1$,~\eqref{e:eh3} is implied by 
\begin{align*}
    %\Longleftarrow
    \ep'_{h_k}&= O\pf{\sqrt{\ep/\ln \pf{T}{\de}}}{\sqrt{(T-t_k)\pa{\ln\prc{\ep_K} + d\ln \pf{R}{\de}}\max\{R^2,d\} + \fc{dR^2}{T-t_k}}} \\
    \Longleftarrow \ep'_{h_k}&= O\pa{\sfc{\ep  (T-t_k)}{d\max\{R^2,d\}\ln\pf T\de \ln \pf{R}{\de\ep_K}}},
\end{align*}
and for $T-t_k>1$, 
\begin{align*}
    \ep'_{h_k}&= O\pf{\sqrt{\ep/\ln \pf{T}{\de}}}{\sqrt{T\pa{\ln \prc\ep + d\ln \pf R\de} \max\{R^2,d\} + dR^2}}\\
    \Longleftarrow
    \ep'_{h_k} &= O\pa{\sfc{\ep}{Td\max\{R^2,d\}\ln\pf T\de \ln \pf{R}{\de\ep_K}}}.
\end{align*}
Finally, the last requirement is
\begin{align*}
    \ep'_{h_k} &= O  \pf{\ep/\ln \pf{T}{\de}}{  (T-t_k) L_{T-t_k} (B_{T-t_k} + d)}\\
    \Longleftarrow \ep'_{h_k} &= O\pf{\ep}{R^2 \max\{T-t_k, (T-t_k)^{-1}\} d\ln \pf{T}{\de}\ln\pf R{\de\ep_K}}.
\end{align*}
As long as $R^2=\Om(d)$ and $\ep<1$, the last equation implies all the others. %\hlnote{FIX: the $\ep$ in the KL inequality}
    %     h_k &\le \min\bc{\fc{T-t_k}2, \fc{c'\ep'_{h_k}}{g(T-t_k)^2 L_{T-t_k}}}, \\
    % \text{where }
    % \ep'_{h_k} &= 
    % \min\Bigg\{
    % \rc{\sqrt{A_{T-t_k}M_{T-t_k}+1}}, \fc{1}{L_{T-t_k}A_{T-t_k}},\\
    % &\quad 
    % \fc{\sqrt{\ep/\ln \pf{T}{T-t_N}}}{g(T-t_k)\sqrt{(T-t_k)((B_{T-t_k}+1)M_{T-t_k}+dL_{T-t_k})}}, \\
    % &\quad \fc{\ep/\ln \pf{T}{T-t_N}}{ g(T-t_k)^2 (T-t_k) L_{T-t_k} (B_{T-t_k} + d)}
    % \Bigg\},
Plugging this into Theorem~\ref{t:ddpm-liy} gives the result.
\end{proof}
%\hlnote{Can replace $-3$ with $-p+1$ under assumption below, for $p\ge 1$.}
%\hlnote{Under $\ep_t^2=\ep_1^2/\si_t^4 = \Te(\ep_1^2/((T-t)\wedge 1)^2)$, we get $\pa{T+\rc{T-t_K}}\ep_1^2$.}

Above, we use the Hessian bound $\ve{\nb^2 \ln p_t(x)}\le \fc{R^2}{\si_t^4}$ given in Lemma~\ref{l:Hess-bd}. 
Under the stronger smoothness assumption given by Assumption~\ref{a:smoothness}, we can take the step sizes to be larger. 
% We consider the following assumption, which is Assumption A.6 in \cite{de2022convergence}. 
% \begin{asm}\label{a:smoothness}
% The following bound of the Hessian of the log-pdf holds for any $t>0$:
% \begin{align*}
%     \ve{\nb^2 \ln p_t(x)}
%     &\le \fc{C}{\si_t^2},
% \end{align*}
% for some constant $C>0$.
% \end{asm}
% Note \cite[Theorem I.8]{de2022convergence} 
% shows that if $p_0$ is a smooth density with respect to the Hausdorff measure on a convex submanifold $\mathcal M\subeq \R^d$, then $\ve{\nb\ln p_t(x)} \le \fc{C_x}{\si_t^2}$ with a constant possibly depending on $x$, though Assumption~\ref{a:smoothness} requires a uniform-in-space bound.

\begin{cor}\label{c:liy-smooth}
Suppose that Assumptions~\ref{a:score-liy},~\ref{a:bd},~\ref{a:smoothness} hold, $C\ge R^2\ge d$, $g\equiv 1$, and that $\wt P_0$ is such that the KL inequality~\eqref{e:good-KL-ineq} holds. Let $\de = T-t_N$. If $0<\de,ep<\rc 2$ and $\ep<1/\sqrt T$, 
%$h_k = O\pf{\ep\si_{t_{k+1}}^8}{R^4Td\ln\pa{1+\fc{\si_{t_{k+1}}}R}}$, and $0<\ep<\rc2$. 
$h_k = O\pf{\ep}{\max\{T-t_k, (T-t_k)^{-1}\}C^2 d\ln \pf{T}{\de}\ln\pa{\fc R{\de\ep_K}}}$, %, and $0<\ep<\rc2$.
then %\hlnote{letting $\ep$ be the $L^\iy$ error,}
%for small enough $\ep>0$, 
for any $0\le k\le N$,
\begin{align*}
    \chi^2(q_{t_k}||p_{t_k})
    &\le e^{\ep}\chi^2(q_0||p_0)
    + \ep 
    +  e^\ep \int_0^{t_k} \ep_{\iy,T-t}^2 %g(T-t)^2
    \,dt.
\end{align*}
\end{cor}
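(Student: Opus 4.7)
The plan is to mirror the proof of Corollary \ref{c:liy-bdd}, substituting the sharper Lipschitz bound on $\nabla \ln p_t$ afforded by Assumption \ref{a:smoothness}. Concretely, in place of the Hessian bound $L_t = O(R^2/\sigma_t^4)$ coming from Lemma \ref{l:Hess-bd}, I would use $L_t = O(C/\sigma_t^2) = O(C/\min\{T-t,1\})$, saving a full factor of $1/\sigma_t^2$. The remaining inputs to Theorem \ref{t:ddpm-liy}, namely $M_t = O(\max\{R^2,d\})$ (from Lemma \ref{l:psi2}) and $A_{T-t}=O(\sigma_t^2)$, $B_{T-t} = \ln(1/\epsilon_K) + d\ln(1+O(R/\sqrt{\delta}))$ (from the KL inequality \eqref{e:good-KL-ineq}), carry over unchanged.

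With these parameters in hand, the next step is to re-examine the four terms inside the $\min$ that defines $\epsilon'_{h_k}$ in Theorem \ref{t:ddpm-liy}. The key improvement manifests in the term $1/(L_{T-t_k}A_{T-t_k})$: because the factor $\sigma_{T-t_k}^2$ in $A_{T-t_k}$ now cancels the $1/\sigma_{T-t_k}^2$ in $L_{T-t_k}$, the product becomes a constant $O(C)$ uniformly in $k$, instead of blowing up as $T-t_k\to 0$. The third and fourth constraints similarly gain one power of $\sigma_{T-t_k}$, and under $C\ge R^2\ge d$ one can absorb the $(B+1)M$ contribution into the $dL$ contribution, leaving a bound governed by $dC$ up to the logarithmic factors $\ln(T/\delta)$ and $\ln(R/(\delta\epsilon_K))$.

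From there I would identify the binding constraint. Since the fourth constraint is linear in $\epsilon$ while the third is only $\sqrt{\epsilon}$, the additional hypothesis $\epsilon<1/\sqrt{T}$ guarantees the fourth dominates in both the small-time ($T-t_k\le 1$) and large-time ($T-t_k>1$) regimes, giving $\epsilon'_{h_k} = O\!\left(\epsilon/(Cd\ln(T/\delta)\ln(R/(\delta\epsilon_K)))\right)$. Translating back through $h_k \le c'\epsilon'_{h_k}/L_{T-t_k}$ and substituting $L_{T-t_k} = O(C/\min\{T-t_k,1\})$ yields the stated $h_k = O(\epsilon/[\max\{T-t_k,(T-t_k)^{-1}\}C^2 d\ln(T/\delta)\ln(R/(\delta\epsilon_K))])$. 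Applying Theorem \ref{t:ddpm-liy} with these parameters then produces the claimed $\chi^2$ bound.

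The main obstacle is purely bookkeeping: tracking which of the four constraints binds across the two regimes of $T-t_k$, and verifying that the standing hypotheses $C \ge R^2 \ge d$ and $\epsilon < 1/\sqrt{T}$ are each exactly strong enough to eliminate the competing terms, so that a single uniform expression for $h_k$ suffices. No new conceptual idea beyond the improved Hessian bound is needed; the rest is a direct reprise of the argument in Corollary \ref{c:liy-bdd}.
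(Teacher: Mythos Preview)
Your proposal is correct and follows essentially the same approach as the paper's own proof: swap in $L_t = C/\sigma_t^2$ from Assumption~\ref{a:smoothness}, keep $M_t,A_t,B_t$ unchanged, re-check the four constraints in Theorem~\ref{t:ddpm-liy}, and observe that under $C\ge R^2$ and $\ep<1/\sqrt T$ the fourth constraint is binding. The paper's proof is nearly verbatim what you describe, including the same observation that $L_{T-t_k}A_{T-t_k}=O(C)$ now has no blow-up as $T-t_k\to 0$.
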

\begin{proof}
We instead have the bound $L_t = \fc{C}{\si_t^2}$. 
The requirement~\eqref{e:eh1} stays the same, while~\eqref{e:eh2} is implied by $\ep'_{h_k}=O(1/C)$. Inequality~\eqref{e:eh3}, for $T-t_k\le 1$, is implied by 
\begin{align*}
    \ep'_{h_k} &= O\pa{\sfc{1}{d\max\{C,R^2\}\ln \pf{T}{\de}\ln \pf{R}{\de \ep_K} }}.
\end{align*}
and for $T-t_k>1$, 
\begin{align*}
    \ep'_{h_k} &= O\pa{\sfc{\ep}{Td\max\{C, R^2\}\ln\pf T\de \ln \pf{R}{\de\ep_K}}}.
\end{align*}
Finally, the last requirement is implied by
\begin{align*}
    \ep'_{h_k} &= O\pa{\fc{\ep}{Cd\ln\pf T\de \ln \pf{R}{\de\ep_K}}},
\end{align*}
and for $C\ge R^2$, $\ep\le 1/\sqrt T$, implies all the others.
\end{proof}

\subsection{Auxiliary bounds}

In this section we give bounds on the Hessian ($L_t$, Lemma~\ref{l:Hess-bd}), initial $\chi^2$ divergence $\chi^2(q_0||p_0)$ (Lemma~\ref{l:K-chi}), and Orlicz norm ($M_t$, Lemma~\ref{l:psi2}).

% \begin{asm}\label{a:smoothness}
% The following bound of the Hessian of the log-pdf holds for any $t>0$:
% \begin{align*}
%     \ve{\nb^2 \ln p_t(x)}
%     &\le \fc{C}{\si_t^p},
% \end{align*}
% for some constants $C>0$ and $p\ge 0$.
% \end{asm}
% \hlnote{Only $p=2$ and $p=4$ are interesting really, I think we should only consider those 2 cases.}
% This assumption captures several interesting cases:
% \begin{itemize}
%     \item The simplest setting $p=0$ gives a uniform smoothness assumption.
%     \item The setting $p=2$ is Assumption A.6 in \cite{de2022convergence}. Note \cite[Theorem I.8]{de2022convergence} %suggests that this holds under certain convexity conditions on $p_0$: their Theorem I.8 
%     shows that if $p_0$ is a smooth density with respect to the Hausdorff measure on a convex submanifold $\mathcal M\subeq \R^d$, then $\ve{\nb\ln p_t(x)} \le \fc{C_x}{\si_t^2}$ with a constant possibly depending on $x$, though Assumption~\ref{a:smoothness} requires a uniform-in-space bound.
%     \item In the case $p_0$ is supported on a bounded set $\mathcal M\sub \R^d$, then this holds with $p=4$ and $C=1+\radius(\mathcal M)$, as shown by the following lemma.
% \end{itemize}
\begin{lem}[Hessian bound]\label{l:Hess-bd}
    Suppose that $\mu$ is a probability measure supported on a bounded set $\mathcal M\sub \R^d$ with %diameter $D$. 
    radius $R$. Then letting $\ph_{\si^2}$ denote the density of $N(0,\si^2I_d)$, 
    \begin{align}
    \label{e:Hess-bd-1}
    \ve{\nb^2 \ln (\mu * \ph_{\si^2}(x))} 
    \le \max\bc{\fc{R^2}{\si^4},\rc{\si^2}}.
    \end{align}
    Therefore, for $\wt P_0$ supported on $B_R(0)$, $R\ge 1$, we have 
    \begin{align}
    \label{e:Hess-bd-2}
    \ve{\nb^2 \ln \wt p_t(x)} 
    \le \fc{R^2}{\si_t^4}.
    \end{align}
\end{lem}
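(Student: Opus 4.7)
The plan is to compute the Hessian of $\ln(\mu*\varphi_{\sigma^2})$ via the standard Bayesian interpretation of Gaussian score functions (Tweedie's formula and its second-order analogue). Write $p(x):=(\mu*\varphi_{\sigma^2})(x)=\int \varphi_{\sigma^2}(x-y)\,d\mu(y)$ and introduce the posterior measure $\nu_x(dy)\propto \varphi_{\sigma^2}(x-y)\,d\mu(y)$, which is absolutely continuous with respect to $\mu$ and hence supported on $\mathcal M$. Differentiating under the integral and using $\nabla_x \varphi_{\sigma^2}(x-y)=-\frac{x-y}{\sigma^2}\varphi_{\sigma^2}(x-y)$ I would first obtain
\begin{align*}
\nabla \ln p(x) &= -\frac{1}{\sigma^2}\bigl(x - \mathbb{E}_{\nu_x}[Y]\bigr).
\end{align*}

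The key step is the second differentiation. A direct calculation (the derivative of the posterior mean $m(x):=\mathbb{E}_{\nu_x}[Y]$ with respect to $x$ equals $\tfrac{1}{\sigma^2}\mathrm{Cov}_{\nu_x}(Y)$, since $\nu_x$ is an exponential tilt by $\exp\bigl(\tfrac{\langle x,y\rangle}{\sigma^2}-\tfrac{\|y\|^2}{2\sigma^2}\bigr)$) yields
\begin{align*}
\nabla^2 \ln p(x) &= -\frac{1}{\sigma^2} I + \frac{1}{\sigma^4}\,\mathrm{Cov}_{\nu_x}(Y).
\end{align*}
Since $\nu_x$ is supported on $\mathcal M$, and covariance is translation-invariant, I may recenter $\mathcal M$ so that $\|Y\|\le R$ on its support, giving $0\preceq \mathrm{Cov}_{\nu_x}(Y)\preceq \mathbb{E}_{\nu_x}[YY^\top]\preceq R^2 I$. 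Therefore the eigenvalues of $\nabla^2\ln p(x)$ lie in $\bigl[-\tfrac{1}{\sigma^2},\,\tfrac{R^2}{\sigma^4}-\tfrac{1}{\sigma^2}\bigr]$, and taking the larger of the two absolute bounds gives \eqref{e:Hess-bd-1}.

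For \eqref{e:Hess-bd-2}, recall $\widetilde p_t=(M_{m_t\sharp}\widetilde P_0)*\varphi_{\sigma_t^2}$, where $M_{m_t\sharp}\widetilde P_0$ is supported in $B_{m_t R}(0)\subseteq B_R(0)$. Applying \eqref{e:Hess-bd-1} with radius $m_t R\le R$ and noise variance $\sigma_t^2$ produces the bound $\max\{R^2/\sigma_t^4,\,1/\sigma_t^2\}$. Under the assumption $R\ge 1$ and the standing fact $\sigma_t^2\le 1$, we have $R^2\ge \sigma_t^2$, so $R^2/\sigma_t^4\ge 1/\sigma_t^2$ and the max is attained by the first term, yielding \eqref{e:Hess-bd-2}.

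The only nontrivial step is the covariance identity for $\nabla^2\ln p$; the rest is bookkeeping and a convexity/operator-norm argument. I do not foresee a genuine obstacle beyond being careful with the Bayesian rewriting, which is clean enough that the calculation essentially writes itself.
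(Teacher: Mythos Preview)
Your proposal is correct and follows essentially the same argument as the paper: both compute $\nabla^2\ln(\mu*\varphi_{\sigma^2})=-\tfrac{1}{\sigma^2}I+\tfrac{1}{\sigma^4}\mathrm{Cov}_{\nu_x}(Y)$ via the posterior (tilted) measure, bound the covariance by $R^2$ using the bounded support, and then specialize to $\wt p_t=(M_{m_t\sharp}\wt P_0)*\varphi_{\sigma_t^2}$ with $m_tR\le R$ and $\sigma_t\le 1$ for the second claim.
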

\begin{proof}
Let $\mu_{x,\si^2}$ denote the density $\mu(du)$ weighted with the gaussian $\ph_{\si^2}(u-x)$, that is, $\mu_{x,\si^2}(du) =\fc{e^{-\fc{\ve{x-u}^2}{2\si^2}} \mu(du)}{\int_{\R^d} e^{-\fc{\ve{x-u}^2}{2\si^2}} \mu(du)} $. 
We note the following calculations:
\begin{align}
\label{e:grad-ln}
    \nb \ln (\mu * \ph_{\si^2}(x)) &= 
    \fc{\nb \int_{\R^d} e^{-\fc{\ve{x-u}^2}{2\si^2}} \mu(du)}{\int_{\R^d} e^{-\fc{\ve{x-u}^2}{2\si^2}} \mu(du)}= \fc{\int_{\R^d} -\fc{x-u}{\si^2} e^{-\fc{\ve{x-u}^2}{2\si^2}} \mu(du)}{\int_{\R^d} e^{-\fc{\ve{x-u}^2}{2\si^2}} \mu(du)} = 
    -\rc{\si^2}\E_{\mu_{x,\si^2}}(x-u)\\
\label{e:grad2-ln}
    \nb^2 \ln (\mu * \ph_{\si^2}(x))
    &= \rc{\si^4}\Cov_{\mu_{x,\si^2}}(x-u) - \rc{\si^2}I_d = \rc{\si^4}\Cov_{\mu_{x,\si^2}}(x)- \rc{\si^2}I_d.
\end{align}

    The covariance of a distribution supported on a set of radius $R$ is bounded by $R^2$ in operator norm.  Inequality~\eqref{e:Hess-bd-1} then follows from~\eqref{e:grad2-ln}.
    
    For~\eqref{e:Hess-bd-2}, note that $\wt p_t = M_{m_t\sharp} \wt P_0 * \ph_{\si_t^2}$, where $m_t$ is given by~\eqref{e:OU} and $M_m$ denotes multiplication by $m$. Since $M_{m_t\sharp} \wt P_0 $ is supported on $B_{m_tR}(0)\sub B_{R}(0)$ and $\si_t\le 1$, the result follows.
\end{proof}

%\subsection{Bound on initial $\chi^2$-divergence}

\begin{lem}[Bound on initial $\chi^2$-divergence]\label{l:K-chi}
Suppose that $\wt P_0$ is supported on $B_R(0)$. 
Let $\ppr = N(0,(1-e^{G_{0,t}})I_d)$. 
Then 
\begin{align*}
    \chi^2(\ppr || \wt p_T) &\le \exp\ba{\fc{R^2\exp(-G_{0,T})}{1-\exp(-G_{0,T})}}
\end{align*}
and for $0<\ep<\rc2$ and $G_{0,T}\ge \ln \pf{4R^2}{\ep^2} \vee 1$, we have
$\chi^2(\ppr || \wt p_T)\le \ep^2$. 
\end{lem}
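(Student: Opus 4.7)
The plan is a direct Gaussian computation after one Jensen-type inequality. First, rewrite
\[
\chi^2(\ppr\,\|\,\wt p_T)+1 = \int_{\R^d} \frac{\ppr(x)^2}{\wt p_T(x)}\,dx,
\]
and use the mixture representation $\wt p_T(x) = \int \ph_{\si_T^2}(x - m_T u)\,\wt P_0(du)$ (where $\ppr=\ph_{\si_T^2}$ since the paper's $1-e^{G_{0,T}}$ is a typo for $\si_T^2 = 1-e^{-G_{0,T}}$). Because $y\mapsto 1/y$ is convex, Jensen's inequality gives
\[
\frac{1}{\wt p_T(x)} \le \int \frac{1}{\ph_{\si_T^2}(x-m_T u)}\,\wt P_0(du).
\]
Then swap integrals by Tonelli. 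The remaining step is to evaluate
$\int_{\R^d} \ph_{\si_T^2}(x)^2 / \ph_{\si_T^2}(x-m_T u)\,dx$ for each fixed $u$, which is a Gaussian integral: completing the square in the exponent $-\ve{x}^2/\si_T^2 + \ve{x-m_T u}^2/(2\si_T^2)$ produces $-\ve{x+m_T u}^2/(2\si_T^2) + m_T^2\ve{u}^2/\si_T^2$, so the $x$-integral equals $\exp(m_T^2\ve{u}^2/\si_T^2)$.

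Putting it together, and invoking Assumption~\ref{a:bd} to bound $\ve{u}\le R$ for $u$ in $\Supp(\wt P_0)$, we obtain
\[
\chi^2(\ppr\,\|\,\wt p_T)+1 \le \E_{\wt P_0}\!\left[e^{m_T^2\ve{u}^2/\si_T^2}\right] \le \exp\!\pa{\fc{m_T^2 R^2}{\si_T^2}} = \exp\!\ba{\fc{R^2 \exp(-G_{0,T})}{1-\exp(-G_{0,T})}},
\]
since $m_T^2 = \exp(-G_{0,T})$ and $\si_T^2 = 1-\exp(-G_{0,T})$. Dropping the $+1$ on the left is harmless because $\chi^2 \ge 0$, and the RHS is at least $1$; this gives the first bound.

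For the quantitative statement, suppose $G_{0,T}\ge \max\{\ln(4R^2/\ep^2),1\}$. Then $e^{-G_{0,T}}\le \ep^2/(4R^2)$ and $1-e^{-G_{0,T}}\ge 1-e^{-1}$, so the exponent in the displayed bound is at most $\fc{\ep^2}{4(1-e^{-1})} \le \fc{\ep^2}{2}$. Since $\ep<\rc2$ implies $\ep^2/2 < 1$, and $e^x-1 \le 2x$ for $x\in[0,1]$, we conclude $\chi^2(\ppr\|\wt p_T) \le e^{\ep^2/2}-1 \le \ep^2$. No step is a real obstacle; the only content is choosing Jensen on $1/\wt p_T$ (rather than on the integrand directly), after which the calculation is mechanical.
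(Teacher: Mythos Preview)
Your proof is correct and follows essentially the same route as the paper: your Jensen step on $y\mapsto 1/y$ applied to the mixture $\wt p_T=\int \ph_{\si_T^2}(\cdot-m_Tu)\,\wt P_0(du)$ is exactly the convexity of $\chi^2$-divergence in its second argument that the paper invokes, and the resulting Gaussian integral is the closed-form $\chi^2$ between two Gaussians with equal covariance. Your treatment of the quantitative statement is in fact more careful than the paper's: you correctly track the ``$+1$'' so that the final step reads $\chi^2\le e^{\ep^2/2}-1\le \ep^2$, whereas the paper's displayed inequality $\exp[\ep^2/2]\le \ep^2$ is a slip.
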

\begin{proof}
    We have for $x_0\sim \wt P_0$ that 
    \begin{align*}
        &\chi^2\pa{N(0,(1-e^{-G_{0,T}})I_d)|| N(x_0\exp\pa{-\rc 2 G_{0,T}}, (1-\exp(-G_{0,T}))I_d)}\\
        &\le 
        \exp\ba{\fc{\ve{x_0}^2\exp(-G_{0,T})}{1-\exp(-G_{0,T})}} 
        \le \exp\pf{R^2\exp(-G_{0,T})}{1-\exp(-G_{0,T})}
    \end{align*}
    Using convexity of $\chi^2$-divergence then gives the result. For $G_{0,T}\ge\ln \pf{4R^2}{\ep^2} \vee 1$, we have %\jl{replaced some $\ep$ to $\ep^2$ below}
    \begin{align*}
        \exp\ba{\fc{R^2\exp(-G_{0,T})}{1-\exp(-G_{0,T})}}
        &\le \exp\ba{\fc{\ep^2/4}{1/2}}\le \ep^2. \qedhere
    \end{align*}
\end{proof}

%\subsection{Other lemmas needed}

%\jl{this lemma is not cited in the proof?}
\begin{lem}[Subgaussian bound]\label{l:psi2}
Suppose $\wt P_0$ is supported on $B_R(0)$. Then for $X\sim \wt p_t$, \[\ve{X}_{2,\psi_2}\le \sqrt{\fc{e}{\ln 2}}\cdot\pa{4m_tR + 6C_1\sigma_t\sqrt{d}}
= O(\max\{R,\sqrt d\}),\]
where $m_t,\si_t$ are as in~\eqref{e:OU} and
$C_1$ is an absolute constant.
\end{lem}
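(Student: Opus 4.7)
The plan is to exploit the Ornstein--Uhlenbeck representation~\eqref{e:OU}: $X$ has the same distribution as $m_t \wt x_0 + \si_t z$ with $\wt x_0 \sim \wt P_0$ supported in $B_R(0)$ and $z\sim N(0,I_d)$ independent of $\wt x_0$. By the ordinary triangle inequality on $\R^d$,
\[
\ve{X}_2 \;\le\; m_t\ve{\wt x_0}_2 + \si_t \ve{z}_2 \;\le\; m_t R + \si_t \ve{z}_2
\]
almost surely, which reduces the problem to bounding the $\psi_2$-norm of a sum of a bounded random variable and a fixed scalar multiple of $\ve{z}_2$.

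First, I would bound the bounded piece: if $|Y|\le M$ a.s., then $\E\exp(Y^2/L^2)\le \exp(M^2/L^2)\le 2$ whenever $L\ge M/\sqrt{\ln 2}$, so $\ve{m_t\ve{\wt x_0}_2}_{\psi_2}\le m_t R/\sqrt{\ln 2}$. Second, I would invoke the standard subgaussian bound on the norm of a Gaussian vector: there is an absolute constant $C_1>0$ such that $\ve{\ve{z}_2}_{\psi_2}\le C_1\sqrt d$ for $z\sim N(0,I_d)$ (this follows, e.g., from the Gaussian concentration inequality $\Pr(|\ve{z}_2-\E\ve{z}_2|>u)\le 2e^{-u^2/2}$ combined with $\E\ve{z}_2\le\sqrt d$). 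Hence $\ve{\si_t\ve{z}_2}_{\psi_2}\le C_1\si_t\sqrt d$.

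Finally, I would combine the two pieces using the triangle inequality for the Orlicz norm $\ve{\cdot}_{\psi_2}$ to conclude $\ve{X}_{2,\psi_2}\le m_tR/\sqrt{\ln 2}+C_1\si_t\sqrt d$, which implies the stated bound $\sqrt{e/\ln 2}\cdot(4m_tR + 6C_1\si_t\sqrt d)$ after absorbing harmless absolute constants. Alternatively, for slightly cleaner constants one can compute the MGF of $\ve{X}_2^2$ directly: apply $(a+b)^2\le 2a^2+2b^2$ to get $\E\exp(\ve{X}_2^2/L^2)\le \exp(2m_t^2R^2/L^2)\,\E\exp(2\si_t^2\ve{z}_2^2/L^2)$, and bound the second factor using the chi-square MGF $(1-4\si_t^2/L^2)^{-d/2}$ for $L^2>4\si_t^2$, choosing $L$ of the form in the statement.

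There is no real obstacle here; the lemma is essentially a bookkeeping statement about subgaussian norms of convolutions of a bounded distribution with a Gaussian. The only nonelementary input is the absolute constant $C_1$ in the subgaussian bound $\ve{\ve{z}_2}_{\psi_2}\le C_1\sqrt d$, which is a standard fact from high-dimensional probability.
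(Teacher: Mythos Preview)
Your proposal is correct but takes a genuinely different route from the paper. The paper argues via moments: it bounds $\E\ve{X}_2^p$ for all $p\ge 1$ using $(a+b)^p\le 2^{p-1}(a^p+b^p)$ and the chi-moment identity $\E\ve{z}_2^p = 2^{p/2}\Gamma((d+p)/2)/\Gamma(d/2)$, obtains $(\E\ve{X}_2^p)^{1/p}\le K\sqrt p$ with $K=2m_tR+3C_1\si_t\sqrt d$, and then converts this moment-growth bound into a $\psi_2$ bound by expanding $\E e^{\la^2\ve{X}_2^2/K^2}$ as a Taylor series and summing with Stirling. You instead use the triangle inequality for the Orlicz norm $\ve{\cdot}_{\psi_2}$ directly on the two summands, citing the standard fact $\ve{\ve{z}_2}_{\psi_2}\le C_1\sqrt d$ as a black box. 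Your route is shorter and more transparent; the paper's route is more self-contained in that it effectively re-derives the subgaussian bound on $\ve{z}_2$ from scratch via the Gamma-function moment computation. Either way the constants are comparable and the conclusion is the same.
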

\begin{proof}
Let $Y\sim \wt P_0$ s.t. $X = m_t Y + \sigma_t\xi$ for some $\xi\sim N(0, I_d)$ independent of $Y$. Define $U = \ve{X}_2 := \pa{\sum_{i=1}^d X_i^2}^{1/2}$, then for $p\geq1$,
\begin{align*}
    \E|U|^p =\E\ve{X}_2^p & \leq \E\pa{\ve{m_tY}_2 + \ve{\sigma_t \xi}_2}^p\\
    & \leq 2^{p-1}\E\ba{\ve{m_tY}_2^p + \ve{\sigma_t \xi}_2^p}\\
    & \leq 2^{p-1}\ba{(m_tR)^p + \sigma_t^p\cdot 2^{p/2}\fc{\Gamma((d+p)/2)}{\Gamma(d/2)}}\\
    & \leq 2^{p-1}\ba{(m_tR)^p + C_1 (\sqrt{2}\sigma_t)^p\cdot\pa{ d^{p/2} + p^{p/2}}}
\end{align*}
where $\Gamma$ is the commonly used gamma function and $C_1$ is an absolute constant. Therefore
\begin{align*}
    \pa{\E|U|^p}^{1/p} \leq 2m_t R + \sqrt{2}C_1\sigma_t(\sqrt{d} +\sqrt{p})\leq K\sqrt{p},
\end{align*}
where $K = 2m_tR + 3C_1\sigma_t\sqrt{d}$. Now consider $V = U/K$, then for some $\lambda>0$ small enough, by Taylor expansion,
\begin{align*}
    \E\ba{e^{\lambda^2 V^2}} = \E\ba{1 + \sum_{p=1}^\infty \fc{\pa{\lambda^2 V^2}^p}{p!}} = 1 + \sum_{p=1}^\infty \fc{\lambda^{2p}\E\ba{V^{2p}} }{p!}.
\end{align*}
Note that $\E\ba{V^{2p}}\leq (2p)^p$, while Stirling's approximation yields $p!\geq (p/e)^p$. Substituting these two bounds, we get
\begin{align*}
    \E e^{\lambda^2 V^2} \leq 1+\sum_{p=1}^\infty\pf{2\lambda^2 p}{p/e} = \sum_{p=0}^\infty (2e\lambda^2)^p =\fc{1}{1-2e\lambda^2},
\end{align*}
provided that $2e\lambda^2<1$, in which case the geometric series above converges. To bound this quantity further, we can use the numeric inequality $1/(1-x)\leq e^{2x}$ which is valid for $x\in[0,1/2]$. It follows that
\begin{align*}
    \E e^{\lambda^2 V^2} \leq e^{4e\lambda^2}\ \  \text{for all $\lambda$ satisfying $|\lambda|\leq 1/2\sqrt{e}$}.
\end{align*}
Now set $4e\lambda^2 =\ln 2$, then
\begin{align*}
    \E \ba{e^{\fc{\ln 2}{4e K^2}\ve{X}_2^2}} \leq 2,
\end{align*}
which implies that 
\begin{align*}
    \ve{X}_{2,\psi_2}\le \sqrt{\fc{4e}{\ln 2}} K &= \sqrt{\fc{e}{\ln 2}}\cdot\pa{4m_tR + 6C_1\sigma_t\sqrt{d}}. 
    \qedhere
\end{align*}
\end{proof}
%\section{Predictor}

\section{Bounding the KL divergence}
\label{s:KL}

In this section, we bound the quantity $K=\KL(\psi_tq_t||p_t)$, where $\psi_t$ is as in~\eqref{e:psi}. While $p_t$ is defined by the DDPM process, in this section we do \emph{not} assume $q_t$ is the density of the discretized process; rather, it is any density for which $\FIqp$ and $\chi^2(q_t||p_t)$ are finite.

%\fixme{[Multimodal version]}
%\hlnote{Need to make notation consistent ($\wt p$ for forward, $p$ for backward)}

% \begin{asm}\label{a:mixture}
% Suppose that $P$ is a probability measure on $\R^d$ %that is everywhere non-negative and continuously differentiable, and moreover that 
% such that 
% \begin{align}
% \label{e:p-mixture}
%     P &= \sumo jm w_jP_j,
% \end{align}
% where $w_j>0$, $\sumo jm w_j=1$, and each $P_j$ is a probability measure. %non-negative and continuously differentiable. 
% Let $w_{\min} = \min_{1\le j\le m}w_j$ and suppose all the $P_{j,t}$ satisfy a log-Sobolev constant with constant $\CLS$.
% \hlnote{Define $P_{j,t}$ as evolving for time $t$ starting from $P_j$.}
% \end{asm}

\begin{lem}\label{KL_psi_q&p}
Suppose that $\wt P_0$ is a probability measure on $\R^d$ %that is everywhere non-negative and continuously differentiable, and moreover that 
such that 
\begin{align}
\label{e:p-mixture}
    \wt P_0 &= \sumo jm w_j\wt P_{j,0},
\end{align}
where $w_j>0$, $\sumo jm w_j=1$, and each $\wt P_{j,0}$ is a probability measure. %non-negative and continuously differentiable. 
For $t>0$, let $\wt p_t$ and $\wt p_{j,t}$ be the densities obtained by running the forward DDPM process~\eqref{eq:forward_sde} for time $t$, and $p_t = \wt p_{T-t}$, $p_{j,t} = \wt p_{j,T-t}$. 
Let $w_{\min} = \min_{1\le j\le m}w_j$ and suppose all the $\wt P_{j,t}$ satisfy a log-Sobolev constant with constant $C_t$.
Then for any $q_t$, where $\psi_t$ is as in~\eqref{e:psi}
\begin{align*}
    \KL(\psi_t q_t||p_t) & \leq \fc{2C_{T-t}}{\chi^2(q_t||p_t)+1}\cdot\sE_{p_t}\pf{q_t}{p_t} + \ln \prc{w_{\min}}.
\end{align*}
\end{lem}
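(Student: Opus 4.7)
The plan is to exploit the mixture structure of $p_t$, which is preserved from that of $\wt P_0$ under the (affine) forward SDE: writing $p_{j,t} = \wt p_{j,T-t}$, we have $p_t = \sum_j w_j p_{j,t}$. The strategy is then to write the KL divergence as an entropy against $p_t$, apply the standard mixture decomposition of entropy, use the hypothesized LSI for each component to handle the continuous part, and handle the discrete part using the pointwise bound $w_j p_{j,t} \le p_t$.

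First I would set $Z = \E_{p_t}\phi_t^2 = \chi^2(q_t \| p_t)+1$ and $g = \phi_t^2/Z$, so that $\psi_t q_t = g\,p_t$ and $\E_{p_t}[g] = 1$. Hence
\[
\KL(\psi_t q_t \| p_t) \;=\; \E_{p_t}[g\ln g] \;=\; \Ent_{p_t}(g).
\]
The standard identity for mixtures gives, with $a_j := \E_{p_{j,t}}[g]$,
\[
\Ent_{p_t}(g) \;=\; \sum_j w_j\,\Ent_{p_{j,t}}(g) \;+\; \sum_j w_j\,a_j \ln a_j,
\]
since $\sum_j w_j a_j = \E_{p_t}[g] = 1$.

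For the first sum, the LSI for each $p_{j,t}$ (with constant $C_{T-t}$) yields
$\Ent_{p_{j,t}}(g) \le \tfrac{C_{T-t}}{2}\int |\nabla \ln g|^2\,g\,p_{j,t}\,dx$, and a direct computation using $g = \phi_t^2/Z$ gives $|\nabla \ln g|^2 g = 4|\nabla \phi_t|^2/Z$. Summing against $\sum_j w_j p_{j,t} = p_t$ then produces
\[
\sum_j w_j\,\Ent_{p_{j,t}}(g) \;\le\; \frac{2 C_{T-t}}{Z}\,\sE_{p_t}(q_t/p_t).
\]
For the discrete part, the key observation is $w_j p_{j,t} \le p_t$ pointwise, so $a_j = \int g\,p_{j,t}\,dx \le \E_{p_t}[g]/w_j = 1/w_j$. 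Setting $b_j := w_j a_j$ (a probability vector),
\[
\sum_j w_j a_j \ln a_j \;=\; \sum_j b_j \ln\frac{b_j}{w_j} \;=\; -H(b) + \sum_j b_j \ln(1/w_j) \;\le\; \ln(1/w_{\min}),
\]
since $-H(b) \le 0$. Combining the two bounds gives exactly the claimed inequality.

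The argument is essentially routine once the mixture decomposition is in place; the only mildly delicate step is the discrete-entropy bound, where one must resist the temptation to bound $a_j \ln a_j$ pointwise by $(1/w_{\min})\ln(1/w_{\min})$ (which would give the wrong power) and instead pass to the $b_j$ variables to absorb the negative entropy term $-H(b)$.
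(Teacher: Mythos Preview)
Your proposal is correct and follows essentially the same approach as the paper: both use the mixture decomposition of entropy, apply the LSI componentwise to get the $\tfrac{2C_{T-t}}{\chi^2+1}\sE_{p_t}(q_t/p_t)$ term via the identity $|\nabla \ln g|^2 g = 4|\nabla \phi_t|^2/Z$, and bound the discrete entropy using $a_j \le 1/w_j$ (your $a_j$ is the paper's $\ol f_t(j)$). The only cosmetic difference is that the paper bounds $\sum_j w_j a_j \ln a_j$ directly via $\ln a_j \le \ln(1/w_{\min})$, whereas you rewrite it as $-H(b) + \sum_j b_j \ln(1/w_j)$ before bounding; both arrive at the same $\ln(1/w_{\min})$.
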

While we need $p_t$ to satisfy a log-Sobolev inequality to get a bound of the form $\fc{C}{\chi^2(q_t||p_t)+1}\FIqp$ (\cite[Lemma C.8]{lee2022convergence}), we note that if we allow additive slack, it suffices for $p_t$ to be a mixture of distributions satisfying a log-Sobolev inequality, with the \emph{logarithm} of the minimum mixture weight bounded below. In Lemma~\ref{l:KL-covering} we will see that we can almost decompose any distribution of bounded support in this manner, if we move a small amount of the mass.
\begin{proof}
%Let $P_{i,t}$ denote the density of the forward SDE at time $T-t$, when the initial distribution is $p_i$.
% Directly compute that
% \begin{align*}
%     \int \psi_t(x)q_t(x)dx = \int \fc{q_t(x)}{p_t(x)}q_t(x)dx/\E_{p_t}\phi_t^2 = 1.
% \end{align*}
%Clearly, $\psi_t(x) q_t(x)\geq 0$ for any $x\in\mathbb R^d$. 
%Hence $\psi_t(x)q_t(x)$ is a density function in $\mathbb R^d$. Now 
%Since $p_t$ satisfies LSI with constant $C_t$,
Let $\ol f_t:[m]\to \R$ be the function
\begin{align*}
    \ol f_t(j) &= \int_{\R^d} \fc{\psi_t(x) q_t(x)}{p_t(x)} P_{j,t}(x)\dx.
\end{align*}
By decomposition of entropy and the fact that each $P_{i,t}$ satisfies LSI with constant $C_{T-t}$, 
\begin{align*}
    \KL(\psi_t q_t||p_t) & \leq
    \Ent_{p_t} \pf{\psi_t q_t}{p_t} \\
    &=
    \sumo im \int_{\R^d} w_i \Ent_{P_{i,t}} \pf{\psi_t q_t}{p_t} 
    + \Ent_{w}(\ol f_t)\\
    & \leq \fc{C_t}{2} \sumo im w_i \sE_{P_{i,t}}\pa{\ln \fc{\psi_tq_t}{p_t}, \fc{\psi_tq_t}{p_t}} + \Ent_{w}(\ol f_t)\\ 
    & \leq \fc{C_t}{2} \sE_{p_{t}}\pa{\ln \fc{\psi_tq_t}{p_t},\fc{\psi_tq_t}{p_t}} + \Ent_{w}(\ol f_t)\\ 
    & =\fc{C_t}2 \int_{\R^d} \ve{\nb\ln\fc{\psi_t(x) q_t(x)}{p_t(x)}}^2\psi_t(x)q_t(x)\dx  + \Ent_{w}(\ol f_t)\\
    & = 2C_t\int_{\R^d} \ve{\nb \ln\fc{q_t(x)}{p_t(x)}}^2\psi_t(x)q_t(x)\dx+ \Ent_{w}(\ol f_t)\\
    & = 2C_t\int_{\R^d} \ve{\nb\fc{q_t(x)}{p_t(x)}}^2\fc{\psi_t(x)p_t(x)^2}{q_t(x)}\dx+ \Ent_{w}(\ol f_t)\\
    & = \fc{2C_t}{\chi^2(q_t||p_t)+1}\cdot \int\ve{\nb\fc{q_t(x)}{p_t(x)}}^2 p_t(x)\dx+ \Ent_{w}(\ol f_t)\\
    & \le  \fc{2C_t}{\chi^2(q_t||p_t)+1}\cdot\sE_{p_t}\pf{q_t}{p_t} + \ln \prc{w_{\min}},
\end{align*}
where the last inequality follows from noting $w_j\ol f_t(j)$ is a probability mass function on $[m]$, so that $\ol f_t(j) \le \rc{w_j}$ and 
\begin{equation*}
    \Ent_w(\ol f_t) = 
    \sumo jm w_j \ol f_t(j) \ln (\ol f_t(j))
    \le \sumo jm w_j \ol f_t(j) \ln \prc{w_{\min}} = \ln \prc{w_{\min}}. \qedhere
\end{equation*}
\end{proof}

\medskip 

\begin{lem}\label{l:KL-covering}
Suppose $0<\ep_K<\rc 2$, and  that $\ol P_0$ is a probability measure such that $\ol P_0(\cal M)\ge 1-\fc{\ep_K}{8}$. Let $\cal N\pa{\cal M, \fc{\si_t}{2}}$ denote the covering number of $\cal M$ with balls of radius $\si_t$. Given $\de>0$, there exists a distribution $\wt P_0$ such that $\chi^2(\wt P_0||\ol P_0)\le \ep_K$ and considering the DDPM process started with $\wt P_0$, for all $0\le t\le T-\de$,
\begin{align*}
    \KL(\psi_tq_t||p_t) &\le 
    \pa{\fc{6(1+e)\si_{T-t}^2}{\chiqpo}\FIqp + \ln \pf{\cal N(\cal M, \si_\de/2)}{\ep_K}}.
\end{align*}
In particular, for $\cal M=B_R(0)$ in $\R^d$, 
\begin{align}
\label{e:good-KL-ineq}
    \KL(\psi_tq_t||p_t) &\le 
    \pa{\fc{6(1+e)\si_{T-t}^2}{\chiqpo}\FIqp + \ln \prc{\ep_K} + d\ln \pa{1+\fc{4R}{\si_\de}}}.
\end{align}
\end{lem}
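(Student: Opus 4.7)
The plan is to construct $\wt P_0$ by truncating $\ol P_0$ to a union of well-populated small cells, so that (a) $\chi^2(\wt P_0 \| \ol P_0) \le \ep_K$, and (b) $\wt P_0$ admits a mixture decomposition $\sum_j w_j \wt P_{j,0}$ in which each $\wt P_{j,0}$ is supported on a ball of radius $\si_\de/2$ and each $w_j$ is bounded below by $\ep_K/(cN)$ for some absolute constant $c$, where $N = \cal N(\cal M, \si_\de/2)$. Then Lemma~\ref{KL_psi_q&p} applies, provided each evolved component $\wt p_{j,T-t}$ satisfies a log-Sobolev inequality with constant $O(\si_{T-t}^2)$.

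Concretely, I would fix a covering $B_1,\ldots,B_N$ of $\cal M$ by balls of radius $\si_\de/2$, form the disjoint partition $A_j = (B_j\cap \cal M) \setminus \bigcup_{i<j}B_i$, discard cells $A_j$ with $\ol P_0(A_j) < \ep_K/(cN)$, and take $\wt P_0 = \ol P_0|_A / \ol P_0(A)$ where $A$ is the union of the remaining cells. The assumption $\ol P_0(\cal M)\ge 1-\ep_K/8$ together with the discard threshold forces $\ol P_0(A)\ge 1-O(\ep_K)$, and a direct Radon--Nikodym calculation gives $\chi^2(\wt P_0 \| \ol P_0) = (1-\ol P_0(A))/\ol P_0(A) \le \ep_K$ once $c$ is a small enough absolute constant. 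The resulting mixture $\wt P_0 = \sum_{j\in J} w_j \wt P_{j,0}$ with $w_j = \ol P_0(A_j)/\ol P_0(A)$ and $\wt P_{j,0}=\ol P_0|_{A_j}/\ol P_0(A_j)$ has $w_{\min}\ge \ep_K/(cN)$ by the discard threshold, and each $\wt P_{j,0}$ is supported in a ball of radius $\si_\de/2$.

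The key step, and the one I expect to be the main obstacle, is verifying a log-Sobolev inequality for each evolved component $\wt p_{j,T-t}$ uniformly over $t\in[0,T-\de]$. Here I would exploit the fact that for $t\le T-\de$, the Gaussian noise has variance $\si_{T-t}^2\ge\si_\de^2$, which dominates the squared support radius $(m_{T-t}\si_\de/2)^2 \le \si_\de^2/4$ of $M_{m_{T-t}\sharp}\wt P_{j,0}$. Using the exact Hessian identity from the proof of Lemma~\ref{l:Hess-bd}, we get
\[
\nb^2 \ln \wt p_{j,T-t}(x) = \fc{1}{\si_{T-t}^4}\Cov_{\mu_{x,\si_{T-t}^2}}(\cdot) - \fc{1}{\si_{T-t}^2}I_d \preceq \pa{\fc{\si_\de^2/4}{\si_{T-t}^4}-\fc{1}{\si_{T-t}^2}}I_d \preceq -\fc{3}{4\si_{T-t}^2}I_d,
\]
so each $\wt p_{j,T-t}$ is $\fc{3}{4\si_{T-t}^2}$-strongly log-concave, and Bakry--\'Emery yields an LSI constant $C_{T-t}\le \fc{4\si_{T-t}^2}{3}\le 3(1+e)\si_{T-t}^2$. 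Feeding this and $\ln(1/w_{\min})\le \ln(cN/\ep_K)$ into Lemma~\ref{KL_psi_q&p} produces the first inequality, with the absolute constant $c$ absorbed inside the log.

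For the specialization to $\cal M=B_R(0)$ in $\R^d$, the standard volume-packing argument bounds $\cal N(B_R(0),\si_\de/2)\le (1+4R/\si_\de)^d$, so the covering-number term becomes $d\ln(1+4R/\si_\de)$, giving~\eqref{e:good-KL-ineq}. The only delicate choice is the cell radius $\si_\de/2$: smaller would worsen $N$, while larger would break the strong log-concavity of the smoothed components at $t$ near $T-\de$; the scale $\si_\de$ is exactly what the Gaussian noise guarantees.
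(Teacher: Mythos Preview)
Your proposal is correct, and the construction of $\wt P_0$ (partition $\cal M$ into cells of diameter $\le\si_\de$ coming from a minimal cover, discard cells of mass below $\ep_K/(cN)$, renormalize) is essentially identical to the paper's.

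The genuine difference is in how you establish the log-Sobolev inequality for each smoothed component $\wt p_{j,T-t}$. The paper invokes Theorem~1 of \cite{chen2021dimension}, a dimension-free LSI for Gaussian convolutions of measures whose support diameter is comparable to the noise scale, obtaining the constant $6(1+e)\si_{T-t}^2$ that appears in the statement. You instead use the Hessian identity from Lemma~\ref{l:Hess-bd} directly: since the scaled support has radius $\le\si_\de/2\le\si_{T-t}/2$, the covariance term is at most $\si_{T-t}^2/4$, so $\nb^2\ln\wt p_{j,T-t}\preceq -\tfrac{3}{4\si_{T-t}^2}I_d$, and Bakry--\'Emery gives LSI with constant $\tfrac{4}{3}\si_{T-t}^2$. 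Your route is more elementary and self-contained (no external LSI result), and in fact yields a sharper constant. The paper's route via \cite{chen2021dimension} would continue to work even if the cell diameter were only $O(\si_{T-t})$ rather than $\le\si_{T-t}$, a regime where strong log-concavity breaks down; but since you chose the cell scale precisely to be $\si_\de/2$, that extra generality is not needed here.
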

\begin{proof}
Partition $\cal M$ into disjoint subsets $\cal M_j$, $1\le j\le N:=\cal N(\cal M, \si_\de/2)$ of diameter at most $\si_\de$, and decompose
\begin{align*}
    \ol P_0 &= w_* P_{*} + \sumo jn \ol  w_j \wt P_{j,0} 
\end{align*}
where $p_j$ is supported on $\cal M_j$ and $ P_*=\ol P_0(\cdot|\cal M^c)$. We will zero out the coefficients of all small components: let $Z=\sum_{j:w_j\ge \fc{\ep_K}{8N}} w_j$ and 
\begin{align*}
    w_j&=\begin{cases}
        \fc{\ol w_j}{Z}, & j\in [n], \,\ol w_j\ge \fc{\ep_K}{8N}\\
        0, &\text{otherwise,}
    \end{cases}
\end{align*}
and define
\begin{align*}
    \wt P_0 &= \sumo jn w_j \wt P_{j,0} .
\end{align*}
Note that $Z \ge 1-\fc{\ep_K}8 - \sum_{j:w_j\le \fc{\ep_K}{8N}} \ge 1-\fc{\ep_K}4$. As probability distributions on $[m]\cup \{*\}$, 
\[
\chi^2(w||\ol w) \le \prc{1-\fc{\ep_K}4}^2 -1\le\ep_K,
\]
and hence the same bound holds for $\chi^2(\wt P_0||\ol P_0)$.
% By adjusting the weight vector $\ol w$ by at most $\ep_K$ in $L^1$ distance, we can obtain  
% \begin{align*}
%     \wt P_0 &= \sumo jn w_j \wt P_{j,0} 
% \end{align*}
% such that $\TV(\wt P_0,\ol P_0)\le \ep_K$ and $w_j \ge \fc{\ep_K}{n}$. 
Note each $M_{m_t \sharp} \wt P_{j,0}$ is supported on a set of diameter $m_t\si\le \si$. 
By Theorem 1 of \cite{chen2021dimension}, noting that
\begin{align*}
    \chi^2(N(\mu_2,\Si)||N(\mu_1,\Si)) &= \exp\ba{(\mu_2-\mu_1)^\top \Si^{-1} (\mu_2-\mu_1)}\le e
\end{align*}
when $\Si=\si^2I$ and $\ve{\mu_2-\mu_1}\le \si$, $\wt P_{j,t} = (M_{m_t\sharp} \wt P_{j,0}) * \ph_{\si^2}$ satisfies a log-Sobolev inequality with constant $6(1+e)\si_t^2$. The result then follows from Lemma~\ref{KL_psi_q&p}.
For $\cal M=B_R(0)$, we use the bound $\cal N(B_R(0), \si_\de/2) \le \pa{1+\fc{4R}{\si_\de}}^d$ \cite[Corollary 4.2.13]{vershynin2018high}.
\end{proof}

In the next section, we show that we can move a small amount of mass $\ep$ without significantly affecting the score function. This is necessary, as our guarantees on the score estimate are for the original distribution and not the perturbed one in Lemma~\ref{l:KL-covering}.
%\hlnote{For any distribution on a bounded set, we can move some of its mass so it satisfies the lower bound above and apply the lemma.}  

%\hlnote{Can we also depend on the radius in dissipativity? Basically, control of tails will give the multiplicative factor for $\sE_{p_t}\pf{q_t}{p_t}$, and looking at the minimum density within the radius will give the additive term. We don't really need the fact that $p_t$ is a mixture.}

\section{The effect of perturbing the data distribution on the score function}
\label{s:perturb}
In this section we consider the effect of perturbing the data distribution on the score function. The key observation is that the score function can be interpreted as the solution to an inference problem, that of recovering the original data point from a noisy sample, with data distribution as the given prior distribution. We show through a coupling argument that we can bound the difference between the score functions in terms of the distance between the two data distributions.
This will allow us to ``massage'' the data distribution in order to optimally bound $\KL(\psi_t q_t||p_t)$ in Section~\ref{s:KL}.

\subsection{Perturbation under $\chi^2$ error and truncation}

We first give a general lemma on denoising error from a mismatched prior.
%We note that the bound of Lemma~\ref{lem:mismatched_prior} can be improved under a $\chi^2$-error bound, when $K$ acts by adding noise. \hlnote{This works for unbounded distributions---no need to control tails! (only tails of gaussian, and that's easy)}
\begin{lem}[Denoising error from mismatched prior]\label{lem:mismatched_prior-chi2}
Let $\ph$ be a probability density on $\R^d$, and $P_{0,x},P_{1,x}$ be measures on $\R^d$.
For $i = 0, 1$, let $P_i$ denote the joint distribution of $x_i\sim P_{i,x}$ and $y_i = x_i+\xi_i$ where $\xi_i\sim \ph$, and let $P_{i,y}$ denote the marginal distribution of $y$.
Let  
\begin{align*}
    m^{(k)}(\ep):&=\sup_{0\le f\le 1, \int_{\R^d} f\ph \dx \le \ep}
    %\ph(A)\le \ep} 
    \int_{\R^d} f(x) \ve{x}^k \ph(x)\dx. 
\end{align*}  
Let $\etv = \TV(P_{0,x},P_{1,x})$ and $\echi^2 = \chi^2(P_{0,x}|| P_{1,x})$.
Then 
\begin{align*}
    &\int_{\R^d} P_{0,y}(dy_0)\ve{\int_{\R^d} x_0 P_0(dx_0|y_0) - \int_{\R^d} x_1 P_1(dx_1|y_0)}^2\\
    &\le  8m^{(2)}(\etv) +
    \echi\sqrt{m^{(4)}(\etv)}
    %4(m_0(2\ep)+m_0(\ep)+m_1(2\ep)+m_1(\ep)) + 2L_1^2\ew^2\\
    %&\le 
    %4(m_0(2\ep)+m_1(2\ep)) + 
    %4(1+L_1^2)(m_0(\ep)+m_1(\ep)).
\end{align*}
For $\ph=\ph_{\si^2}$, the upper bound is $O\pa{\si^2 \echi\pa{d+\ln\prc{\etv}}}$.
%We also have the upper bound $(8+2L_1^2)(m_0(2\ep)+m_1(2\ep))$.
\end{lem}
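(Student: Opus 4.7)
The plan is to rewrite the posterior means in terms of the noise $\xi := y - x$: since $\int x\, P_i(dx|y) = y - \E_{P_i}[\xi|y]$, the target $\Delta(y) := \int x\,P_0(dx|y) - \int x\,P_1(dx|y)$ equals $\E_{P_1}[\xi|y] - \E_{P_0}[\xi|y]$, a difference of posterior means of the noise. I would then fix a truncation radius $K$ with $\Pr_{\xi\sim\ph}(\ve{\xi}>K)\le\etv$, decompose $\xi = \xi_S + \xi_T$ with $\xi_S := \xi\one_{\ve{\xi}\le K}$, and bound the corresponding $\Delta_S$ and $\Delta_T$ separately.

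For the bulk term, $\ve{\xi_S}\le K$ together with the dual characterization of TV gives $\ve{\Delta_S(y)} \le 2K\,\TV(P_0(\cdot|y), P_1(\cdot|y))$. A short triangle-inequality calculation (splitting $p_{0,y}(y)p_1(x|y) = p_1(x,y) + (\alpha(y)-1)p_1(x,y)$, with $\alpha := p_{0,y}/p_{1,y}$) yields $\int p_{0,y}(y)\,\TV(P_0(\cdot|y), P_1(\cdot|y))\,dy \le \TV(P_0,P_1) + \TV(P_{0,y}, P_{1,y}) \le 2\etv$. Combined with $\TV\le 1$, this produces $\int p_{0,y}\ve{\Delta_S}^2\,dy \le 8K^2\etv \le 8m^{(2)}(\etv)$, where the last inequality uses that $\one_{\ve{x}>K}$ is feasible in the supremum defining $m^{(2)}(\etv)$.

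The main obstacle is the tail, because integrating a $P_1$-conditional expectation against $p_{0,y}$ forces a change of measure. Starting from $\ve{\Delta_T}^2 \le 2\ve{\E_{P_0}[\xi_T|y]}^2 + 2\ve{\E_{P_1}[\xi_T|y]}^2$ and Jensen, the $P_0$-piece is immediate: since $\xi\perp X$ under $P_0$, it integrates to $\E_\ph[\ve{\xi_T}^2]\le m^{(2)}(\etv)$. For the $P_1$-piece I would set $h(y) := \E_{P_1}[\ve{\xi_T}^2|y]$ and write $\int p_{0,y} h\,dy = \E_{P_{1,y}}[h] + \E_{P_{1,y}}[(\alpha-1)h]$. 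Cauchy-Schwarz bounds the second term by $\sqrt{\chi^2(P_{0,y}\|P_{1,y})}\sqrt{\E_{P_{1,y}}[h^2]}$; the data-processing inequality for $\chi^2$ controls the first factor by $\echi$, while conditional Jensen gives $\E_{P_{1,y}}[h^2] \le \E_\ph[\ve{\xi_T}^4] \le m^{(4)}(\etv)$. This yields $\int p_{0,y} h\,dy \le m^{(2)}(\etv) + \echi\sqrt{m^{(4)}(\etv)}$.

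Combining via $\ve{\Delta}^2 \le 2\ve{\Delta_S}^2 + 2\ve{\Delta_T}^2$ and the two estimates produces the bound in the claimed form, up to absolute constants that can be tracked carefully. For the Gaussian specialization $\ph = \ph_{\si^2}$, choosing $K = \Theta(\si\sqrt{d+\ln(1/\etv)})$ gives $m^{(2)}(\etv) \lesssim \si^2\etv(d+\ln(1/\etv))$ and $\sqrt{m^{(4)}(\etv)} \lesssim \si^2(d+\ln(1/\etv))\sqrt{\etv}$; absorbing the $\sqrt\etv$ using $\etv \le \echi^2/2$ yields the stated $O(\si^2\echi(d+\ln(1/\etv)))$.
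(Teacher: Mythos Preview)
Your argument is correct and obtains the same bound up to absolute constants, but it proceeds quite differently from the paper's proof. The paper builds a coupling $P_{0,1}$ of $(x_0,y_0)$ and $(x_1,y_1)$ with $\xi_0=\xi_1$ and $x_0=x_1$ with probability $1-\etv$, then splits the integral according to whether $y_0=y_1$; on the event $\{y_0=y_1\}$ it further couples the two posterior distributions $P_0(d\wh x_0|y_0)$ and $P_1(d\wh x_1|y_0)$ using Nishimori's identity, while on $\{y_0\ne y_1\}$ it applies the same Cauchy--Schwarz change of measure to pass from $P_{0,y}$ to $P_{1,y}$. Your route instead truncates the noise spatially: the bulk piece is handled by the elementary bound $\|\Delta_S(y)\|\le 2K\,\TV(P_0(\cdot|y),P_1(\cdot|y))$ together with the neat averaging estimate $\int p_{0,y}\,\TV(P_0(\cdot|y),P_1(\cdot|y))\,dy\le 2\etv$, which replaces the paper's conditional-coupling step; the tail piece uses exactly the same Cauchy--Schwarz plus data-processing argument as the paper. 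Your approach is arguably more direct, since it avoids constructing the conditional coupling and invoking Nishimori, while the paper's coupling viewpoint makes the Bayesian denoising interpretation more explicit. One small point: the inequality $K^2\etv\le m^{(2)}(\etv)$ needs $K$ chosen as the exact $\etv$-quantile of $\|\xi\|$ (with the usual randomization on atoms), not merely any $K$ with $\Pr(\|\xi\|>K)\le\etv$; this is immediate for $\ph=\ph_{\si^2}$ but worth saying for the general statement.
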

Note the tricky part of the proof is to deal with $P_1(dx_1|y_0)$, which can be thought of as inferring $x$ assuming the incorrect prior $P_{1,x}$, rather than the actual prior $P_{0,x}$.
\begin{proof}
For notational clarity, we will denote draws from the conditional distribution as $\wh x_0$ and $\wh x_1$, for example $P_0(d\wh x_0|y_0)$. 
Let $r_i(y) = \int_{\R^d} (\wh x_i-y) P_i(d\wh x_i|y)$.
%Note that the TV error between $P_{0,x}$ and $P_{1,x}$ is $\etv\le \sqrt{\chi^2(P_{0,x}||P_{1,x})}$.
Let $P_{0,1}$ be a coupling of $(x_0,y_0=x_0+\xi_0,x_1,y_1=y_1+\xi_1)$ such that $x_0=x_1$ with probability $1-\etv$ and $\xi_0=\xi_1$ with probability $1$. 
We have
\begin{align*}
    \int_{\R^d} P_{0,y}(dy_0)\ve{r_0(y_0) - r_1(y_0)}^2
    %&= 
    %\int_{\R^d\times \R^d} P_{0,1,y}(dy_0,dy_1)
    %\ve{r_0(y_0) - r_1(y_0)}^2\\
    &= 
    \ub{\int_{\{y_0=y_1\}} P_{0,1,y}(dy_0,dy_1)\ve{r_0(y_0)-r_1(y_0)}^2}{(I)}\\
    &\quad + \ub{\int_{\{y_0\ne y_1\}} P_{0,1,y}(dy_0,dy_1)\ve{r_0(y_0)-r_1(y_0)}^2}{(II)}. 
\end{align*}
Define a measure $Q$ (not necessarily a probability measure) on $\R^d$ by 
\begin{align*}
    Q(A):= P_{0,1}(y_0\in A\text{ and }y_0=y_1).
\end{align*}
Note that 
\begin{align*}
    Q(A) &\le \min\{P_{0,y}(A),P_{1,y}(A)\},
\end{align*}
so $Q$ is absolutely continuous with respect to $P_{0,y}$ and $P_{1,y}$, and by assumption on the coupling,
\begin{align}
\label{e:QRd}
    Q(\R^d) &\ge 1-\etv. 
\end{align}
Under $P_{0,1}$, when $y_0=y_1$, we can couple $P_0(d\wh x_0|y_0)$ and $P_1(d\wh x_1|y_0)$ so that $x_0=x_1$ with probability $\min\bc{\dd Q{P_{0,y}}, \dd Q{P_{1,y}}}$. %\hlnote{Explain this?} 
Let $\wh P(d\wh x_0,d\wh x_1|y_0)$ denote this coupled distribution. Then as in Lemma~\ref{lem:mismatched_prior}, 
\begin{align*}
    (I) &\le \int_{\{y_0=y_1\}} P_{0,1,y}(dy_0,dy_1) \ve{\int_{\{\wh x_0\ne \wh x_1\}}((\wh x_0 - y_0) - (\wh x_1-y_0)) \wh P(d\wh x_0, d\wh x_1|y_0)}^2\\
    &\le
    2\int_{\R^d} P_{0,1,y}(dy_0,dy_1) 
    \pa{
        \int_{\{\wh x_0\ne \wh x_1\}}\ve{\xi_0}^2 \wh P(d\wh x_0,d\wh x_1|y_0)
        +
        \int_{\{\wh x_0\ne \wh x_1\}}\ve{\xi_1}^2 \wh P(d\wh x_0,d\wh x_1|y_1)
    }
\end{align*}
We bound this by first bounding
\begin{align}
    \int_{\R^d} P_{0,1,y}(dy_1,dy_2) \wh P(\wh x_0\ne \wh x_1) &\le \int_{\R^d} P_{0,y}(dy) \max\bc{1-\dd Q{P_{0,y}}, 1-\dd Q{P_{1,y}}}\le 2\etv,
    \label{e:P-fail-couple-x}
\end{align}
which follows from the two inequalities (using~\eqref{e:QRd})
\begin{align*}
    \int_{\R^d} P_{0,y}(dy) \pa{1-\dd Q{P_{0,y}}} &= 1-Q(\R^d) \le \etv\\
    \int_{\R^d} P_{0,y}(dy) \pa{1-\dd Q{P_{1,y}}} &\le 
    \int_{\R^d} P_{1,y}(dy) \pa{1-\dd{Q}{P_{1,y}}}
    + \TV(P_{0,y},P_{1,y}) \\
    & \le (1-Q(\R^d))+\etv \le 2\etv.
\end{align*}
From~\eqref{e:P-fail-couple-x}, and the fact that the distribution of $(x_i,y_i)$ is the same as $(\wh x_i,y_i)$ by Nishimori's identity, we obtain
\begin{align*}
    (I) &\le 2(m^{(2)}(2\etv) + m^{(2)}(2\etv)) = 4m^{(2)}(\etv).
\end{align*}
%As in Lemma~\ref{lem:mismatched_prior}, 
%$\wh P()$
%We bound this by first bounding
% \begin{align}
%     \int_{\R^d} P_{0,1,y}(dy_0,dy_1) \wh P(\wh x_0\ne \wh x_1|y_0) &\le %\int_{\R^d} P_{0,y}(dy) \max\bc{1-\dd Q{P_{0,y}}, 1-\dd Q{P_{1,y}}}\le 2\ep,
%     2\ep.
%     %\label{e:P-fail-couple-x}
% \end{align}
% which follows from the two inequalities (using~\eqref{e:QRd})
% \begin{align*}
%     \int_{\R^d} P_{0,y}(dy) \pa{1-\dd Q{P_{0,y}}} &= 1-Q(\R^d) \le \ep\\
%     \int_{\R^d} P_{0,y}(dy) \pa{1-\dd Q{P_{1,y}}} &\le 
%     \int_{\R^d} P_{1,y}(dy) \pa{1-\dd{Q}{P_{1,y}}}
%     + \TV(P_{0,y},P_{1,y}) \le (1-Q(\R^d))+\ep \le 2\ep.
% \end{align*}
% From~\eqref{e:P-fail-couple-x}, and 
% From the fact that the distribution of $(x_i,y_i)$ is the same as $(\wh x_i,y_i)$ by Nishimori's identity, we obtain
% \begin{align*}
%     (A) &\le 2(m^{(2)}(2\etv) + m^{(2)}(2\etv)).
% \end{align*}
Now for the second term (II),
\begin{align*}
    (II) &\le 2\int_{\{y_0\ne y_1\}} P_{0,1,y} (dy_0,dy_1) (\ve{r_0(y_0)}^2 + \ve{r_1(y_0)}^2).
\end{align*}
The first term satisfies $\int_{\{y_0\ne y_1\}} P_{0,1,y} (dy_0,dy_1) \ve{r_0(y_0)}^2 \le m^{(2)}(\etv)$. For the second term,
we note that Cauchy-Schwarz gives for any measures $P$ and $Q$ that \begin{align*}\int_\Om f(x) P(dx)& \le 
\int_\Om f(x) Q(dx) + \int_\Om  \pa{\dd PQ-1}f(x) Q(dx)\\
&\le
\int_\Om f(x) Q(dx) + \sqrt{\chi^2(P||Q) \int_\Om f(x)^2 Q(dx)}
\end{align*} 
to switch from the measure $P_{0,y}$ to $P_{1,y}$:
\begin{align*}
    &\int_{\{y_0\ne y_1\}} P_{0,1,y} (dy_0) \ve{r_1(y_0)}^2 =
    \int_{\R^n}
    P_{0,y}(dy_0)
    P_{0,1,y} (y_0\ne y_1|y_0) \ve{r_1(y_0)}^2\\
    &\le 
    \int_{\R^n}
    P_{1,y}(dy_0)P_{0,1,y} (y_0\ne y_1|y_0)
    \ve{r_1(y_0)}^2 + 
    \sqrt{\chi^2(P_{0,y}||P_{1,y})
    \int P_{1,y}(dy_0)
    P_{0,1,y} (y_0\ne y_1|y_0)
    \ve{r_1(y_0)}^4}
\end{align*}
(Note that intentionally, the measure is $P_{1,y}$, though we use $y_0$ for the variable.)
%where the inequality for the first term follows from 
Hence,
\begin{align*}
    \int_{\R^n}
    P_{1,y}(dy_0)P_{0,1,y} (y_0\ne y_1|y_0)
    &\le \TV(P_{0,y},P_{1,y}) + \int_{\R^n}
    P_{0,y}(dy_0)P_{0,1,y} (y_0\ne y_1|y_0) \le 2\etv
\end{align*}
so
\begin{align*}
\int_{\{y_0\ne y_1\}} P_{0,1,y} (dy_0) \ve{r_1(y_0)}^2
    &\le m^{(2)}(2\etv) +
    \sqrt{\chi^2(P_{0,x}||P_{1,x})m^{(4)}(2\etv)},
\end{align*}
where we used the data processing inequality.

For $\ph=\ph_{\si^2}$, we obtain by Lemma~\ref{l:gtail} that the bound is
\begin{equation*}
    O\pa{\si^2(\etv + \echi\etv^{1/2}) \pa{d+\ln\prc{\etv}}} = 
    O\pa{\si^2 \echi \pa{d+\ln\prc{\etv}}}. \qedhere
\end{equation*}
\end{proof}

We use this lemma to obtain a bound on the $L^2$ score error under perturbation of the distribution, by interpreting the score as the solution to a de-noising problem.
%\hlnote{Note $P^0$ and $P^1$ are swapped from the above.}
\begin{lem}[$L^2$ score error under perturbation]\label{lem:l2_score_error-chi}
Let $\wt P^{(0)}=\wt P^{(0)}_0 $ and $\wt P^{(1)} = \wt P^{(1)}_0$ be two probability distributions on $\R^d$ such that $\chi^2(\wt P^{(1)}|| \wt P^{(0)})\le \echi^2\le 1$. 
\begin{enumerate}
\item 
For any $\si>0$, 
\begin{multline*}
    \int\ve{\nb \ln (\wt P^{(0)} * \ph_{\si^2})(x) - \nb \ln (\wt P^{(1)} * \ph_{\si^2})(x)}^2 (\wt P^{(1)} * \ph_{\si^2})(dx) 
    = O\pf{\echi\pa{d+\ln\prc{\echi}}}{\si^2}.
\end{multline*}
    \item 
    Let $\wt p_t^{(i)}$ be the density resulting from running~\eqref{eq:forward_sde} starting from $\wt P^{(i)}$, and let $\si_t$ be as in~\eqref{e:OU}. 
Then for any $t>0$,
\begin{align*}
    \int\ve{\nb \ln \wt p^{(0)}_t(x) - \nb \ln \wt p^{(1)}_t(x)}^2 \wt p^{(1)}_t(x)\dx 
    &= O\pf{\echi\pa{d+\ln\prc{\echi}}}{\si_t^2}.
\end{align*}
\end{enumerate}
\end{lem}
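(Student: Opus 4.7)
The plan is to reduce part~1 to the mismatched-prior denoising bound of Lemma~\ref{lem:mismatched_prior-chi2} via Tweedie's formula, and then to obtain part~2 as a scaling corollary.

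For part~1, I would first recall equation~\eqref{e:grad-ln}, which expresses the score of a convolved measure as a posterior mean: for any probability measure $P$ on $\R^d$, $\nb \ln (P*\ph_{\si^2})(y) = \rc{\si^2}(\E_P[X\mid Y=y]-y)$ under the joint law $X\sim P,\,Y=X+\xi,\,\xi\sim \ph_{\si^2}$. Subtracting the formulas for $P=\wt P^{(0)}$ and $P=\wt P^{(1)}$ shows that the integrand in the claim equals $\si^{-4}\ve{\E_0[X\mid Y=y]-\E_1[X\mid Y=y]}^2$, where the subscript indicates which prior is used in the posterior. Since the integrating measure is $\wt P^{(1)}*\ph_{\si^2}$, I would apply Lemma~\ref{lem:mismatched_prior-chi2} with the relabeling $P_{0,x}:=\wt P^{(1)}$ and $P_{1,x}:=\wt P^{(0)}$; then the lemma's $P_{0,y}$ equals the integrating measure here, and its $\echi$ equals ours.

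The only nuisance is that Lemma~\ref{lem:mismatched_prior-chi2} bounds things in terms of $\ln\prc{\etv}$ whereas the target carries $\ln\prc{\echi}$. I would handle this via Cauchy--Schwarz, which yields the Pinsker-like inequality $2\etv \le \echi$, combined with monotonicity of $m^{(k)}$ in its argument:
\[
8m^{(2)}(2\etv)+\echi\sqrt{m^{(4)}(2\etv)}\le 8m^{(2)}(\echi)+\echi\sqrt{m^{(4)}(\echi)} = O\pa{\si^2\echi(d+\ln(1/\echi))},
\]
where the last equality uses the Gaussian tail estimate (Lemma~\ref{l:gtail}) and $\echi\le 1$. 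Dividing by $\si^4$ produces the claimed $O(\echi(d+\ln(1/\echi))/\si^2)$.

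For part~2, recall that $\wt p_t^{(i)} = (M_{m_t\sharp}\wt P^{(i)})*\ph_{\si_t^2}$. Since $M_{m_t}$ is a bijection of $\R^d$, $\chi^2$-divergence is invariant under this pushforward, so $\chi^2(M_{m_t\sharp}\wt P^{(1)}\,\|\,M_{m_t\sharp}\wt P^{(0)})\le \echi^2$, and part~1 with $\si=\si_t$ applied to the pushforward measures immediately yields the bound. The main obstacle is the bookkeeping around the index swap in applying Lemma~\ref{lem:mismatched_prior-chi2} and the $\etv\to\echi$ upgrade; no genuinely new analytic estimates are required.
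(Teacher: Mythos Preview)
Your proposal is correct and follows essentially the same route as the paper: Tweedie's formula~\eqref{e:grad-ln} to rewrite the score difference as a difference of posterior means, then Lemma~\ref{lem:mismatched_prior-chi2} (with the index swap you identify) to bound the resulting integral, and finally the pushforward invariance of $\chi^2$ for part~2. Your handling of the $\etv\to\echi$ conversion via $2\etv\le\echi$ and monotonicity of $m^{(k)}$ is in fact more explicit than the paper's proof, which stops at $O(\si^{-2}\echi(d+\ln(1/\etv)))$ and leaves that last step implicit.
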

\begin{proof}
For part 1, note by~\eqref{e:grad-ln} that
\begin{align*}
    \nb \ln (\wt P^{(i)}*\ph_{\si^2})(y) &= \rc{\si^2} \E_{\wt P^{(i)}_{y,\si^2}}(x-y),
    \end{align*}
where $\wt P^{(i)}_{y,\si^2}$ is the ``tilted" probability distribution defined by
\begin{align*}
    \dd{\wt P^{(i)}_{y,\si^2}}{\wt P^{(i)}}(x) &\propto 
    e^{-\fc{\ve{x-y}^2}{2\si^2}}.
\end{align*}
By Bayes's rule, this can be viewed as the conditional probability that $x_0=x$ given $x_t=y$, where $x_0\sim \wt P^{(i)}$ and $y=x_0+\si \xi$, $\xi\sim N(0,I_d)$. Hence this fits in the framework of Lemma~\ref{lem:mismatched_prior-chi2} and %$K(x,dy) = \ph_{\si^2}(y-x)\dy$, and with prior distributions $M_{m_t\sharp} \wt P_0^{(i)}$.
\begin{align*}
&\int\ve{\nb \ln (\wt P^{(0)} * \ph_{\si^2})(y) - \nb \ln (\wt P^{(1)} * \ph_{\si^2})(y)}^2 (\wt P^{(1)} * \ph_{\si^2})(dy) \\
&=\rc{\si^4} \int_{\R^d}
\ve{\E_{\wt P^{(0)}_{y,t}} [x] -
     \E_{\wt P^{(1)}_{y,t}} [x] }^2 (\wt P^{(1)} * \ph_{\si^2})(dy)\\
     &= O\pa{\rc{\si^4} \si^2 \ep_\chi \pa{d+\ln\prc{\etv}}},
\end{align*}
giving the result.

For part 2, note that $\wt p^{(i)}_t = (M_{m_t\sharp} \wt P^{(i)}) * \ph_{\si_t^2}$. Applying part 1 with $\wt P^{(i)}\mapsfrom M_{m_t\sharp} \wt P^{(i)}$ (which preserves $\chi^2$-divergence) and $\si=\si_t$ gives the result.

% Note that for $\alpha_t:=\rc{m_t}\geq 1$ and $\sigma_t>0$, for $i=0,1$,
% \begin{align*}
%     \nb \ln \wt p^{(i)}_t(y) = 
%   \fc{\int_{\R^d} (x-y) e^{-\fc{\ve{y-x}^2}{2\si_t^2}} (M_{m_t\sharp}\wt P^{(i)})(dy)}{\sigma_t^2\int_{\R^d} \wt p^{(0)}_0(\alpha y)e^{-\fc{\ve{y-x}^2}{2\si_t^2}} (M_{m_t\sharp}\wt P^{(i)})(dy)}
%   = \fc{1}{\sigma_t^2}\E_{\wt P^{(i)}_{y, t}} \ba{x-y},
% \end{align*}
% where $\wt P^{(0)}_{y, t}$ denotes the probability distribution defined by
% \begin{align*}
% \dd{\wt P_{y,t}^{(0)}}{(M_{m_t\sharp}\wt P_0^{(0)})} (x)&\propto e^{-\fc{\ve{y-x}^2}{2\si_t^2}},
% \end{align*}
% which by Bayes's rule can be viewed as the conditional probability that $m_t x_0=x$ given $x_t=y$, where $x_0,x_t$ are generated according to the DDPM process~\eqref{eq:forward_sde} with $x_0\sim \wt P_0^{(i)}$, $y_0= m_t x_0 + \si_t \xi$, $\xi\sim N(0,I_d)$. Now
% \begin{align*}
%      \nb \ln \wt p^{(0)}_t(y) -  \nb \ln \wt p^{(1)}_t(y)
%      &= \rc{\si_t^2}\ba{\E_{\wt P^{(0)}_{y,t}} [x] -
%      \E_{\wt P^{(1)}_{y,t}} [x] },
% \end{align*}
% so this fits in the framework of Lemma~\ref{lem:mismatched_prior} with $K(x,dy) = \ph_{\si^2}(y-x)\dy$, and with prior distributions $M_{m_t\sharp} \wt P_0^{(i)}$.

% If $\wt P^{(i)}$ are supported on $B_R(0)$, then $m_i(\ep) \le \ep R^2$, which gives the second statement.
\end{proof}

Finally, we argue that a score estimate that is accurate with respect to $\wt p^{(1)}_t$ will still be accurate with respect to $\wt p^{(0)}_t$, with high probability. When using this lemma, we will substitute in the bound from Lemma~\ref{lem:l2_score_error-chi}.
\begin{lem}\label{lem:truncate_target}
Let $\wt P^{(0)}_0$ and $\wt P^{(1)}_0$ be two probability distributions on $\R^d$ with TV distance $\ep$. %satisfying the conditions of Lemma~\ref{lem:l2_score_error}. 
Suppose the estimated score function $s_t(x)$ satisfies
\[
    \ve{\nb \ln \wt p^{(0)}_t-s_t}_{L^2(\wt p^{(0)}_t)}^2=
    \E_{\wt p^{(0)}_t}\ba{\ve{\nb \ln \wt p^{(0)}_t(x) - s_t(x)}^2}\le \ep_t^2
    \]
 for $t\in(0,T]$, and 
$\nb \ln \wt p_t^{(0)}$ is $L_t$-Lipschitz. Then for $t\in(0,T]$ and any $\ep_\iy > 0$,
\[
P_{\wt p_t^{(1)}} \pa{\ve{s_t - \nb \ln \wt p_t^{(1)}}\ge \ep_\iy} \leq \ep + \fc{4}{\ep_\iy^2}\cdot\ba{\ep_t^2 + 
%\pa{4L_t^2 + 32}\cdot\pa{\delta_1\cdot R^2 + \delta_2}
\int\ve{\nb \ln \wt p^{(1)}_t(x) - \nb \ln \wt p^{(0)}_t(x)}^2 \wt p^{(1)}_t(x)\dx
}.
\]
%Recall that $\sigma_t^2 = \int_0^t g(s)^2ds$ for SMLD and $1 - e^{-\int_0^t g(s)^2\,ds}$ for DDPM.
%\hlnote{TODO: edit rest of this lemma}
\end{lem}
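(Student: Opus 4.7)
The plan is a short, direct proof by triangle inequality plus Markov, using the data-processing inequality on TV along the forward SDE. I would not use the $L_t$-Lipschitz hypothesis; it appears to be included for uniformity with how the lemma is invoked later.

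First I would set $a = s_t$, $b=\nb \ln \wt p_t^{(0)}$, $c = \nb \ln \wt p_t^{(1)}$ and note by the triangle inequality that $\{\ve{a-c}\ge \ep_\iy\}\subseteq \{\ve{a-b}\ge \ep_\iy/2\}\cup\{\ve{b-c}\ge \ep_\iy/2\}$, so
\[
P_{\wt p_t^{(1)}}\!\pa{\ve{s_t-\nb\ln \wt p_t^{(1)}}\ge \ep_\iy}\le P_{\wt p_t^{(1)}}\!\pa{\ve{s_t-\nb\ln \wt p_t^{(0)}}\ge \tfrac{\ep_\iy}2} + P_{\wt p_t^{(1)}}\!\pa{\ve{\nb\ln \wt p_t^{(0)}-\nb\ln \wt p_t^{(1)}}\ge \tfrac{\ep_\iy}2}.
\]

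For the second term, a direct application of Markov's inequality under $\wt p_t^{(1)}$ produces the factor $\fc{4}{\ep_\iy^2}\int \ve{\nb\ln \wt p_t^{(0)}-\nb\ln \wt p_t^{(1)}}^2 \wt p_t^{(1)}\dx$, which is exactly the second bracketed term in the statement.

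For the first term, I would swap measures using TV: for any event $A$, $P_{\wt p_t^{(1)}}(A)\le P_{\wt p_t^{(0)}}(A)+\TV(\wt p_t^{(0)},\wt p_t^{(1)})$. The forward SDE~\eqref{eq:forward_sde} is a Markov kernel applied identically to both $\wt P_0^{(0)}$ and $\wt P_0^{(1)}$, so the data-processing inequality gives $\TV(\wt p_t^{(0)},\wt p_t^{(1)})\le \TV(\wt P_0^{(0)},\wt P_0^{(1)})\le \ep$. Applying Markov's inequality to the remaining $P_{\wt p_t^{(0)}}$-probability and invoking Assumption~\ref{a:score} yields $\fc{4}{\ep_\iy^2}\E_{\wt p_t^{(0)}}\ve{s_t-\nb\ln \wt p_t^{(0)}}^2\le \fc{4\ep_t^2}{\ep_\iy^2}$. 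Summing the three bounds gives the stated inequality.

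There is essentially no obstacle here; the only point worth verifying carefully is the TV-contraction step, where I would briefly note that since both $\wt x_t^{(0)}$ and $\wt x_t^{(1)}$ can be written as $m_t \wt x_0 + \si_t z$ with the same $(m_t,\si_t, z\sim N(0,I))$ as in~\eqref{e:OU}, TV cannot increase along the forward evolution, independent of any smoothness of $\wt p_t^{(0)}$.
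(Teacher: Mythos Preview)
Your proposal is correct and follows essentially the same route as the paper's own proof: triangle-inequality split into $\ve{s_t-\nb\ln\wt p_t^{(0)}}\ge \ep_\iy/2$ and $\ve{\nb\ln\wt p_t^{(0)}-\nb\ln\wt p_t^{(1)}}\ge \ep_\iy/2$, a TV measure swap (with data processing along the forward process) on the first event, and Markov's inequality on both remaining probabilities. You are also right that the $L_t$-Lipschitz hypothesis plays no role in the argument.
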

%\hlnote{I'm leaving the integral here to avoid repetition. In applying this can say ``by Lemma~\ref{lem:l2_score_error} and~\ref{lem:truncate_target}"}

\begin{proof}
%Fix $\ep_\iy>0$ and $t\in(0,T]$,
We have
\begin{align*}
    &P_{\wt p_t^{(1)}} \pa{\ve{s_t - \nb \ln \wt p_t^{(1)}}\ge \ep_\iy} \\& \le 
    P_{\wt p_t^{(1)}} \pa{\ve{s_t - \nb \ln \wt p_t^{(0)}}\ge \ep_\iy/2}
    + 
    P_{\wt p_t^{(1)}} \pa{\ve{\nb \ln \wt p_t^{(0)} - \nb \ln \wt p_t^{(1)}}\ge \ep_\iy/2}\\
    &\le \TV(\wt p_t^{(0)}, \wt p_t^{(1)}) + 
    P_{\wt p_t^{(0)}} \pa{\ve{s_t - \nb \ln \wt p_t^{(0)}}\ge \ep_\iy/2} + 
    P_{\wt p_t^{(1)}} \pa{\ve{\nb \ln \wt p_t^{(0)} - \nb \ln \wt p_t^{(1)}}\ge \ep_\iy/2}.
\end{align*}
%For $t\in (0,T]$,
%\begin{align*}
 %   \E_{\wt p^{(1)}_t}\ba{\ve{\nb \ln \wt p^{(1)}_t(x) - s_t(x)}^2} & = \int \ve{\nb \ln \wt p^{(1)}_t(x) - s_t(x)}^2 \wt p^{(1)}_t(x) dx\\
  %  & \leq 2 \int\ve{\nb \ln \wt p^{(1)}_t(x) - \nb \ln \wt p^{(0)}_t(x)}^2 \wt p^{(1)}_t(x)dx\\&\ \ \  + 2\int \ve{\nb \ln \wt p^{(0)}_t(x) - s_t(x)}^2 \wt p^{(1)}_t(x) dx
%\end{align*}
The first term is bounded by $\TV(\wt P^{(0)}, \wt P^{(1)})\le \ep$. 
%For the first term, using the coupling $(y_0, y_1)$ for $(\wt p^{(0)}_t, \wt p^{(1)}_t)$ in Lemma~\ref{lem:l2_score_error}, we have
%\begin{align*}
 %   \int \ve{\nb \ln \wt p^{(0)}_t(x) - s_t(x)}^2 \wt p^{(1)}_t(x) dx & = \E\ba{\ve{\nb \ln \wt p^{(0)}_t(y_1) - s_t(y_1)}^2}\\
  %  & = \E_{\{y_0=y_1\}}\ba{\ve{\nb \ln \wt p^{(0)}_t(y_0) - s_t(y_0)}^2} + \E_{\{y_0\neq y_1\}}\ba{\ve{\nb \ln \wt p^{(0)}_t(y_1) - s_t(y_1)}^2}\\
   % & \leq \E\ba{\ve{\nb \ln \wt p^{(0)}_t(y_0) - s_t(y_0)}^2} + \E_{\{y_0\neq y_1\}}\ba{\ve{\nb \ln \wt p^{(0)}_t(y_1) - s_t(y_1)}^2}\\
%    & \leq \ep_t^2 + \delta_1\cdot \fixme{M_t}.
%\end{align*}
% \begin{align*}
%     \TV(\wt p_t^{(0)}, \wt p_t^{(1)}) \leq P(y_0\neq y_1) \leq P(x_0\neq x_1) \leq \delta_1;
% \end{align*}
For the second term, by Chebyshev's Inequality,
\begin{align*}
    P_{\wt p_t^{(0)}} \pa{\ve{s_t - \nb \ln \wt p_t^{(0)}}\ge \ep_1/2} \leq \fc{4}{\ep_\iy^2} \E_{\wt p^{(0)}_t}\ba{\ve{s_t -\nb\ln \wt p^{(0)}_t}^2} \leq \fc{4\ep_t^2}{\ep_\iy^2};
\end{align*}
For the last term, again by Chebyshev's Inequality,
\begin{align*}
    P_{\wt p_t^{(1)}} \pa{\ve{\nb \ln \wt p_t^{(0)} - \nb \ln \wt p_t^{(1)}}\ge \ep_\iy/2} \leq \fc{4}{\ep_1^2}\int\ve{\nb \ln \wt p^{(1)}_t(x) - \nb \ln \wt p^{(0)}_t(x)}^2 \wt p^{(1)}_t(x)dx.
\end{align*}
%and the integral is bounded 
%by Lemma~\ref{lem:l2_score_error},
% \begin{align}
%     \int\ve{\nb \ln \wt p^{(1)}_t(x) - \nb \ln \wt p^{(0)}_t(x)}^2 \wt p^{(1)}_t(x)dx \leq \ba{\fc{4C^2}{\sigma_t^{2p}} + 32}\cdot(\delta_1\cdot R^2 +\delta_2).
% \end{align}
% And again by Chebyshev's Inequality,
% \begin{align*}
%     P_{\wt p_t^{(1)}} \pa{\ve{\nb \ln \wt p_t^{(0)} - \nb \ln \wt p_t^{(1)}}\ge \ep_1/2} \leq \fc{4}{\ep_1^2}\cdot  \ba{\fc{4C^2}{\sigma_t^{2p}} + 32}\cdot(\delta_1\cdot R^2 +\delta_2).
% \end{align*}
We conclude the proof by combining the these three inequalities.
%\hlnote{Use the interpretation of $\nb \ln p_t(y)$ as the MMSE estimator for $x$ given that $y$ is generated from evolving $x$ for time $t$.}
\end{proof}
%%%%%%%%%%%%%%

%\hlnote{Need this to get TV bound.}
Finally, we will need the following to obtain a TV error bound to $\wt p_0$ in Theorem~\ref{t:tv-tail}.
\begin{lem}\label{l:bdd_spt}
Suppose that $\wt p_0\propto e^{-V(x)}$ is a probability density on $\R^d$ with bounded first moment $\E_{\wt p_0}\ve{X}$, and $V$ is $L$-smooth.
Then for $t>0$ such that $\alpha_t\sigma_t\leq \rc{2L}$, we have
\begin{align*}
    \TV(\wt p_t, \wt p_0)\le 2\pa{\alpha_t - 1}\cdot\pa{L\E_{\wt p_0}\ve{X} + d} +  \fc32 dL\alpha_t\sigma_t.
\end{align*}
Here $\alpha_t = 1/m_t$ and $\sigma_t$ are defined in~\eqref{e:OU}. %If we further assume that $\wt p_0$ is supported on $B_R(0)$ and $V$ is $L$-smooth on the interior of $B_R(0)$, then $\TV(\wt p_t, \wt p_0)\le \pa{LR+d}\pa{\alpha_t - 1} + \fc32 dL\alpha_t\sigma_t$. 
In particular, $TV(\wt p_\delta, \wt p_0)\leq \ep_{\TV}$ if $\delta = O(\fc{\ep_{TV}^2}{R^2L^2})$ and $R=\max\bc{\sqrt{d}, \E_{\wt p_0}\ve{X}}$.
\end{lem}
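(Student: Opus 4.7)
The plan is to split $\TV(\wt p_t, \wt p_0)$ into a rescaling effect and a Gaussian-smoothing effect. Since $\wt x_t \stackrel{d}{=} m_t \wt x_0 + \si_t z$ with $z \sim N(0, I_d)$, we have $\wt p_t = (\wt p_0)_{\alpha_t} * \ph_{\si_t^2}$, which by rearrangement also equals $\alpha_t^d (\wt p_0 * \ph_{(\alpha_t\si_t)^2})(\alpha_t \cdot)$. Adding and subtracting $(\wt p_0)_{\alpha_t}(x)$ inside $\wt p_t(x) - \wt p_0(x)$, taking $|\cdot|\,dx$, and changing variables $u = \alpha_t x$ in the convolution piece produces the triangle-style bound
\[
\TV(\wt p_t, \wt p_0) \le \TV\bigl((\wt p_0)_{\alpha_t}, \wt p_0\bigr) + \TV\bigl(\wt p_0 * \ph_{(\alpha_t\si_t)^2}, \wt p_0\bigr),
\]
and I will show that these two terms contribute the $2(\alpha_t - 1)(L\E\ve{X} + d)$ and $\tfrac{3}{2}\,dL\alpha_t\si_t$ summands respectively.

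For the rescaling term, I parametrize $p^{(\beta)}(x) := \beta^d \wt p_0(\beta x)$ for $\beta \in [1, \alpha_t]$, differentiate to get $\partial_\beta p^{(\beta)}(x) = p^{(\beta)}(x)\,[d/\beta - \nabla V(\beta x)\cdot x]$, and combine FTC with Fubini and the substitution $y = \beta x$ in the inner integral to obtain
\[
2\TV\bigl((\wt p_0)_{\alpha_t}, \wt p_0\bigr) \le (\ln \alpha_t)\cdot \E_{\wt p_0}\bigl|d - \nabla V(Y)\cdot Y\bigr| \le (\alpha_t - 1)\cdot \E_{\wt p_0}\bigl|d - \nabla V(Y)\cdot Y\bigr|.
\]
The two Stein-type identities for $\wt p_0 \propto e^{-V}$, both by integration by parts, are $\E[\nabla V(X)\cdot X] = d$ and $\E\ve{\nabla V(X)}^2 = \E[\Delta V(X)] \le Ld$. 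The first exposes the mean-zero structure of $d - \nabla V(X)\cdot X$; the second, combined with $L$-smoothness (so $\ve{\nabla V(X)} \le \ve{\nabla V(0)} + L\ve{X}$ and $\ve{\nabla V(0)} \le \E\ve{\nabla V(X)} + L\E\ve{X} \le \sqrt{Ld} + L\E\ve{X}$), lets me bound $\E\ve{\nabla V(X)}\ve{X}$ by a constant multiple of $L\E\ve{X} + d$.

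For the convolution term I use a BV-type estimate: writing $\wt p_0(x - u) - \wt p_0(x) = -\int_0^1 u \cdot \nabla \wt p_0(x - su)\,ds$ and taking $L^1$-norms in $x$, $\int|\wt p_0(x - u) - \wt p_0(x)|\,dx \le \ve{u}\cdot\int \ve{\nabla \wt p_0}\,dx = \ve{u}\cdot \E_{\wt p_0}\ve{\nabla V}$. Averaging over $u \sim N(0, (\alpha_t\si_t)^2 I_d)$ and invoking $\E\ve{\nabla V} \le \sqrt{Ld}$ yields a bound proportional to $\alpha_t\si_t\cdot d\sqrt{L}$; absorbing the $\sqrt L$ into $L$ (consistent with $\alpha_t\si_t \le 1/(2L)$ being a nontrivial constraint) recovers the $\tfrac{3}{2}\,dL\alpha_t\si_t$ term. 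The final ``in particular'' claim follows by plugging in $\alpha_\delta - 1 = O(\delta)$ and $\si_\delta = O(\sqrt\delta)$ and checking that $\delta(LR + d) + dL\sqrt\delta \le \ep_{\TV}$ for $\delta = O(\ep_{\TV}^2/(R^2L^2))$. I expect the main obstacle to be keeping the dependence on $X$ at first moment only: the naive bound $\E\ve{\nabla V(X)}\ve{X} \le \sqrt{\E\ve{\nabla V}^2\cdot \E\ve{X}^2}$ introduces a second moment, so the proof must instead exploit the mean-zero identity $\E[d - \nabla V(X)\cdot X] = 0$ together with the Fisher bound, rather than Cauchy--Schwarz on $\E\ve{\nabla V}\ve{X}$ directly.
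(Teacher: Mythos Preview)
Your decomposition into a rescaling piece $\TV((\wt p_0)_{\alpha_t},\wt p_0)$ and a smoothing piece $\TV(\wt p_0*\ph_{(\alpha_t\si_t)^2},\wt p_0)$ is exactly the paper's split, and your BV treatment of the smoothing piece (bounding $\int|\wt p_0(\cdot-u)-\wt p_0|$ by $\ve u\,\E_{\wt p_0}\ve{\nabla V}$ and averaging over $u$) is a legitimate alternative to the paper's pointwise estimate $|\wt q_t(x-\si_t y)-\wt q_t(x)|\le (e^{L\alpha_t\si_t\ve y}-1)\,\wt q_t(x)$ followed by a Gaussian moment computation.

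The gap is in the rescaling piece. Your flow argument is correct up to the point where it delivers
\[
2\,\TV((\wt p_0)_{\alpha_t},\wt p_0)\ \le\ (\alpha_t-1)\,\E_{\wt p_0}\bigl|d-\nabla V(X)\cdot X\bigr|,
\]
but the claimed bound $\E\ve{\nabla V(X)}\ve X=O(L\E\ve X+d)$ does not follow from your ingredients. Substituting $\ve{\nabla V(X)}\le\ve{\nabla V(0)}+L\ve X$ gives $\ve{\nabla V(0)}\,\E\ve X + L\,\E\ve X^2$, and the second moment does not go away: neither the mean-zero identity $\E[d-\nabla V(X)\cdot X]=0$ nor the Fisher bound $\E\ve{\nabla V}^2\le Ld$ controls $\E|{\cdot}|$ or removes $\E\ve X^2$. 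You correctly flag this as the obstacle but do not resolve it.

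The paper sidesteps the issue by not differentiating along the flow at all. It bounds the integrand pointwise via
\[
|e^{-V(\alpha_t x)+d\ln\alpha_t}-e^{-V(x)}|\ \le\ (\wt p_0(x)+\wt q_t(x))\bigl(|V(x)-V(\alpha_t x)|+d\ln\alpha_t\bigr),
\]
and then estimates $|V(x)-V(\alpha_t x)|\le L(\alpha_t-1)\ve x$, so that integrating against $\wt p_0+\wt q_t$ produces only the first moment. (Be aware that this last step is a Lipschitz bound on $V$, not a consequence of $L$-smoothness of $\nabla V$; the paper is tacitly using $\ve{\nabla V}\le L$ here.) If you want to keep your flow approach, you would need an independent argument that $\E_{\wt p_0}|\nabla V(X)\cdot X|=O(L\E\ve X+d)$ under only first-moment and smoothness hypotheses, and that is not provided.
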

\begin{proof}
Without loss of generality, we assume that $\wt p_0(x) = e^{-V(x)}$. Note that $\wt p_t(x) = \int \alpha_t^d\wt p_0(\alpha_t y) \ph_{\sigma_t^2}(x-y)\,dy$. Let $\wt q_t(x) := \alpha_t^d\wt p_0(\alpha_t x)$, which is also a probability density on $\R^d$. Then by the triangle inequality,
\begin{align*}
    \TV (\wt p_t, \wt p_0) \leq \TV (\wt p_t, \wt q_t) + \TV (\wt q_t, \wt p_0).
\end{align*}
For the second term,
\begin{align*}
    \ab{\wt q_t(x) - \wt p_0(x)} & = \ab{\alpha_t^d\wt p_0(\alpha_t x) - \wt p_0(x)}\\
    & =  \ab{e^{-V(\alpha_t x) + d\ln \alpha_t} - e^{-V(x)}}\\
    & \leq \max\bc{e^{-V(x)}, e^{-V(\alpha_t x) + d\ln \alpha_t}}\cdot \pa{1- e^{-\ab{V(x) - V(\alpha_t x) + d\ln\alpha_t}}}\\
    & \leq  \pa{\wt p_0(x) + \wt q_t(x)}\cdot\pa{\ab{V(x) - V(\alpha_t x)} + d\ln \alpha_t}\\
    & \leq \pa{\wt p_0(x) + \wt q_t(x)}\cdot\ba{L\ve{x}\pa{\alpha_t -1} + d\ln\alpha_t},
\end{align*}
%And by Chebyshev's inequality, for any $X\sim\wt p_0$
%\begin{align*}
 %   \int_{B_R(0)^c} \ab{\wt q_t(x) - \wt p_0(x)}dx \leq  \int_{B_R(0)^c}\wt q_t(x)dx + \int_{B_R(0)^c}\wt p_0(x)dx \leq \fc{\E_{\wt p_0}{\ve{m_t X}}}{R} + \fc{\E_{\wt p_0}{\ve{ X}}}{R} \leq \fc{2\E_{\wt p_0}\ve{X}}{R}.
%\end{align*}
where in the second inequality, we use the fact that $1 - e^x\leq \ab{x}$ for all $x\leq 0$. Thus
\begin{align*}
    \TV(\wt q_t(x) , \wt p_0(x)) & = \fc12\int |\wt q_t(x) - \wt p_0(x)|\dx\\
    &\leq \int \ba{L\pa{\alpha_t -1}\ve{x} + d\ln\alpha_t} \wt p_0(x)\dx +\int \ba{L\pa{\alpha_t -1}\ve{x} + d\ln\alpha_t} \wt q_t(x)\dx\\
    & \leq L(\alpha_t -1)\pa{\int \ve{x}\wt p_0(x)dx + \int \ve{x}\wt q_t(x)dx} + 2d\ln\alpha_t\\
    & \leq 2L(\alpha_t - 1)\int \ve{x}\wt p_0(x)dx + 2d\ln\alpha_t.
\end{align*}
%In particular, if $\wt p_0$ is supported on $B_R(0)$, then 
%\begin{align*}
    %\TV(\wt q_t(x) , \wt p_0(x)) \leq LR(\alpha_t -1) + d\ln\alpha_t.
%\end{align*}
Now for the first term,
\begin{align*}
    \wt p_t(x) - \wt q_t(x) & = \int \wt q_t(x-y)\ph_{\sigma_t^2}(y)\dy - \wt q_t(x) = \int \pa{\wt q_t(x - \sigma_t y) - \wt q_t(x)}\ph(y)dy,
\end{align*}
where $\ph(y)$ is the density of the $d$-dimensional standard Gaussian distribution. Apply Minkowski's inequality for integrals: %\hlnote{Again have to be careful with cutoff}
\begin{align*}
    \int \ab{\wt p_t(x) - \wt q_t(x)}dx & = \int \ab{\int \pa{\wt q_t(x - \sigma_t y) - \wt q_t(x)}\ph(y)dy}dx\\
    & \leq \int \ba{\int \ab{\wt q_t(x - \sigma_t y) - \wt q_t(x)} dx}\ph(y)\dy \\
    & \leq \int  \ba{\int \pa{e^{L\ve{\alpha_t\sigma_t y}}-1}\wt q_t(x) dx}\ph(y)\dy\\
    & =  \int \pa{e^{L\ve{\alpha_t\sigma_t y}}-1}\ph(y)\dy\\
    & = \pa{2\pi}^{-d/2}\int e^{L\alpha_t\sigma_t\ve{y} - \fc{\ve{y}^2}{2}}\dy - 1\\
    & \leq \pa{2\pi}^{-d/2}\int e^{
    \ba{-\fc12 + \pa{L\alpha_t\sigma_t}^2}\ve{y}^2}dy  + L\alpha_t\sigma_t\int\ve{y}\ph(y)\dy -1\\
    & \leq \ba{\fc{1}{1-2\pa{L\alpha_t\sigma_t}^2}}^{d/2} + \sqrt dL\alpha_t\sigma_t - 1\\
    & \leq e^{2d\pa{L\alpha_t\sigma_t}^2} - 1 + \sqrt dL\alpha_t\sigma_t\\
    & \leq 4d\pa{L\alpha_t\sigma_t}^2 + \sqrt dL\alpha_t\sigma_t,
\end{align*}
where in the third inequality, we use the elementary inequality $e^x\leq x + e^{x^2}$, which is valid for all $x\in\R$, and in the fifth inequality, we use $\fc{1}{1-2x} \leq e^{4x}$, which holds for $x\in[0,1/3]$. Hence if $L\alpha_t\sigma_t \leq 1/2$, we have
\begin{align*}
    \TV(\wt p_t, \wt q_t)\leq \fc12 \int \ab{\wt p_t(x) - \wt q_t(x)}dx \leq \fc32 dL\alpha_t\sigma_t.
\end{align*}
%On the other hand, if $\wt p_0$ is supported in $B_{R}(0)$, then
%\begin{align*}
 %   \ab{\wt q_t(x-\sigma_t y) - \wt q_t(x)} & \leq  \begin{cases}
  %  \wt q_t(x)\pa{e^{L\alpha_t\sigma_t\ve{y}} - 1}, & x, x-\sigma_t y\in B_{m_t R}(0)\\
  %  0, & x, x-\sigma_t y\nin B_{m_t R}(0)\\
  %  \wt q_t(x-\sigma_t y) + \wt q_t(x), & \text{otherwise}.
  %  \end{cases}
%\end{align*}
%Denote the 'otherwise' case by $A\subset \R^d\times \R^d$, then by symmetry,
%\begin{align*}
 %   \int_A \ba{\wt q_t(x-\sigma_t y) + \wt q_t(x)} \phi(y)dxdy & = 2\int_{\ve{x}< m_t R}\ba{\int_{\ve{x-\sigma_t y}>m_t R} \ba{\wt q_t(x-\sigma_t y) + \wt q_t(x)} \phi(y) dy}dx\\
 %   & = 2\int_{\ve{x}< m_t R}\ba{\int_{\ve{x-\sigma_t y}>m_t R} \wt q_t(x) \phi(y) dy}dx\\
 %   & = 2\int_{\ve{x}< m_t R}\ba{\int_{\ve{x-\sigma_t y}>m_t R}\phi(y) dy}\wt q_t(x) dx
%\end{align*}
Now we conclude the proof by combining the bounds for $\TV(\wt p_t, \wt q_t)$ and $\TV(\wt p_0, \wt q_t)$:
\begin{align*}
    \TV (\wt p_t, \wt p_0) & \leq \TV (\wt p_t, \wt q_t) + \TV (\wt q_t, \wt p_0) \\&\leq 2L(\alpha_t - 1)\int \ve{x}\wt p_0(x)dx + 2d\ln\alpha_t + \fc32 dL\alpha_t\sigma_t\\&\leq 2\pa{\alpha_t - 1}\cdot\pa{L\E_{\wt p_0}\ve{X} + d} +  \fc32 dL\alpha_t\sigma_t,
\end{align*}
where we use the fact that $\ln x \leq x -1$ for all $x\geq 1$. Recall that $\alpha_t = 1/m_t = e^{t/2}$ and $\sigma_t^2 = 1-e^{-t}$ when $g\equiv 1$. %Now for $\wt p_0$ supported in $B_R(0)$, let
It suffices for
\begin{align*}
    \max \bc{2\pa{L\E_{\wt p_0}\ve{X} + d}\pa{\alpha_\delta -1}, \fc32 dL\alpha_\delta\sigma_\delta}\leq \fc{\ep_{\TV}}{2},
\end{align*}
which is implied by
\begin{align*}
    \delta &\precsim  {\min\bc{\fc{\ep_{\TV}}{L\E_{\wt p_0}\ve{X}+d}, \fc{\ep_{\TV}^2}{d^2L^2}}} \asymp {\fc{\ep_{\TV}^2}{R^2L^2}},
\end{align*}
for appropriate constants,
as $R\geq \max\bc{\sqrt{d}, \E_{\wt p_0}\ve{X}}$.
\end{proof}

\subsection{Perturbation under TV error}
%%%%%%%%%%%%%%
%We first give a general lemma on denoising error from a mismatched prior.
Although we will not need it in our proof, we note that we can derive a similar perturbation result under TV error, which might be of independent interest. 
\begin{lem}\label{lem:mismatched_prior}
Let $K(x,dy)$ be a probability kernel on $\R^d$, let $P_{0,x},P_{1,x}$ be measures on $\R^d$.
Let $P_i$ denote the joint distribution of $x_i\sim P_{i,x}$ and $y_i\sim K(x_i,\cdot)$, and let $P_{i,y}$ denote the marginal distribution of $y$.
Suppose there is a coupling $P_{0,1}$ of $(x_0,y_0)\sim P_0$ and $(x_1,y_1)\sim P_1$ such that 
\begin{itemize}
    \item $x_0=x_1$ with probability $1-\ep$,
    \item $x_0=x_1$ implies $y_0=y_1$, and
    \item $\E[\ve{y_0-y_1}^2]\le \ew^2$.
\end{itemize}
Define the tail error by 
%$m_i(\ep)$, where $m_i$ is the concave hull\footnote{This extra step is unnecessary if $P_{i,x}$ are non-atomic measures. If $P_{i,x}$ have atoms, this is necessary as the set constructed in the coupling may contain part of an atom of $P$. In general, we can find a probability space $P_{i,x}^{\text{nonatomic}}$ that is non-atomic and a random variable $X_i$ on  this space with distribution $P_{i,x}$. It's easy to check that 
%$m_i(\ep) = \sup_{A:P_{i,x}^{\text{nonatomic}}(A)\le \ep} \int_A \ve{X}^2 dP_{i,x}(\om)$.
%} of 
% \begin{align*}
%     \ol m_i(\ep):&=\sup_{A:P_{i,x}(A)\le \ep} \int_A \ve{x}^2 P_{i,x}(dx). 
% \end{align*}
\begin{align*}
    m_i(\ep):&=\sup_{0\le f\le 1, \int_{\R^d} f\ph \dx \le \ep}
    %\ph(A)\le \ep} 
    \int_{\R^d} f(x) \ve{x}^2 P_i(\dx). 
%    m_i(\ep):&=\sup_{X\sim P_{i,x} , \mu(A)\le \ep} \E_\mu[\ve{X}^2\one_A],
\end{align*}
%where the sup is over random variables $X$ and $X$ is defined on a probability space with measure $\mu$.\footnote{Note we define it this way because in the case that $P_{i,x}$ has atoms, $A$ may not be $X$-measurable.} 
Let $r_i(y) = \int_{\R^d} x_i P_i(dx_i|y)$, and suppose that $r_1(y) = \int_{\R^d} x_1 P_1(dx_1|y)$ is $L_1$-Lipschitz.  
Then 
\begin{align*}
    &\int_{\R^d} P_{0,y}(dy_0)\ve{\int_{\R^d} x_0 P_0(dx_0|y_0) - \int_{\R^d} x_1 P_1(dx_1|y_0)}^2\\
    & \qquad \le 4(m_0(2\ep)+m_0(\ep)+m_1(2\ep)+m_1(\ep)) + 2L_1^2\ew^2\\
    & \qquad \le 
    4(m_0(2\ep)+m_1(2\ep)) + 
    4(1+L_1^2)(m_0(\ep)+m_1(\ep)).
\end{align*}
%We also have the upper bound $(8+2L_1^2)(m_0(2\ep)+m_1(2\ep))$.
\end{lem}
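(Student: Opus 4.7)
The plan is to adapt the proof of Lemma~\ref{lem:mismatched_prior-chi2} by replacing its $\chi^2$-based change of measure with a direct maximal coupling of the two posteriors, combined with the Lipschitz hypothesis on $r_1$. I would begin with the triangle inequality $\ve{r_0(y_0) - r_1(y_0)}^2 \le 2\ve{r_0(y_0) - r_1(y_1)}^2 + 2\ve{r_1(y_0) - r_1(y_1)}^2$ and integrate against the coupling $P_{0,1}$, whose $y_0$-marginal is $P_{0,y}$. Since $r_1$ is $L_1$-Lipschitz and $\E\ve{y_0 - y_1}^2 \le \ew^2$ by hypothesis (iii), the second term contributes $2L_1^2 \ew^2$, and the task reduces to bounding $\E_{P_{0,1}}\ve{r_0(y_0) - r_1(y_1)}^2$.

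I would split this remaining quantity along $\{y_0 \ne y_1\}$ versus $\{y_0 = y_1\}$. The assumption that $x_0 = x_1$ forces $y_0 = y_1$ implies $\{y_0 \ne y_1\} \subseteq E^c$, so this event has probability at most $\ep$. On $\{y_0 \ne y_1\}$, bound $\ve{r_0(y_0) - r_1(y_1)}^2 \le 2(\ve{r_0(y_0)}^2 + \ve{r_1(y_1)}^2)$ and apply Jensen's inequality $\ve{r_i(y_i)}^2 \le \E[\ve{x_i}^2 \mid y_i]$; the tower property then rewrites the resulting integral as $\int \tilde f_i(x) \ve{x}^2 P_{i,x}(dx)$ for some $\tilde f_i \in [0,1]$ satisfying $\int \tilde f_i\, dP_{i,x} \le \ep$, which is bounded by $m_i(\ep)$. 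This yields the $4(m_0(\ep) + m_1(\ep))$ contribution.

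On $\{y_0 = y_1 =: y\}$ the integrand becomes $\ve{r_0(y) - r_1(y)}^2$ at a single $y$. Here I would invoke the maximal coupling of the two posteriors $P_{0,x|y}$ and $P_{1,x|y}$: there exist jointly realized $\hat x_0 \sim P_{0,x|y}$ and $\hat x_1 \sim P_{1,x|y}$ with $\hat x_0 = \hat x_1$ on a set of conditional probability $1 - \TV(P_{0,x|y}, P_{1,x|y})$. Jensen gives $\ve{r_0(y) - r_1(y)}^2 \le \E[\ve{\hat x_0 - \hat x_1}^2 \mid y] \le 2\sum_{i}\E[\ve{\hat x_i}^2 \one_{\hat x_0 \ne \hat x_1} \mid y]$. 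Integrating against the sub-probability measure $Q(dy) := P_{0,1}(y_0 \in dy,\, y_0 = y_1)$, which is dominated by both $P_{0,y}$ and $P_{1,y}$, and using the mixture decomposition $P_{i,x|y} = \alpha_i(y)\tilde P(\cdot\mid y) + (1-\alpha_i(y)) P_i^\flat(\cdot\mid y)$ extracted from $P_{0,1}|_E$ (where $\tilde P$ is the common component shared by both priors), one estimates $\int\TV(P_{0,x|y}, P_{1,x|y})\, Q(dy) \le \int [(1-\alpha_0(y)) + (1-\alpha_1(y))]\,Q(dy) \le 2\ep$. This produces the $4(m_0(2\ep) + m_1(2\ep))$ term after applying the same tower/reduction argument used above to re-express the integral in the form $\int \tilde f_i(x) \ve{x}^2 P_{i,x}(dx)$ with $\int \tilde f_i\,dP_{i,x}\le 2\ep$.

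The main obstacle is the last estimate: establishing $\int \TV(P_{0,x|y}, P_{1,x|y})\,Q(dy) \le 2\ep$ requires identifying the shared mixture component $\tilde P$ of the two posteriors from the structure of the coupling, and then verifying that each excess weight $(1-\alpha_i(y))$ integrates to exactly $\ep$ against $P_{i,y}$ (and hence at most $\ep$ against $Q \le P_{i,y}$). Once this accounting is in place, the rest of the proof --- the triangle inequalities, Jensen, and the tower-property reductions to $m_i(\cdot)$ --- is routine, and the structure closely parallels the $\chi^2$ version in Lemma~\ref{lem:mismatched_prior-chi2}.
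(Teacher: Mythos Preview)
Your proposal is correct and follows essentially the same approach as the paper: the same triangle inequality to isolate the Lipschitz term, the same split along $\{y_0=y_1\}$ versus $\{y_0\ne y_1\}$, and the same posterior-coupling idea on $\{y_0=y_1\}$ with the measure $Q$. The only cosmetic difference is that the paper constructs the posterior coupling explicitly via the Radon--Nikodym derivatives $dQ/dP_{i,y}$ (obtaining $\hat P(\hat x_0\ne\hat x_1\mid y)\le\max\{1-dQ/dP_{0,y},1-dQ/dP_{1,y}\}$ and integrating each piece to $\le\ep$ and $\le 2\ep$ respectively), whereas you phrase the same bookkeeping as a mixture decomposition of the posteriors; these are equivalent.
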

% Note the tricky part of the proof is to deal with $P_1(dx_1|y_0)$, which can be thought of as inferring $x$ assuming the incorrect prior $P_{1,x}$, rather than the actual prior $P_{0,x}$.
\begin{proof}
For notational clarity, we will denote draws from the conditional distribution as $\wh x_0$ and $\wh x_1$, for example $P_0(d\wh x_0|y_0)$.
%%  Let $P_{0,1}$ be a coupling of $(x_0,y_0)\sim P_0$ and $(x_1,y_1)\sim P_1$ that has the specified behavior as a coupling of $P_{0,x}$ and $P_{1,x}$, and such that if $x_0=x_1$, then $y_0=y_1$. \yt{Does such a coupling always exist? Any reference? } \hlnote{It's because the $y_i$'s are generated from the $x_i$'s with the same kernel.}
We have
\begin{align*}
    %  &\int_{\R^d} P_{0,y}(dy)\ve{\int_{\R^d} x_0 P_0(dx_0|y_0) - \int_{\R^d} x_1 P_1(dx_1|y_0)}^2 \\
    %  &\le 
    %  \int_{\R^d} P_{0,1,y}(dy_0,dy_1)\ve{\int_{\R^d} x_0 P_0(dx_0|y_0) - \int_{\R^d} x_1 P_1(dx_1|y_1)}^2\\
    %  &\quad 
    % + \int_{\R^d} P_{0,1,y}(dy_0,dy_1)\ve{\int_{\R^d} x_1 P_1(dx_1|y_1) - \int_{\R^d} x_1 P_1(dx_1|y_0)}^2. 
    \int_{\R^d} P_{0,y}(dy_0)\ve{r_0(y_0) - r_1(y_0)}^2 &\le 
     2\ub{\int_{\R^d\times \R^d} P_{0,1,y}(dy_0,dy_1)\ve{r_0(y_0)-r_1(y_1)}^2}{(I)}\\
    &\quad + 2\ub{\int_{\R^d\times \R^d} P_{0,1,y}(dy_0,dy_1)\ve{r_1(y_1)-r_1(y_0)}^2}{(II)}. 
\end{align*}
%Let $B_{y_0,y_1} = \set{(y_0,y_1)}{y_0\ne y_1}$. 
For the first term (I), we split it as
\begin{align*}
    (I) &\le \ub{\int_{%B_{y_0,y_1}^c
    \{y_0=y_1\}} P_{0,1,y}(dy_0,dy_1) \ve{r_0(y_0)-r_1(y_0)}^2}{(i)}
    + \ub{\int_{%B_{y_0,y_1}
    \{y_0\ne y_1\}} P_{0,1,y}(dy_0,dy_1) \ve{r_0(y_0)-r_1(y_1)}^2}{(ii)}.
\end{align*}
Define the measure $Q$ on $\R^d$ by 
\begin{align*}
    Q(A):&= P_{0,1}(y_0\in A\text{ and }y_0=y_1).
\end{align*}
% Note that 
% \begin{align*}
%     Q(A) &\le \min\{P_{0,y}(A),P_{1,y}(A)\},
% \end{align*}
% so $Q$ is absolutely continuous with respect to $P_{0,y}$ and $P_{1,y}$, and by assumption on the coupling,
% \begin{align}
% \label{e:QRd}
%     Q(\R^d) &\ge 1-\ep. 
% \end{align}
As in Lemma~\ref{lem:l2_score_error-chi}, %%
under $P_{0,1}$, when $y_0=y_1$, we can couple $P_0(d\wh x_0|y_0)$ and $P_1(d\wh x_1|y_0)$ so that $x_0=x_1$ with probability $\min\bc{\dd Q{P_{0,y}}, \dd Q{P_{1,y}}}$. %\hlnote{Explain this?} 
Let $\wh P(d\wh x_0,d\wh x_1|y_0)$ denote this coupled distribution. Then 
\begin{align*}
    (i) &\le \int_{\{y_0=y_1\}} P_{0,1,y}(dy_0,dy_1) \ve{\int_{\{\wh x_0\ne \wh x_1\}}(\wh x_0 - \wh x_1) \wh P(d\wh x_0, d\wh x_1|y_0)}^2\\
    &\le
    2\int_{\R^d} P_{0,1,y}(dy_0,dy_1) 
    \pa{
        \int_{\{\wh x_0\ne \wh x_1\}}\ve{\wh x_0}^2 \wh P(d\wh x_0,d\wh x_1|y_0)
        +
        \int_{\{\wh x_0\ne \wh x_1\}}\ve{\wh x_1}^2 \wh P(d\wh x_0,d\wh x_1|y_1)
    }\\
    &\le 2(m_0(2\ep)+m_1(2\ep))
\end{align*}
as in Lemma~\ref{lem:l2_score_error-chi}.
% We bound this by first bounding
% \begin{align}
%     \int_{\R^d} P_{0,1,y}(dy_1,dy_2) \wh P(\wh x_0\ne \wh x_1) &\le \int_{\R^d} P_{0,y}(dy) \max\bc{1-\dd Q{P_{0,y}}, 1-\dd Q{P_{1,y}}}\le 2\ep,
%     \label{e:P-fail-couple-x}
% \end{align}
% which follows from the two inequalities (using~\eqref{e:QRd})
% \begin{align*}
%     \int_{\R^d} P_{0,y}(dy) \pa{1-\dd Q{P_{0,y}}} &= 1-Q(\R^d) \le \ep\\
%     \int_{\R^d} P_{0,y}(dy) \pa{1-\dd Q{P_{1,y}}} &\le 
%     \int_{\R^d} P_{1,y}(dy) \pa{1-\dd{Q}{P_{1,y}}}
%     + \TV(P_{0,y},P_{1,y}) \le (1-Q(\R^d))+\ep \le 2\ep.
% \end{align*}
% From~\eqref{e:P-fail-couple-x}, and the fact that the distribution of $(x_i,y_i)$ is the same as $(\wh x_i,y_i)$ by Nishimori's identity, we obtain
% \begin{align*}
%     (i) &\le 2(m_0(2\ep) + m_1(2\ep)).
% \end{align*}
Now
\begin{align*}
    (ii) &\le 2\int_{\{y_0\ne y_1\}} P_{0,1,y} (dy_0,dy_1) (\ve{r_0(y_0)}^2 + \ve{r_1(y_1)}^2)
    \le 2(m_0(\ep)+m_1(\ep)).
\end{align*}

% Considering the distribution $P(\cdot |y_0,y_1)$, let $B_{y_0,y_1}$ be the set where $x_0\ne x_1$. By assumption, $P_{i,x}(B_{y_0,y_1}) \leq \ep$, $i=0,1$. For the first term, note that
% \begin{align*}
%     r_0(y_0)-r_1(y_1) & = \int_{B_{y_0,y_1}} \ba{x_0P_0(dx_0|y_0) - x_1P_1(dx_1|y_1)} + \int_{B_{y_0,y_1}^c}  \ba{x_0P_0(dx_0|y_0) - x_1P_1(dx_1|y_1)}\\
%     & = \int_{B_{y_0,y_1}} \ba{x_0P_0(dx_0|y_0) - x_1P_1(dx_1|y_1)},
% \end{align*}
% where the second equality is based on the observation that $x_0=x_1$ on $B_{y_0,y_1}^c$. Hence we apply the coupling to get
% \begin{align*}
%     %&\int_{\R^d} P_{0,1,y}(dy_0,dy_1)\ve{\int_{\R^d} x_0 P_0(dx_0|y_0) - \int_{\R^d} x_1 P_1(dx_1|y_1)}^2\\
%     \int_{\R^d} P_{0,1,y}(dy_0,dy_1)\ve{r_0(y_0)-r_1(y_1)}^2
%     & = 
%     \int_{\R^d} P_{0,1,y}(dy_0,dy_1)
%     \ve{\int_{B_{y_0,y_1}} [x_0P_0(dx_0|y_0) - x_1P_1(dx_1|y_1)]
%     }^2\\
%     & \leq 2\int_{\R^d} P_{0,1,y}(dy_0,dy_1)
%     \ve{\int_{B_{y_0,y_1}} x_0P_0(dx_0|y_0)
%     }^2 \\
%     &\ \ \ + 2\int_{\R^d} P_{0,1,y}(dy_0,dy_1)
%     \ve{\int_{B_{y_0,y_1}} x_1P_1(dx_1|y_1)
%     }^2\\
%     &\leq 2m_0(\ep) + 2m_1(\ep).
% \end{align*}
% \hlnote{split into when $y_0=y_1$ and when not. need the wasserstein condition on $y_1$, this is a bit complicated.}
% \yt{What do you mean by wasserstein condition?}
Finally, for the second term (II), we use the fact that $r_1$ is $L_1$ Lipschitz and the coupling to conclude
\begin{align*}
    % \int_{\R^d} P_{0,1,y}(dy_0,dy_1)\ve{\int_{\R^d} x_1 P_1(dx_1|y_1) - \int_{\R^d} x_1 P_1(dx_1|y_0)}^2
    %\int_{\R^d} P_{0,1,y}(dy_0,dy_1)\ve{r_1(y_1)-r_1(y_0)}^2
    (II)
    &\le 
    \int_{\R^d} P_{0,1,y}(dy_0,dy_1)L_1^2 \ve{y_0-y_1}^2 \le L_1^2\ew^2.
\end{align*}
% \yt{Does the last inequality hold for a general probability kernel $K$? It seems that we need some contraction property.} \hlnote{We assume Lipschitzness of $r_1$ (which is Lipschitzness of the score function in our application).}
% \yt{But then how do we bound $\ve{y_0 -y_1}$ in terms of $\ew$?}
% \hlnote{Assumption should be on $y$'s, not $x$'s. (It's the same in our application because we can couple the Gaussian noise.)} \yt{I see.}
We conclude the proof by combining the inequalities for (i), (ii), and (II). %two inequalities above.

For the second upper bound, we note that 
\begin{equation*}
\E[\ve{y_0-y_1}^2]\le 2(\E[\ve{y_0}^2] + \E[\ve{y_1}^2]) \le 2(m_0(\ep)+m_1(\ep)). \qedhere
\end{equation*}
\end{proof}

\subsection{Gaussian tail calculation}
We use the following Gaussian tail calculation in the proof of Lemma~\ref{lem:l2_score_error-chi}.
\begin{lem}
\label{l:gtail}
Let $\mu$ be the standard Gaussian measure on $N(0,I_d)$. Then
\begin{align*}
    \sup_{\mu(A)\le \ep} \int_A\ve{x}^2\mu(dx) &\le \ep\pa{2d+3\ln\prc\ep+3}
    =O\pa{\ep\pa{d+\ln\prc\ep}}
    \\
    \sup_{\mu(A)\le \ep} \int_A\ve{x}^4\mu(dx) &\le \ep\pa{2d+3\ln\prc\ep}^2  + 3\ep\pa{2d+3\ln\prc\ep} + 9\ep=
    O\pa{\ep\pa{d^2+\ln\prc\ep^2}}.
\end{align*}
\end{lem}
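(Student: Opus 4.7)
The plan is to exploit the fact that under the standard Gaussian measure $\mu$ on $\R^d$, the random variable $Y := \ve{x}^2$ is chi-squared with $d$ degrees of freedom, and then combine a bathtub/rearrangement argument with a tail bound on $Y$.

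First, by the bathtub principle, for any nonnegative measurable $g$ the supremum of $\int_A g\, d\mu$ over $A$ with $\mu(A) \le \ep$ is attained on a super-level set of $g$. Since $g(x) = \ve{x}^{2k}$ is a monotone function of $Y$ for $k=1,2$, the optimal set is of the form $A^* = \{Y > t^*\}$, where $t^*$ is chosen so that $P(Y > t^*) = \ep$ (using continuity of the $\chi^2_d$ distribution). Thus it suffices to bound $\E[Y^k \one_{Y > t^*}]$ for $k = 1, 2$.

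To control $t^*$, I will invoke the Laurent--Massart inequality $P(\chi^2_d > d + 2\sqrt{dx} + 2x) \le e^{-x}$ for all $x \ge 0$. Using $2\sqrt{dx} \le d + x$ (AM-GM), this sharpens to the cleaner bound $P(Y > 2d + 3x) \le e^{-x}$. Setting $x = \ln(1/\ep)$ gives $P(Y > \tilde t) \le \ep$ with $\tilde t := 2d + 3\ln(1/\ep)$, and hence $t^* \le \tilde t$. Re-applying the same inequality with $x = \ln(1/\ep) + u/3$ yields a clean conditional exponential tail $P(Y > \tilde t + u) \le \ep\, e^{-u/3}$ for all $u \ge 0$.

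For the first moment, I will use the splitting $Y = \min(Y, \tilde t) + (Y - \tilde t)_+$ together with the layer-cake identity to get
\[
    \E[(Y - \tilde t)_+] = \int_{\tilde t}^\infty P(Y > y)\, dy \le \ep \int_0^\infty e^{-u/3}\, du = 3\ep,
\]
which gives
\[
    \int_A \ve{x}^2\, \mu(dx) \le \tilde t \cdot \mu(A) + \E[(Y - \tilde t)_+] \le \ep(2d + 3\ln(1/\ep) + 3).
\]
For the fourth moment, I apply the analogous decomposition $Y^2 = \min(Y, \tilde t)^2 + (Y^2 - \tilde t^2)_+$ and integration by parts to obtain $\E[(Y^2 - \tilde t^2)_+] = 2\int_{\tilde t}^\infty y\, P(Y > y)\, dy$, then plug in the conditional tail bound above. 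This yields a bound of the form $\ep(\tilde t^2 + O(\tilde t) + O(1))$, matching the stated $O(\ep(d^2 + \ln^2(1/\ep)))$ asymptotics.

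The only place where care is required is deriving the clean conditional tail $P(Y > \tilde t + u) \le \ep\, e^{-u/3}$ from Laurent--Massart via AM-GM; once that is in hand, everything reduces to elementary integration of exponentials, so I do not anticipate a substantive obstacle.
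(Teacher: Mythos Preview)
Your proposal is correct and follows essentially the same approach as the paper: both use the Laurent--Massart bound combined with AM--GM to obtain $P(\|X\|^2 > 2d + 3x) \le e^{-x}$, then integrate the resulting exponential tail. The only difference is cosmetic---the paper phrases the reduction as stochastic domination of $\|X\|^2$ by a shifted exponential random variable and integrates $\int y^k\,dF(y)$ directly by parts, whereas you split $Y = \min(Y,\tilde t) + (Y-\tilde t)_+$ and apply layer-cake; these are equivalent computations (and your explicit bathtub step, while correct, is not actually needed once you have the splitting, since the bound $\int_A Y^k\,d\mu \le \tilde t^k\mu(A) + \E[(Y^k-\tilde t^k)_+]$ holds for every $A$).
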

\begin{proof}
By the $\chi^2$ tail bound in~\cite{laurent2000adaptive}, for $t\ge 0$,
\begin{align}
    \mu(\ve{X}^2 \ge 2d + 3t)
    &\le \Pj(\ve{X}^2 \ge d+2\sqrt{dt}+2t) \le e^{-t},
\end{align}
so $\ve{X}^2$ is stochastically dominated by a random variable with cdf $F(y) = 1-e^{-\fc{y-2d}3}$. Then letting $P_Y$ be the measure corresponding to $F$, 
\begin{align*}
    \sup_{\mu(A)\le \ep} \int_A\ve{x}^2\mu(dx) &\le 
    \sup_{P_Y(A)\le \ep} 
    \int_A y P_Y(dy)=
    \int_{2d+3\ln\prc\ep}^{\iy} y dF(y)\\
    &=
    \ep\pa{2d+3\ln\prc\ep} + 
    \int_{2d+3\ln\prc\ep}^{\iy} e^{-\fc{y-2d}3}dy 
    = \ep\pa{2d+3\ln\prc\ep} + 3\ep
\end{align*}
and
\begin{align*}
    \sup_{\mu(A)\le \ep} \int_A\ve{x}^4\mu(dx) &\le 
    \sup_{P_Y(A)\le \ep} 
    \int_A y^2 P_Y(dy)=
    \int_{2d+3\ln\prc\ep}^{\iy} y^2 dF(y)\\
    &= 
    \ep\pa{2d+3\ln\prc\ep}^2 + \int_{2d+3\ln\prc\ep}^{\iy} 2ye^{-\fc{y-2d}3}dy\\
    &= 
    \ep\pa{2d+3\ln\prc\ep}^2 
    -\left[3ye^{-\fc{y-2d}{3}}\right]\Big|^\iy_{2d+3\ln\prc\ep}
    + \int_{2d+3\ln\prc\ep}^\iy 3 e^{-\fc{y-2d}3}\,dy \\
    &= 
    \ep\pa{2d+3\ln\prc\ep}^2  + 3\ep\pa{2d+3\ln\prc\ep} + 9\ep. \qedhere
\end{align*}
\end{proof}

\section{Guarantees under $L^2$-accurate score estimate}
\label{s:l2}

We will state our results under a more general tail bound assumption.
\begin{asm}[Tail bound]
\label{a:tail}
$R:[0,1]\to [0,\iy)$ is a function such that $\pdata(B_{R(\ep)}(0))\ge 1-\ep$.
\end{asm}
Our result will require $R(\ep)$ to grow at most as a sufficiently small power of $\ep^{-1}$ as $\ep\to 0$; in particular, this holds for subexponential distributions.
By taking $R$ to be a constant function, this contains the assumption of bounded support (Assumption~\ref{a:bd}) as a special case.

\subsection{TV error guarantees}

We follow the framework of~\cite{lee2022convergence} to convert guarantees under $L^\iy$-accurate score estimate, to guarantees under $L^2$-accurate score estimate. 

\begin{thm}[{\cite[Theorem 4.1]{lee2022convergence}}]\label{t:framework}
Let $(\Omega, \cal F, \Pj)$ be a probability space and $\{\cal F_n\}$ be a filtration of the sigma field $\cal F$. Suppose $X_n\sim p_n$, $Z_n\sim q_n$, and $\ol Z_n\sim \ol q_n$ are $\cal F_n$-adapted random processes taking values in $\Om$, and $B_n\subeq \Om$ are sets such that the following hold for every $n\in \N_0$.
\begin{enumerate}
    \item If $Z_k \in B_k^c$ for all $0\le k\le n-1$, then $Z_n=\ol Z_n$.
    \label{i:couple}
    \item $\chi^2(\ol q_n||p_n)\le D_n^2$.
    \item 
    $\Pj(X_n\in B_n)\le \de_n$.
\end{enumerate}
Then the following hold.
\begin{align}
\TV(q_n, \ol q_n)&\le \sumz k{n-1} (D_k^2+1)^{1/2}\de_k^{1/2}&
\TV(p_n, q_n) &\le D_n+ \sumz k{n-1} (D_k^2+1)^{1/2}\de_k^{1/2} 
\end{align}
% \begin{enumerate}
%     \item $\TV(p_n, \ol q_n)\le \sumz k{n-1} (D_k^2+1)^{1/2}\de_k$
%     \item  
% \end{enumerate}
\end{thm}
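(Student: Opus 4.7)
The strategy is a coupling argument combined with a Cauchy--Schwarz step that converts the $p_k$-tail bound on the bad set into a $\ol q_k$-tail bound, so that the hypothesized $\chi^2(\ol q_k\|p_k)\le D_k^2$ can be invoked. I will first bound $\TV(q_n,\ol q_n)$; the second bound then follows from the triangle inequality together with the standard $\chi^2$-to-TV conversion.

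To bound $\TV(q_n,\ol q_n)$, introduce the first hitting time $\tau=\min\{0\le k\le n-1:Z_k\in B_k\}$, with $\tau=\infty$ if no such $k$ exists. Assumption~1 says that on the event $\{\tau\ge k\}$ one has $Z_j\in B_j^c$ for every $j\le k-1$, whence $Z_k=\ol Z_k$. This is the crucial observation: on $\{\tau=k\}$ the two processes agree at time $k$, so $\{\tau=k\}\subseteq\{\ol Z_k\in B_k\}$ and therefore
\begin{align*}
\Pj(\tau=k)\le\Pj(\ol Z_k\in B_k)=\ol q_k(B_k).
\end{align*}
Next, $\ol q_k(B_k)$ is controlled via assumptions 2 and 3 by Cauchy--Schwarz:
\begin{align*}
\ol q_k(B_k)=\int \mathbf{1}_{B_k}\,\frac{\ol q_k}{p_k}\,dp_k\le\sqrt{p_k(B_k)}\,\sqrt{\E_{p_k}[(\ol q_k/p_k)^2]}\le\sqrt{\delta_k(D_k^2+1)}.
\end{align*}
Since $\{Z_n\ne\ol Z_n\}\subseteq\{\tau<n\}$ by assumption~1, the standard coupling inequality gives
\begin{align*}
\TV(q_n,\ol q_n)\le\Pj(Z_n\ne\ol Z_n)\le\sum_{k=0}^{n-1}\Pj(\tau=k)\le\sum_{k=0}^{n-1}(D_k^2+1)^{1/2}\delta_k^{1/2}.
\end{align*}

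For the second conclusion, the triangle inequality gives $\TV(p_n,q_n)\le\TV(p_n,\ol q_n)+\TV(\ol q_n,q_n)$. The first summand is handled by the standard estimate $\TV(P,Q)\le\tfrac12\sqrt{\chi^2(P\|Q)}\le\tfrac12 D_n\le D_n$, and the second by what was just proved. I do not anticipate any real obstacle: the only subtle point is the identification $\{\tau=k\}\subseteq\{\ol Z_k\in B_k\}$, which is exactly what lets us invoke the $\chi^2$ bound on $\ol q_k$ rather than the unavailable one on $q_k$. Everything else is a union bound plus elementary convex-duality/Cauchy--Schwarz manipulations.
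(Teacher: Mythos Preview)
The paper does not actually prove this theorem; it is quoted verbatim from \cite{lee2022convergence} and used as a black box. Your proposal is correct and is precisely the argument one would expect (and is essentially the proof in the cited reference): the key insight is that on $\{\tau=k\}$ the coupling forces $Z_k=\ol Z_k$, so the bad-set probability can be measured under $\ol q_k$ rather than $q_k$, at which point Cauchy--Schwarz against $p_k$ converts the $\chi^2$ bound and the $p_k$-tail bound into the claimed estimate.
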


\begin{thm}[DDPM with $L^2$-accurate score estimate]
\label{t:ddpm-l2}
Let $0<\ep_\chi,\etv,\de<\rc 2$. 
Suppose that Assumption~\ref{a:tail} for a sufficiently small value of $c$ that $R_0$ is such that $R\pf{c\etv^3\de^6\echi^{12}}{R_0^{19}d^5}\le R_0$, and $R_0^2\ge d$. 
Suppose one of the following cases holds.
\begin{enumerate}
    \item Let $\pdata, s(\cdot, t)$ be such that Assumption~\ref{a:score} holds, %and~\ref{a:bd} hold, 
    with $R_0^2\ge d$. Suppose that
\[
\ep_\si  = O\pa{\fc{\etv \de^{5/2}\echi^{11/2}}{B^{9/4}}%\wedge \fc{\etv \echi^3}{T^{5/2}B}
},
\]
where $B=R_0^4d\ln \pf{T}{\de}\ln\pf{R_0d}{\de\etv\echi}$, 
and we 
run~\eqref{e:ei1} starting from $\ppr$ for time 
$T=\ln \pf{16R_0^2}{\echi^2}$, %\Te\pa{\ln(\CLS d) \vee \CLS \ln\prc{\etv}}$
%\pa{\CLS \ln \pf{2K_\chi}{\ep_\chi^2}}$
$N=
 O\pf{B\pa{T+\rc{\de^2}}}{\echi^2}$ steps with step sizes satisfying $h_k=O\pf{\echi^2}{B\max\{T-t_k, (T-t_k)^{-3}\}}$.
%and step size %$h=\Te\pf{\ep_\chi^2}{\CLS(\CLS+d)(\smf\vee \sms)^2}$ 
\item Let $\pdata, s(\cdot, t)$ be such that Assumptions~\ref{a:score} %,~\ref{a:bd}, 
and~\ref{a:smoothness} hold, with $C\ge R_0^2$. Suppose
\begin{align*}
    \ep_\si  &=  O\pa{
    %\fc{\etv \echi^2}{B^{1/2}}
    %\wedge 
    \fc{\etv \echi^3}{T^{5/2}B}},
\end{align*}
where $B=C^2d\ln\pf T\de \ln \pf{R_0d}{\de\etv\echi}$, and we run~\eqref{e:ei1} starting from $\ppr$ for time 
$T=\ln \pf{16R_0^2}{\echi^2}$, %\Te\pa{\ln(\CLS d) \vee \CLS \ln\prc{\etv}}$
%\pa{\CLS \ln \pf{2K_\chi}{\ep_\chi^2}}$
$N=
 O\pf{B\pa{T+\ln \prc\de}}{\echi^2}$ steps with step sizes satisfying $h_k=O\pf{\echi^2}{B\max\{T-t_k, (T-t_k)^{-1}\}}$.
\end{enumerate}
Then the resulting distribution $q_{t_N}$ is such that $q_{t_N}$ is $\etv$-far in TV distance from a distribution $\ol q_{t_N}$, where $\ol q_{t_N}$ satisfies $\chi^2(\ol q_{t_N}||  p_{t_N} )\le\echi^2$.
 In particular, taking $\echi=\etv$, we have $\TV(q_T,\pdata)\le 2\etv$. 
\end{thm}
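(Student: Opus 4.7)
The strategy combines three ingredients in the spirit of~\cite{lee2022convergence}: (i) the $L^\iy$-score convergence guarantees in Corollary~\ref{c:liy-bdd} (general case) and Corollary~\ref{c:liy-smooth} (under Assumption~\ref{a:smoothness}); (ii) the $L^\iy$-to-$L^2$ reduction framework of Theorem~\ref{t:framework}, instantiated with bad sets on which the pointwise score error is large; and (iii) a preliminary structural perturbation of the data distribution, since $\pdata$ need not satisfy the KL-mixture inequality~\eqref{e:good-KL-ineq} used by those corollaries. The chief difficulty is (iii): one must produce a nearby $\wt P_0$ that does satisfy~\eqref{e:good-KL-ineq} without destroying the $L^2$ score accuracy that is only assumed against $\pdata$.

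Concretely, first truncate $\pdata$ to $B_{R_0}(0)$ using Assumption~\ref{a:tail}, then apply Lemma~\ref{l:KL-covering} with covering parameter $\ep_K$ to obtain $\wt P_0$ satisfying~\eqref{e:good-KL-ineq} with $\chi^2(\wt P_0\|\pdata)\le \ep_K$. Let $\wt p_t$ denote the forward density under $\wt P_0$ and $p_t=\wt p_{T-t}$ its reverse-time reparametrization. By Lemma~\ref{lem:l2_score_error-chi}, the score of $\wt p_t$ differs from that of the $\pdata$-induced density in $L^2(\wt p_t)$ by at most
\[
O\pa{\sqrt{\ep_K(d+\ln(1/\ep_K))}/\si_t},
\]
so combined with Assumption~\ref{a:score} via the triangle inequality, $s(\cdot,t)$ is $L^2(\wt p_t)$-accurate with a bound of the same shape as in Assumption~\ref{a:score}, provided $\ep_K$ is small.

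To promote this $L^2$-accuracy to the pointwise accuracy required by Corollary~\ref{c:liy-bdd}/\ref{c:liy-smooth}, introduce bad sets $B_k = \{x : \ve{s(x, T-t_k) - \nb \ln \wt p_{T-t_k}(x)} > \ep_{\iy,T-t_k}\}$ for thresholds $\ep_{\iy,t}$ to be chosen. Lemma~\ref{lem:truncate_target} and Chebyshev yield $\Pj(X_{t_k}\in B_k) \le \de_k$ of order $\ep_t^2/\ep_{\iy,t}^2$ plus a perturbation contribution. Let $\ol Z_n$ be the coupled process in Theorem~\ref{t:framework}, using $s$ off the bad sets and the true score on them; then $\ol Z_n$ sees an $L^\iy$-accurate score. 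Applying Corollary~\ref{c:liy-bdd} (resp.\ Corollary~\ref{c:liy-smooth}) to $\ol Z_n$, with initial-$\chi^2$ bound from Lemma~\ref{l:K-chi} for $T = \ln(16R_0^2/\echi^2)$, gives $\chi^2(\ol q_{t_N}\|p_{t_N}) \le \echi^2$; Theorem~\ref{t:framework} then delivers $\TV(q_{t_N},\ol q_{t_N}) \le \sum_k (D_k^2+1)^{1/2}\de_k^{1/2} \le \etv$.

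The hardest part is the parameter balance. The perturbation scale $\ep_K$ must be small enough that the induced $L^2$-score error and the change-of-measure losses in Lemma~\ref{lem:truncate_target} are absorbed into $\etv$ and $\echi$, but not so small that the $\ln(1/\ep_K)$ and $d\ln(R/\si_\de)$ factors entering $B_{T-t}$ through~\eqref{e:good-KL-ineq} inflate the step-size bound in Corollary~\ref{c:liy-bdd}/\ref{c:liy-smooth}; the thresholds $\ep_{\iy,t}$ must make the bad-set probabilities summably small while keeping the $D_k$ below $\echi$; and all three choices must respect the $\si_t^{-4}$ scaling in Assumption~\ref{a:score}. Tracking these polynomial dependencies through Corollary~\ref{c:liy-bdd}/\ref{c:liy-smooth}, and fixing $N$ from the step-size bound $h_k$ integrated against $\int_0^{T-\de}\max\{T-t,(T-t)^{-3}\}\,dt$ (respectively $(T-t)^{-1}$ in the smooth case), yields the stated requirements on $\ep_\si$ and $N$.
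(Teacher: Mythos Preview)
Your proposal is correct and follows essentially the same approach as the paper: perturb $\pdata$ via Lemma~\ref{l:KL-covering} to obtain $\wt P_0$ satisfying~\eqref{e:good-KL-ineq}, transfer the $L^2$ score accuracy to $\wt p_t$ via Lemma~\ref{lem:l2_score_error-chi}, define bad sets and a coupled process that sees $L^\iy$-accurate score, apply Corollary~\ref{c:liy-bdd}/\ref{c:liy-smooth} for the $\chi^2$ bound and Theorem~\ref{t:framework} for the TV bound, and then balance parameters. The paper's proof carries out exactly this plan, with the bulk of the effort in the parameter balance you flag as hardest: it splits into coarse ($T-t_k>1$) and fine ($T-t_k\le 1$) regimes and chooses $\ep_{\iy,t}$ via a H\"older-type optimization between the constraint $\int \ep_{\iy,T-t}^2\,dt=O(\echi^2)$ and the sum $\sum_k \ep_{T-t_k}/\ep_{\iy,T-t_k}=O(\etv)$.
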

Note that the condition on $R$ can be satisfied if $R(\ep) = o(R^{-1/19})$ (no effort has been made to optimize the exponent). 

\begin{proof}%[Proof of Theorem~\ref{t:p-precise}]
We invoke Lemma~\ref{l:KL-covering} for a $\ep_K$ to be chosen, to obtain a distribution $\wt P_0$ on $B_{R_0}(0)$, where $R_0\ge R(\ep_K/8)$. 
Let $B=R_0^4d\ln\pf{T}{\de}\ln\pf{R_0}{\de\ep_K}$ and $B=C^2d\ln \pf{T}{\de}\ln \pf{R_0}{\de\ep_K}$ in case 1 and case 2, respectively; our choice of $\ep_K = O\pf{\etv^2\de^6}{n^2R_0^6}$ will give the definition of $B$ in the theorem statement. In the following, we define $\wt p_t$ with $\wt P_0$, rather than $\pdata$, as the initial distribution. Note that since $\TV(\pdata, \wt P_0)\le \sqrt{\ep_K}=o(\etv)$ (and the same holds for their evolutions under~\eqref{eq:forward_sde}), it suffices to consider convergence to $\wt p_\de$.

We first define the bad sets where the error in the score estimate is large,
\begin{align}
B_t:&=\bc{\ve{\nb \ln \wt p_{t}(x)-s(x, t)}>\ep_{\iy,t}}
\end{align}
for some $\ep_{\iy,t}$ to be chosen. 

Given $t\ge 0$, let $t_- = t_k$ where $k$ is such that $t\in [t_k,t_{k+1})$. Given bad sets $B_t$, define the interpolated process on $[t_k,t_{k+1})$ by
\begin{align}
\label{e:interp-c}
    d\ol z_t &= 
    %-\ba{
    %    f(z_{t_-}, T-t) - g(T-t)^2 b(z_{kh}, T-kh)
    %}\,dt + g(T-t)\,dw_t,\\
    g(T-t)^2 \pa{\rc 2 z_t + b(z_-, T-t_-)}\,dt + g(T-t)\,dw_t\\
    \nonumber
    \text{where }
    b(z,t) &= \begin{cases}
    s(z,t),& z\nin B_t\\
    \nb \ln \wt p_t(z), &z\in B_t
    \end{cases}.
\end{align}
In other words, simulate the reverse SDE using the score estimate as long as the point is in the
%asymp good
good set at the previous discretization timepoint $t_k$, and otherwise use the actual gradient $\nb \ln p_t$. Let $\ol q_t$ denote the distribution of $\ol z_t$ when $\ol z_0\sim q_0$. Note that this process is defined only for purposes of analysis, as we do not have access to $\nb \ln p_t$. As before, we let denote $q_t$ the distribution of $z_t$ defined by~\eqref{e:cts-time}.

We can couple this process with the exponential integrator~\eqref{e:ei1} using $s$ so that as long as $x_{t_m}\nin B_{T-t_m}$, the processes agree, thus satisfying condition~\ref{i:couple} of Theorem~\ref{t:framework}.

% \hlnote{Will need to incorporate error bound from Lemma~\ref{lem:truncate_target}. Also be consistent with $t,T-t$.}
% Then by Chebyshev's inequality,
Then by Lemma~\ref{lem:truncate_target},
\begin{align*}
    \wt P_t^{(0)} (B_t) &=
    \ep_K + 
    \fc{4}{\ep_{\iy, t}^2} \pa{\ep_t^2 +O\pf{\ep_K\pa{d+\ln\prc{\ep_K}}}{\si_t^2} },
    %\pf{\ep_{T-t}}{\ep_{\iy,T-t}}^2=:\de_{T-t}.
\end{align*}
%when $\nb \ln \wt p_t^{(0)}$ is $L_t$-Lipschitz.
%Let $K_\chi = \chi^2(q_0||p_0)$.
Then by choice of $h_k$ and either Corollary~\ref{c:liy-bdd} or~\ref{c:liy-smooth}, %for appropriate choice of constants,
when $\int_0^{t_n}\ep_t^2\,dt=O(1)$, 
\begin{align}
\label{e:chi2-3terms}
\chi^2(\ol q_{t_k}||p_{t_k}) &= e^\ep \chi^2(q_0||p_0) + 
\ep + e^\ep \int_0^{t_n}\ep_{\iy,T-t}^2\,dt\\
&\le 2\chi^2(q_0||p_0) + O(1),
\nonumber 
\end{align}
where $\ep =\fc{\echi^2}{4}$. 
%\hlnote{Optional: work out with dependences on $L,L_s$.}
For $\chi^2(\ol q_{t_k}||p_{t_k})$ to be bounded by $\ep_\chi^2$, it suffices for the terms in~\eqref{e:chi2-3terms} to be bounded by  $\fc{\ep_\chi^2}2, \fc{\ep_\chi^2}4, \fc{\ep_\chi^2}4$; this is implied by
\begin{align}
\nonumber 
    T &%\ge
    =\ln \pf{16R^2}{\echi^2} %=:T_{\min}
    \text{ by Lemma~\ref{l:K-chi}}
    \\
    %\nonumber
%    \ep &\le \fc{\echi^2}4\\
    % h_k &= O\pf{\echi^2}{B\max\{T-t_k, (T-t_k)^{-\al}\}}\\ %O\pf{\ep_\chi^2}{\CLS(\CLS+d)(\smf\vee \sms)^2}\\
    \int_0^{t_n}\ep_{\iy,T-t}^2\,dt &= O(\echi^2).
    \label{e:error-int}
\end{align}
% where $B=R^4d\ln \pf{T}{\de}\ln\pf{R}{\de\ep_K}$  or $\ln \pf{T}{\de}C^2d\ln\pa{\fc{R}{\de\ep_K}}$ in case 1 and case 2, respectively, and $\al=3$, $1$ in case 1 and case 2, respectively.
%\hlnote{for $\echi$ small enough.}

%(We choose $h$ so that the condition in Theorem~\ref{t:p-iy-simple} is satisfied; note $\ep_\chi\le 1$.)
By Theorem~\ref{t:framework}, %\hlnote{+1 should be +2 here}
\begin{align}
\nonumber
\TV(q_{t_n},\ol q_{t_n}) 
&\le 
\sum_{k=0}^{n-1} (1+\chi^2(q_{t_k}||p_{t_k}))^{1/2}P(B_{t_k})^{1/2}\\
&\le 
\sumz k{n-1}\pa{ 2\chi^2(q_0||p_0)^{1/2} + O(1)} \de_t^{1/2}%\\
%&
%\le 
%\fc{\ep}{\ep_1}\pa{2n \chi^2(q_0||p_0) + O(n)}
%= O\pf{\ep n}{\ep_1}.
\\
&= O\pa{\sumz k{n-1}\fc{\ep_{t_k}}{\ep_{\iy,t_k}} + \sqrt{\ep_K} \pa{1+\fc{\sqrt{d+\ln(1/\ep_K)}}{\ep_{\iy, t_k} \si_{T-t_k}}}}.
%\\
% &\le 
% \fc{\ep}{\ep_1}\pa{\fc{64\CLS}hK_\chi + O(n)}.
\label{e:error-sum0}
\end{align}
For this to be bounded by $\etv$, it suffices for 
\begin{align}
\label{e:error-sum}
    \sumz k{n-1} \fc{\ep_t}{\ep_{\iy,t}} &= O(\etv) \\
\label{e:error-sum2}
    %\fc{n\sqrt{\ep_K(d+\ln(1/\ep_K))}}{\ep_{\iy,t}\si_\de}&= O(\etv)
    \ep_K &= O\pf{\min_k \ep_{t_k}^2\si_{T-t_k}^2}{d+\ln(1/\ep_K)}.
\end{align}
We bound~\eqref{e:error-sum2} crudely, as the dependence on $\ep_K$ will be logarithmic. Using %$L_\de = \fc{R^2}{\de^2}$ and 
$\ep_{t_k}^2 = \ep_{\si}^2/\si_{t_k}^4$,
%$\si_\de^2 = \Om(\de)$, 
it suffices that 
\begin{align}
\label{e:error-ep-K}
    \ep_K &= O\pf{\ep_\si^2 }{d+\ln(1/\ep_K)}.
\end{align}
We will return to this after deriving a condition on $\ep_\si$.
It remains to bound~\eqref{e:error-int}  and~\eqref{e:error-sum}. We break up the timepoints depending on whether $T-t>1$. Let 
\begin{align*}
    (t_0,t_1,\ldots, t_N) &= 
    (t_0, \ldots, t_{\ncrs-1},t_0',\ldots, t_{\nfn}')
\end{align*}
and $u_k=T-t_k'$, where $t_{\ncrs-1} \le T-1\le t_1'$. Let $h_k'=t_{k+1}'-t_k'$.
Note the ``fine'' timepoints will be closer together than the ``coarse'' timepoints. 
We break up the integral~\eqref{e:error-int} and the sum~\eqref{e:error-sum} into the parts involving the coarse and fine timepoints. For~\eqref{e:error-int}, it suffices to have
\begin{align*}
    \eqref{e:error-int}, \text{ coarse:}\quad 
    \int_0^{t_0'} \ep_{\iy,T-t}^2\,dt &\le T\max_{0\le k\le \ncrs} \ep_{\iy,T-t_k}^2 = O(\echi^2)
\end{align*}
so it suffices to take $\ep_{\iy,T-t_k}^2 \asymp \fc{\echi^2}{T}$.
Let $\al=3$ in case 1 and $\al=1$ in case 2.
For the fine part, recalling our choice of $h_k'$, it suffices to have (note we can redefine $\ep_t=\ep_{t_k}$ when $t\in [t_k,t_{k+1})$ without any harm)
\begin{align}
\nonumber
\eqref{e:error-int}, \text{ fine:}\quad 
    \int_{t_0'}^{t_{\nfn}'} \ep_{\iy,T-t}^2 \,dt 
    &= 
    \sumz k{\nfn-1} h_k' \ep_{\iy,T-t_k'}^2 = O(\echi^2)\\
    \nonumber 
    \Longleftarrow 
    \sumz k{\nfn-1} \fc{\echi^2 u_k^\al}B \ep_{\iy,u_k}^2& = O(\echi^2)\\
    \label{e:1fine}
    \Longleftarrow
    \sumz k{\nfn-1} \fc{u_k^\al \ep_{\iy,u_k}^2}{B} &= O(1).
\end{align}
For~\eqref{e:error-sum}, it suffices to have
\begin{align}
\nonumber 
\eqref{e:error-sum}, \text{ coarse:}\quad 
\sumz k{\ncrs-1} \fc{\ep_{T-t_k}}{\ep_{\iy,T-t_k}} 
&\asymp \ncrs \fc{\ep_\si }{\echi/\sqrt T} = O(\etv)\\
\Longleftarrow \ep_\si  &= O\pf{\etv \echi}{\ncrs \sqrt T}
\label{e:2coarse}
\end{align}
and 
\begin{align}
\eqref{e:error-sum}, \text{ fine:}\quad 
\sumz k{\nfn-1} \fc{\ep_{u_k}}{\ep_{\iy,u_k}} \asymp \sumz k{\nfn-1} \fc{\ep_\si }{u_k \ep_{\iy,u_k}} = O(\etv). 
\label{e:2fine}
\end{align}
Note that in light of the required step sizes, we can take $\ncrs \asymp \fc{T^2B}{\echi^2}$.
Considering the equality case of H\"older's inequality on $\eqref{e:1fine}^{1/3}\eqref{e:2fine}^{2/3}$ suggests that we take
\begin{align}
\label{e:ep-si}
    \ep_\si &\asymp \fc{\etv B^{1/2}}{\pa{\sumz k{\nfn-1} {u_k}^{\fc{\al-2}3}}^{3/2}}\\
\label{e:ep-iy-uk}
    \ep_{\iy,u_k} &\asymp \fc{B^{1/2}}{{u_k}^{\fc{\al+1}3}\pa{\sumz k{\nfn-1} {u_k}^{\fc{\al-2}3}}^{1/2}}
\end{align}
Note that the number of steps needed in the fine part is $O\pf{B}{\echi^2 \de^{2}}$ in the first case and $O\pf{B}{\echi^2}\ln\prc\de$ in the second case. We can check that~\eqref{e:ep-si} and~\eqref{e:ep-iy-uk} make~\eqref{e:1fine} and~\eqref{e:2fine} satisfied. 

Finally, we calculate the denominator for $\ep_\si $. In case 1, note that starting from $T-t_0'=O(1)$ and taking steps of size $h_k' \asymp \fc{\echi^2}{B(T-t_k')^3}$, it takes $\nfn = \Te\pf{B}{\echi^2\de^2}$ steps to reach $T-t=\de$. 
\begin{align*}
    u_k &= T-t_k' = 
    \pa{1+\Te\pf{k\echi^2}{B}}^{-\rc{2}}\\
    \sumz k{\nfn-1} u_k^{1/3} 
    &\asymp 
\sumz k{\nfn-1} \pa{1+ \Te\pf{k\echi^2}{B}}^{-\fc{1}{6}} \asymp \fc{B}{\echi^2} (\nfn)^{\fc 56}
\asymp  \pf{B}{\echi^2}^{11/6} \rc{\de^{5/3}}.
\end{align*}
Then we obtain
\begin{align*}
    \ep_\si  &\asymp \etv B^{1/2} \fc{\echi^{11/2}}{B^{11/4}} \de^{5/2} = \fc{\etv \de^{5/2}\echi^{11/2}}{B^{9/4}}.
\end{align*}
In case 1, our requirement is
\begin{align*}
    \ep_\si &\asymp O\pa{\fc{\etv \de^{5/2}\echi^{11/2}}{B^{9/4}}\wedge \fc{\etv \echi^3}{T^{5/2}B}},
\end{align*}
but note that the first bound is more stringent.
Now, returning to~\eqref{e:error-ep-K}, we see that it suffices to take $\ep_K =O\pa{\rc d \pf{\etv \de^{5/2}\echi^{11/2}}{R_0^9d^{9/4}}^{2+\be}}$ for any $\be>0$ (this will ``solve'' the $\log(1/\ep_K)$ appearing in $B$.)

In case 2,
we have instead $u_k = \exp\pa{-\Te\pa{\fc{\echi^2}B k}}$ so 
\begin{equation*}
    \ep_\si  \asymp \etv B^{1/2} \pf{\echi^2}B^{3/2} = \fc{\etv \echi^3}{B}. \qedhere
\end{equation*} 
\end{proof}

\begin{thm}[TV error for DDPM with $L^2$-accurate score estimate and smoothness]
\label{t:ddpm-l2-TV}
Let $0<\etv < \rc 2$. 
Suppose that Assumption~\ref{a:smooth0} and ~\ref{a:tail} for a sufficiently small value of $c$ that $R_0$ is such that $R\pf{c\etv^{15}}{R_0^{31}d^5L^{12}}\le R_0$, and $R_0^2\geq \max\bc{d, \E_{P_{\text{data}}}\ba{\ve{X}^2}}$, and one of the following cases holds.
\begin{enumerate}
    \item Let $\pdata, s(\cdot, t)$ be such that Assumption~\ref{a:score} holds. Suppose that
\[
\ep_\si  = O\pa{\fc{\etv^{11.5}}{B^{9/4}R_0^5 L^5}%\wedge \fc{\etv \echi^3}{T^{5/2}B}
},
\]
where $B=R_0^4d\ln \pf{TR_0^2L^2}{\etv^2}\ln\pf{R_0^3dL^2}{\etv^3\echi}$, 
and we 
run~\eqref{e:ei1} starting from $\ppr$ for time 
$T=\ln \pf{16R_0^2}{\etv^2}$, %\Te\pa{\ln(\CLS d) \vee \CLS \ln\prc{\etv}}$
%\pa{\CLS \ln \pf{2K_\chi}{\ep_\chi^2}}$
$N=
 O\pf{B\pa{T+\pf{R_0L}{\etv}^4}}{\etv^2}$ steps with step sizes satisfying $h_k=O\pf{\echi^2}{B\max\{T-t_k, (T-t_k)^{-3}\}}$.
%and step size %$h=\Te\pf{\ep_\chi^2}{\CLS(\CLS+d)(\smf\vee \sms)^2}$ 
\item Let $\pdata, s(\cdot, t)$ be such that Assumptions~\ref{a:score} %,~\ref{a:bd}, 
and~\ref{a:smoothness} hold, with $C\ge R_0^2$. Suppose
\begin{align*}
    \ep_\si  &=  O\pa{
    %\fc{\etv \echi^2}{B^{1/2}}
    %\wedge 
    \fc{\etv^4}{T^{5/2}B}},
\end{align*}
where $B=C^2d\ln\pf{TR_0^2L^2}{\etv^2} \ln \pf{R_0^3dL^2}{\etv^4}$, and we run~\eqref{e:ei1} starting from $\ppr$ for time 
$T=\ln \pf{16R_0^2}{\etv^2}$, %\Te\pa{\ln(\CLS d) \vee \CLS \ln\prc{\etv}}$
%\pa{\CLS \ln \pf{2K_\chi}{\ep_\chi^2}}$
$N=
 O\pf{B\pa{T+%2
 \ln \pf{R_0L}{\etv}}}{\etv^2}$ steps with step sizes satisfying $h_k=O\pf{\echi^2}{B\max\{T-t_k, (T-t_k)^{-1}\}}$.
\end{enumerate}
Then the resulting distribution $q_{t_N}$ is such that $q_{t_N}$ is $\etv$-far in TV distance from the data distribution $P_{\text{data}}$. 
\end{thm}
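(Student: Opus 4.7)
The plan is to derive Theorem~\ref{t:ddpm-l2-TV} by combining Theorem~\ref{t:ddpm-l2} with the TV-error bound between $\pdata$ and $\wt p_\delta$ provided by Lemma~\ref{l:bdd_spt}. The idea is: since Theorem~\ref{t:ddpm-l2} already gives us TV closeness of $q_{t_N}$ to a distribution which is $\echi$-close in $\chi^2$ to $p_{t_N}=\wt p_\delta$, we only need an additional bound relating $\wt p_\delta$ and $\wt p_0=\pdata$ in TV, which is exactly what Lemma~\ref{l:bdd_spt} provides under Assumption~\ref{a:smooth0}.

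Concretely, first choose $\delta$ small enough that $\TV(\wt p_\delta,\pdata)\le \etv/3$; by Lemma~\ref{l:bdd_spt}, under Assumption~\ref{a:smooth0} this is ensured by taking $\delta = \Theta(\etv^2/(R_0^2 L^2))$, using the fact that $R_0^2\ge \max\{d,\E_{\pdata}\ve{X}^2\}$ (and hence $R_0 \ge \E_{\pdata}\ve{X}$ by Jensen). Next apply Theorem~\ref{t:ddpm-l2} with $\echi=\etv/3$ and TV-slack $\etv/3$, which gives a distribution $\ol q_{t_N}$ with $\TV(q_{t_N},\ol q_{t_N})\le \etv/3$ and $\chi^2(\ol q_{t_N}\|p_{t_N})\le (\etv/3)^2$, hence (by Pinsker/Cauchy--Schwarz on $\sqrt{\chi^2}\ge 2\TV$) $\TV(\ol q_{t_N},\wt p_\delta)\le \etv/3$. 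Combining the three pieces by the triangle inequality yields $\TV(q_{t_N},\pdata)\le \etv$.

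The remaining work is bookkeeping: substitute the chosen $\delta=\Theta(\etv^2/(R_0^2 L^2))$ and $\echi=\Theta(\etv)$ into the conditions of Theorem~\ref{t:ddpm-l2} and verify that the stated bounds on $\ep_\sigma$, $T$, $N$, $h_k$, and the tail-growth condition on $R(\cdot)$ reduce to those in the statement. For case~1, $\delta^{5/2}\echi^{11/2}\asymp \etv^5/(R_0^5L^5)\cdot\etv^{11/2} = \etv^{21/2}/(R_0^5L^5)$, so the bound $\ep_\sigma=O(\etv\delta^{5/2}\echi^{11/2}/B^{9/4})$ becomes $\ep_\sigma=O(\etv^{23/2}/(R_0^5L^5 B^{9/4}))$, matching the claim up to the exponent $11.5=23/2$. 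The number of fine steps $O(B/(\echi^2\delta^2))$ becomes $O(B(R_0L/\etv)^4/\etv^2)$, matching the $N$ bound. The logarithmic factors in $B$ pick up the $\log(TR_0^2L^2/\etv^2)$ and $\log(R_0^3dL^2/\etv^3)$ factors after substituting $\delta$ and $\ep_K$ (the latter controlled as in~\eqref{e:error-ep-K}). For case~2, with $L_t=C/\sigma_t^2$, the same substitution into $\ep_\sigma=O(\etv\echi^3/(T^{5/2}B))$ yields $\ep_\sigma=O(\etv^4/(T^{5/2}B))$ and $N=O(B(T+\log(R_0L/\etv))/\etv^2)$.

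The only nontrivial adjustment is verifying the tail-growth condition: substituting $\delta=\Theta(\etv^2/(R_0^2L^2))$ and $\echi=\Theta(\etv)$ into the condition $R(c\etv^3\delta^6\echi^{12}/(R_0^{19}d^5))\le R_0$ of Theorem~\ref{t:ddpm-l2} gives $R(c'\etv^{27}/(R_0^{31}d^5L^{12}))\le R_0$, which is implied by the weaker condition $R(c\etv^{15}/(R_0^{31}d^5L^{12}))\le R_0$ stated in the theorem (since $R$ is non-increasing in its argument and $\etv<1$). No step is really an obstacle here; the whole argument is a direct reduction, with the only care needed being to track the polynomial dependencies and to ensure the $\log$-factors in $B$ remain as stated after absorbing $\delta$ and $\ep_K$ into the logs.
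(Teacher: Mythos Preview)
Your proposal is correct and follows exactly the paper's approach: invoke Theorem~\ref{t:ddpm-l2} with $\echi=\Theta(\etv)$, use Lemma~\ref{l:bdd_spt} under Assumption~\ref{a:smooth0} to bound $\TV(\wt p_\delta,\pdata)$ by choosing $\delta=\Theta(\etv^2/(R_0^2L^2))$, and finish by the triangle inequality and substitution of $\delta$ into the parameter bounds of Theorem~\ref{t:ddpm-l2}. The paper's own proof is a three-line sketch doing precisely this.

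One small slip in your bookkeeping: in the tail-growth check you argue that the derived condition $R(c'\etv^{27}/(R_0^{31}d^5L^{12}))\le R_0$ is \emph{implied by} the theorem's stated condition with $\etv^{15}$. This implication goes the wrong way. Since $\etv<1$ we have $\etv^{27}<\etv^{15}$, so the argument of $R$ is smaller in the derived condition; as $R(\cdot)$ is non-increasing, $R$ evaluated there is \emph{larger}, making the derived condition the \emph{stronger} one. Your substitution arithmetic (exponent $3+12+12=27$ on $\etv$, $19+12=31$ on $R_0$, $12$ on $L$) is correct, so the discrepancy with the exponent $15$ in the statement appears to be a looseness in the paper rather than an error in your reduction; the paper's proof does not revisit this condition explicitly either.
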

\begin{proof}
With the result of Theorem~\ref{t:ddpm-l2}, we see that $\TV(q_{t_N}, p_{t_N})\leq 2\etv$. Now by Lemma~\ref{l:bdd_spt}, if we further assume
\begin{align*}
    \delta = O\pf{\ep_{\TV}^2}{R_0^2L^2},
\end{align*}
then $\TV(p_{t_N}, P_{\text{data}}) \leq \ep_{\TV}$. We conclude the proof by triangle inequality and replacing the $\delta$-dependence with $O(\fc{\ep_{\TV}^2}{R_0^2L^2})$ in the previous theorem.
\end{proof}
\begin{proof}[Proof of Theorem~\ref{t:tv-tail}]
If $\pdata$ is subexponential with a fixed constant, note that Assumption~\ref{a:tail} holds with  $R(\ep)=O\pa{\ln\prc\ep}$ and hence $R_0$ is logarithmic in all parameters.
\end{proof}
\subsection{Wasserstein error guarantees}

\begin{proof}[Proof of Theorem~\ref{t:main-tv-w2}]
If $T-t_N=\de$, then $W_2(\wt p_0,\wt p_\de) \le \si_\de \le \sqrt \de$. Choosing $\de =\ew^2$, we see by Theorem~\ref{t:ddpm-l2} it suffices to take 
\begin{align*}
    \ep_\si &= O\pa{\fc{\etv^{13/2} (\ew^2)^{5/2}}{\pa{R^4d\ln \pf{T}{\de}\ln\pa{\fc{RN}{\de\ew}}}^{9/4}}}.
\end{align*}
Simplifying gives $\ep_\si = \wt o\pf{\etv^{6.5} \ew^5}{R^9d^{2.25}}$.
If Assumption~\ref{a:smoothness} also holds, then it suffices to take
\begin{align*}
    \ep_\si &= 
    O\pf{\etv^4}{T^{5/2}C^2d\ln\pf T\de \ln \pf{RN}{\de\ew}}.
\end{align*}
Simplifying gives 
$\ep_\si = \wt o\pf{\etv^4}{C^2 d}$.
\end{proof}

\begin{proof}[Proof of Theorem~\ref{t:main-w2}]
To obtain purely Wasserstein error guarantees, we include an extra step of replacing any sample $z_{t_N}\sim q_{t_N}$ falling outside $B_R(0)$ by $0$. 
Suppose $T-t_N=\de$. 
Let $\wh q_{t_N}$ be the resulting distribution. Then
\begin{align*}
    W_2(\wt p_0, \wh q_{t_N})
    &\le 
    W_2(\wt p_0, \wt p_{\de}) + W_2(\wt p_{\de}, \wh q_{t_N}) \\
    &\le % \si_{T-t_N} 
    \si_{\de} + 
    %O(\de^{1/2})+
    W_2(\wt p_{\de}, \wt q_{t_N}) \le \sqrt{\de} + 
    W_2(\wt p_{\de}, \wh q_{t_N}).
    %2R
    %\TV(\wt p_{\de}, \wt q_{t_N})
    %\ep_{\TV}
\end{align*}
We choose $\de=\fc{\ew^2}4$ so the first term is $\le \fc{\ew}2$. It suffices to bound the second term  $W_2(\wt p_{\de}, \wh q_{t_N})$ also by $\fc{\ew}2$. We bound it in terms of  $\TV(\wt p_{\de}, \wh q_{t_N})$ using the fact that $\wh q_{t_N}$ is supported on $B_R(0)$ and using a Gaussian tail calculation for $\wt p_\de$. Consider a coupling of $x_{t_N} = \wt x_{\de}\sim\wt p_\de$ and $\wh z_{t_N}\sim \wh q_{t_N}$ such that $x_{\de}\ne \wh z_{t_N}$ with probability $\etv$. Express $\wt x_\de = m_{\de} \wt x_0 + \si_\de \xi$ where $\wt x_0\sim  \wt p_0$. Now
\begin{align*}
    \E[\ve{\wt x_{\de} - \wh z_{t_N}}^2]
    &\le
    \sup_{P(A)\le \etv}
    2\pa{\E[\ve{m_\de \wt x_0 -z_{t_N} }^2\one_A]  + \si_\de^2\E[\ve{ \xi}^2\one_A]}\\
    &= 2\pa{4R^2\etv + \si_\de^2\etv  \cdot O\pa{d+\ln\prc{\etv}}},
\end{align*}
where the bound on the second term uses Lemma~\ref{l:gtail}.
Using $R^2\ge d$, we see that it suffices to choose $\etv = O\pf{\ew^2}{R^2}$ for appropriate choice of constants.
% Hence
% \begin{align*}
%     W_2(\wt p_\de, \wh q_{t_N})
%     &=
%     O(R\sqrt{\etv}).
%     %\le R\sqrt{14\etv}.
% \end{align*}
% It suffices to choose $\de = O\pf{\ew^2}{d+\ln\prc{\etv}}$
% and $\etv = O\pf{\ew^2}{R^2}$ for appropriate choice of constants. 
By Theorem~\ref{t:ddpm-l2}, it suffices to take 
\begin{align*}
    \ep_\si &= O\pa{\fc{(\ew^2/R^2)^{13/2} \pa{\ew^2}^{5/2}}{\pa{R^4d\ln \pf{T}{\de}\ln\pa{\fc{RN}{\de\ew}}}^{9/4}}}.
\end{align*}
Simplifying gives $\wt o\pf{\ew^{18}}{R^{22}d^{2.25}}$. 

In case 2, it suffices to take
\begin{align*}
    \ep_\si &= 
    O\pf{(\ew^2/R^2)^4}{T^{5/2}(C^2d\ln\pf T\de \ln \pf{RN}{\de\ew})}.
\end{align*}
Simplifying gives $\ep_\si= \wt o\pf{\ew^8}{C^2R^8 d}$.
%Choosing $T-t_N\le \fc{\ew^{-2}}4$ and $\ep_{\TV}\le \fc{\ew}{2R}$ then gives $W_2(p_0, \wt q_{T-t_N})\le \ew$.
%If $\wt p_0$
\end{proof}

\printbibliography

%%%%%%%%%%%%%%%%%%%%%%%%%%%%%%%%%%%%%%%%%%%%%%%%%%%%%%%%%%%%

%%%%%%%%%%%%%%%%%%%%%%%%%%%%%%%%%%%%%%%%%%%%%%%%%%%%%%%%%%%%

\appendix

%\section{Improved bound using a high-probability bound on the Hessian}
\section{High-probability bound on the Hessian}
\label{s:hess-whp}
%Note that our arguments rely only on good behavior of the backwards DDPM process on a subset of high probability. Hence, if we can obtain an improved bound for the Lipschitz constant of the score function on a set of high probability, then we can let the ``bad set'' $B_t$ include the event that this Lipschitz constant is large, and obtain a better bound for our main theorem.
In this section we obtain a high-probability bound on the Hessian of $\ln \wt p_t$, i.e., the Jacobian of the score function.

To see why we expect Hessian to usually be smaller than the worst-case bound given by Lemma~\ref{l:Hess-bd}, note that we can express~\eqref{e:grad-ln} and~\eqref{e:grad2-ln} as
\begin{align}
\label{e:Y-X} 
    \nb \ln(\mu*\ph_{\si^2}(y)) 
    &= -\rc{\si^2} \E[Y-X|Y=y]\\
\label{e:Y-X2} 
    \nb^2 \ln(\mu*\ph_{\si^2}(y)) 
    &= \rc{\si^4} \Cov[Y-X|Y=y] - \rc{\si^2} I_d
\end{align}
where $X\sim \mu$ and $Y=X+\si\xi$, $\xi\sim N(0,I_d)$. We expect that the random variable $Y-X$ is distributed as $N(0,\si^2I_d)$, which suggests
that the covariance~\eqref{e:Y-X2} may be bounded by $\rc{\si^2}$ rather than $\rc{\si}$ with high probability. Indeed, we can easily construct an example where the worst case of Lemma~\ref{l:Hess-bd} is attained---for example, $\mu=\rc2 (\de_{-v}+\de_v)$ for $\ve{v}_2=R$, at $x=0$---but this point has exponentially small probability density under $\mu*\ph_{\si^2}$.

The following lemma uses a $\ep$-net argument to bound the operator norm of the variance of a conditional distribution, with high probability.
\begin{lem}
\label{l:cov-whp}
Suppose $X$ is a $\R^d$-valued random variable over the probability space $(\Om, \mathcal G,P)$, and $\mathcal F\subeq \mathcal G$ is a $\si$-subalgebra. If $X$ is subgaussian, then
\begin{align*}
    \Pj\pa{
        \E\ba{\ve{XX^\top}|\cal F}
        \ge 2\ve{X}_{\psi_2}^2 \ln\pf{2\cdot 5^d}{\ep}
    }
    &\le \ep.
\end{align*}
\end{lem}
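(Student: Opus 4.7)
This is a standard $\varepsilon$-net concentration argument carried out conditionally on $\cal F$. The starting observation is that $XX^\top$ is rank-one PSD with $\ve{XX^\top}=\ve{X}_2^2=\sup_{v\in S^{d-1}}\an{X,v}^2$. Fix a $1/2$-net $\mathcal N$ of $S^{d-1}$ with $|\mathcal N|\le 5^d$ (Corollary~4.2.13 of Vershynin). The usual PSD net inequality $\ve A\le u^\top A u/(1-\delta)^2$ with $\delta=1/2$ gives $\ve{XX^\top}\le C_0\max_{v\in\mathcal N}\an{X,v}^2$ pointwise in $\omega$ for an absolute constant $C_0$ (the naive value is $4$). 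Taking conditional expectation reduces the claim to a high-probability upper bound on $\E[\max_{v\in\mathcal N}\an{X,v}^2\,|\,\cal F]$.

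For that bound, note that for each $v\in\mathcal N$, $\an{X,v}$ is a scalar subgaussian with $\ve{\an{X,v}}_{\psi_2}\le\ve{X}_{\psi_2}$, so $\E\exp(\an{X,v}^2/\ve{X}_{\psi_2}^2)\le 2$ by definition of the $\psi_2$ norm. Monotonicity of $\exp$ and bounding the max by a sum yield
\begin{equation*}
\E\exp\Bigl(\ve{X}_{\psi_2}^{-2}\max_{v\in\mathcal N}\an{X,v}^2\Bigr)\le\sum_{v\in\mathcal N}\E\exp\bigl(\ve{X}_{\psi_2}^{-2}\an{X,v}^2\bigr)\le 2\cdot 5^d.
\end{equation*}
Now apply conditional Jensen to the convex function $\exp$:
\begin{equation*}
\exp\Bigl(\ve{X}_{\psi_2}^{-2}\E\bigl[\max_{v\in\mathcal N}\an{X,v}^2\bigm|\cal F\bigr]\Bigr)\le\E\Bigl[\exp\bigl(\ve{X}_{\psi_2}^{-2}\max_{v\in\mathcal N}\an{X,v}^2\bigr)\Bigm|\cal F\Bigr].
\end{equation*}
Taking total expectation preserves the bound $2\cdot 5^d$ on the right, and Markov's inequality then gives, for every $t>0$,
\begin{equation*}
\Pj\Bigl(\E\bigl[\max_{v\in\mathcal N}\an{X,v}^2\bigm|\cal F\bigr]\ge t\,\ve{X}_{\psi_2}^2\Bigr)\le 2\cdot 5^d\,e^{-t}.
\end{equation*}
Setting $t=\ln(2\cdot 5^d/\ep)$ equates the right-hand side to $\ep$, and combining with the reduction above proves the lemma up to the constant $C_0$.

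The only bookkeeping point I expect to fuss with---and the main obstacle to recovering the stated constant exactly---is the value of $C_0$. The $1/2$-net with the $(1-\delta)^{-2}$ PSD estimate yields $C_0=4$, whereas the statement uses $C_0=2$. To match the stated $2$ one can invoke the sharper PSD bound $\ve A\le u^\top A u/(1-2\delta)$ with $\delta=1/4$ (giving $C_0=2$ and $|\mathcal N|\le 9^d$, with the slightly larger covering absorbed into the constant inside the logarithm), or simply treat the displayed $2$ as an absolute constant whose precise value is inessential for downstream applications. The structural argument---net reduction, scalar subgaussian moment, conditional Jensen, Markov---is unchanged.
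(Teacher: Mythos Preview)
Your argument is correct. The paper's proof takes a slightly different route: rather than applying the net reduction to $XX^\top$ pointwise and then bounding the conditional expectation of $\max_{v\in\cal N}\an{X,v}^2$, the paper applies the net reduction to the (symmetric) conditional second-moment matrix $A=\E[XX^\top\mid\cal F]$ itself. Concretely, for each fixed $v\in\cal N$ it bounds
\[
\Pj\!\big(\E[\an{X,v}^2\mid\cal F]\ge\lambda^2\big)
=\Pj\!\big(v^\top A v\ge\lambda^2\big)\le 2e^{-\lambda^2/\ve{X}_{\psi_2}^2}
\]
via conditional Jensen and Markov (exactly your ingredients, but applied one direction at a time), union bounds over $|\cal N|\le 5^d$, and then invokes $\ve{A}\le 2\max_{v\in\cal N}|v^\top Av|$ from Vershynin's Lemma~4.4.1 to pass from the per-direction bound to the operator norm. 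This ordering---net \emph{after} conditioning---yields the stated constant $2$ directly, resolving the bookkeeping issue you flagged. Your ordering---net \emph{before} conditioning---forces you to control a conditional expectation of a maximum and hence pays $C_0=4$ with a $1/2$-net; on the other hand, your version literally bounds $\E[\ve{XX^\top}\mid\cal F]=\E[\ve{X}^2\mid\cal F]$, whereas the paper's proof actually bounds the (smaller) quantity $\ve{\E[XX^\top\mid\cal F]}$, which is what the downstream application to the conditional covariance in Lemma~\ref{l:Hess-bd-whp} requires.
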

\begin{proof}
    By Jensen's inequality and Markov's inequality, for any $v\in \bS^{d-1}$,
    \begin{align*}
        \Pj\pa{\E[v^\top XX^\top v|\cal F] \ge \la^2 }
        &= 
        \Pj\pa{e^{\E[v^\top XX^\top v|\cal F]/c^2} \ge e^{\la^2/c^2} }\\
        &\le \Pj\pa{
            \E\ba{e^{\an{X,v}^2/c^2}|\cal F} \ge e^{\la^2/c^2}
        }\\
        &\le 
        \fc{\E\ba{\E[ e^{\an{X,v}^2/c^2}|\cal F]}}{e^{\la^2/c^2}}
        = \fc{\E\ba{ e^{\an{X,v}^2/c^2}}}{e^{\la^2/c^2}} \le 2e^{-\la^2/\ve{X}_{\psi_2}},
    \end{align*}
    where the last inequality follows from taking $c=\ve{X}_{\psi_2}$. Now take a $\rc2$-net $\cal N$ of $\bS^{d-1}$ of size $\le 5^d$ \cite[Cor. 4.2.13]{vershynin2018high}. By a union bound,
    \begin{align*}
        \Pj\pa{\exists v\in \cal N: \E[v^\top XX^\top v|\cal F]\ge \la^2} &\le 5^d \cdot 2 \cdot e^{-\la^2/\ve{X}_{\psi_2}^2}=\ep
    \end{align*}
    when we take $\la = \ve{X}_{\psi_2} \sqrt{\ln \pf{2\cdot 5^d}{\ep}}$.
    By \cite[Lemma 4.4.1]{vershynin2018high}, the operator norm can be bounded by the norm on an $\ep$-net,
    \begin{align*}
        \ve{A} &\le 
        2\sup_{v\in A} \ve{\an{A,v}}
        = 2\sup_{v\in A}|v^\top Av|.
    \end{align*}
    where the second inequality holds when $A$ is symmetric.
    The result follows from applying this to $\E[v^\top XX^\top v|\cal F]$.
\end{proof}
From this we obtain the desired high-probability bound.
\begin{lem}
\label{l:Hess-bd-whp}
There is a universal constant $C$ such that the following holds.
For any starting distribution $\wt P_0$, letting $\wt P_t$ be the law of the DDPM process~\eqref{eq:forward_sde} at time $t$,
    we have
    \begin{align*}
        \wt P_t\pa{\ve{\nb^2 \ln \wt p_t(x)} \le \fc{C(d+\ln \prc\ep)}{\si_t^2}}
        &\ge 1-\ep. 
    \end{align*}
\end{lem}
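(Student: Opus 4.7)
The plan is to combine the Hessian identity~\eqref{e:Y-X2} with the covariance concentration of Lemma~\ref{l:cov-whp}, exploiting the observation that the ``noise'' $Y-X$ appearing in~\eqref{e:Y-X2} is unconditionally distributed as $N(0,\si_t^2 I_d)$ regardless of the data distribution $\wt P_0$, and hence is subgaussian with $\ve{Y-X}_{\psi_2}=O(\si_t)$ (a dimension-free and $\wt P_0$-free constant).

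First I would write $\wt p_t = \mu * \ph_{\si_t^2}$ with $\mu = M_{m_t\sharp}\wt P_0$, and realize the joint law by $X\sim \mu$, $\xi\sim N(0,I_d)$ independent, and $Y=X+\si_t\xi \sim \wt p_t$, so that~\eqref{e:Y-X2} reads
\[
\nb^2 \ln \wt p_t(y) \;=\; \rc{\si_t^4}\Cov[Y-X \mid Y=y] \;-\; \rc{\si_t^2}I_d.
\]
Next I would apply Lemma~\ref{l:cov-whp} to the random vector $Y-X$ with $\mathcal F = \sigma(Y)$; this yields, with $Y$-probability at least $1-\ep$,
\[
\ve{\E\bigl[(Y-X)(Y-X)^\top \mid Y\bigr]} \;\le\; 2\ve{Y-X}_{\psi_2}^2 \ln\pf{2\cdot 5^d}{\ep} \;=\; O\bigl(\si_t^2(d+\ln\prc\ep)\bigr).
\]
Since $\Cov[Z\mid \mathcal F]\preceq \E[ZZ^\top\mid \mathcal F]$ as PSD matrices, the operator norm of $\Cov[Y-X\mid Y]$ satisfies the same bound. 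Substituting into the Hessian identity and absorbing the $\si_t^{-2}I_d$ term into the constant gives the claim.

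I do not anticipate a serious obstacle: once one identifies $Y-X=\si_t\xi$ as the relevant subgaussian quantity, this is essentially a plug-in of Lemma~\ref{l:cov-whp} that upgrades Lemma~\ref{l:Hess-bd}'s worst-case covariance bound (based on the support diameter $R$) to a high-probability dimension-dependent bound. The one conceptual subtlety worth flagging in the write-up is that Lemma~\ref{l:cov-whp} only requires $Y-X$ to be subgaussian \emph{marginally}, which holds with a universal constant times $\si_t$, even though conditioning on $Y$ entangles $X$ and $\xi$ in a way that depends delicately on $\wt P_0$.
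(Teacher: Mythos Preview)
Your proposal is correct and matches the paper's own proof essentially line for line: apply~\eqref{e:Y-X2} with $\mu=M_{m_t\sharp}\wt P_0$, observe that $Y-X\sim N(0,\si_t^2 I_d)$ is subgaussian with constant $O(\si_t)$ independent of $\wt P_0$, and invoke Lemma~\ref{l:cov-whp} with $\cal F=\sigma(Y)$. Your remark that $\Cov[Z\mid\cal F]\preceq \E[ZZ^\top\mid\cal F]$ and your comment on why marginal subgaussianity suffices are helpful clarifications that the paper leaves implicit.
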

Note that there is no dependence on the radius.
%\hlnote{But do we need any kind of moment assumption???}
\begin{proof}
    Apply~\eqref{e:Y-X2} with $\mu = M_{m_t\sharp}\wt P_0$ to obtain $\nb^2\ln \wt p_t$.
    Noting that $Y-X\sim N(0,\si^2I_d)$ is subgaussian with $\ve{Y-X}_{\psi_2}\le C_2\si$ for some universal constant $C_2$, the result follows from Lemma~\ref{l:cov-whp}.
\end{proof}

\end{document}